\theoremstyle{plain}
\newtheorem{theorem}{Theorem}[section]
\newtheorem{lemma}[theorem]{Lemma}
\newtheorem{corollary}[theorem]{Corollary}
\theoremstyle{definition}
\theoremstyle{remark}
\newcommand{\cmark}{\ding{51}}%
\newcommand{\ie}{{\em i.e., }}
\newcommand{\eg}{{\em e.g., }}
\renewcommand\vec{\mathbf}
\newcommand{\xbar}[1][t]{\bar{\vec{x}}_{#1}}
\newcommand{\xv}[1][t]{\vec{x}^v_{#1}}
\NewDocumentCommand{\xs}{ O{r_t} O{t} }{\vec{x}^{#1}_{#2}}
\NewDocumentCommand{\gxs}{ O{v_t} O{t} }{\vec{x}^{#1}_{#2}}
\NewDocumentCommand{\F}{ O{v_t} O{r_t} O{t} O{t+\hat{\tau}_t}}{
  F_{#1}(\xs[#2][#3], \xi_{#4})
}
\NewDocumentCommand{\Fz}{ O{v_z} O{r_z} O{z+\hat{\tau}_z}}{
  F_{#1}(\xs[#2][z], \xi_{#3})
}
\NewDocumentCommand{\f}{ O{v_t} O{\xs[r_t][t]} }{
  f_{#1}(#2)
}
\NewDocumentCommand{\ff}{ O{1} O{v_t} O{\xs[r_t][t]} }{
  f^{#1}_{#2}(#3)
}
\NewDocumentCommand{\fz}{ O{v_z} O{\xs[r_z][z]} }{
  f_{#1}(#2)
}
\NewDocumentCommand{\gF}{ O{v_t} O{v_t} O{t} O{t+\hat{\tau}_t}}{
  F_{#1}(\xs[#2][#3], \xi_{#4})
}
\NewDocumentCommand{\gFz}{ O{v_z} O{v_z} O{z+\hat{\tau}_z}}{
  F_{#1}(\xs[#2][z], \xi_{#3})
}
\NewDocumentCommand{\GF}{ O{t} O{t+\hat{\tau}_t}}{
  F(\vec{X}_{#1}, \xi_{#2})
}
\NewDocumentCommand{\GFz}{O{z+\hat{\tau}_z}}{
  F(\vec{X}_z, \xi_{#1})
}
\NewDocumentCommand{\gf}{ O{v_t} O{\xs[v_t][t]} }{
  f_{#1}(#2)
}
\NewDocumentCommand{\Gf}{ O{t} }{
  f(\vec{X}_#1)
}
\NewDocumentCommand{\gfz}{ O{v_z} O{\xs[v_z][z]} }{
  f_{#1}(#2)
}
\NewDocumentCommand{\lrt}{ O{r_t} O{t} }{l^{#1}_{#2}}
\NewDocumentCommand{\drt}{ O{r_t} O{t} }{d^{#1}_{#2}}
\newcommand{\xtild}[1][t]{\tilde{\vec{x}}_{#1}}
\DeclareMathOperator{\EX}{\mathbb{E}}
\newcommand{\norm}[1]{\| #1 \|^2}
\newcommand{\fnorm}[1]{\| #1 \|_F^2}
\newcommand{\normm}[1]{\| #1 \|}
\newcommand{\inpr}[2]{\langle #1 , #2 \rangle}
\newcommand{\xz}{\vec{x}_0}
\newcommand{\data}[1]{\mathcal{D}_#1}
\newcommand{\ours}{Multi-Walk\xspace}
\newcommand{\master}{Node $0$\xspace}
\newcommand{\gos}{Gossip\xspace}
\newcommand{\agos}{Asynchronous Gossip\xspace}
\newcommand{\wrt}{w.r.t.\xspace}
\begin{document}

\title[A Tale of Two Learning Algorithms]{A Tale of Two Learning Algorithms: Multiple Stream Random Walk and Asynchronous Gossip}

\author{Peyman Gholami}
\email{pghola2@uic.edu}
\author{Hulya Seferoglu}
\email{hulya@uic.edu}
\affiliation{%
  \institution{University of Illinois at Chicago}
  \city{Chicago}
  \state{Illinois}
  \country{USA}
}









\begin{abstract}
Although gossip and random walk-based learning algorithms are widely known for decentralized learning, there has been limited theoretical and experimental analysis to understand their relative performance for different graph topologies and data heterogeneity. We first design and analyze an asynchronous random walk-based learning algorithm with multiple streams (walks), which we name ``Multi-Walk''. We provide a convergence analysis for Multi-Walk w.r.t number of iterations (computation), wall-clock time, and communication cost. We also present a convergence analysis for ``\agos'', addressing the notable gap in the existing literature by explicitly accounting for its computational and communication overhead.
Our results show that Multi-Walk has better convergence in terms of iterations as compared to \agos in graphs with large diameters (e.g., cycles), while its relative performance, as compared to \agos, depends on the number of walks and the data heterogeneity in graphs with small diameters (e.g., complete graphs).  In wall-clock time analysis, we observe a linear speed-up with the number of walks and nodes in Multi-Walk and \agos, respectively. Finally, we show that Multi-Walk outperforms \agos{} in communication overhead, except in small-diameter topologies with extreme data heterogeneity. 
These results highlight the effectiveness of each algorithm across varying graph topologies and levels of data heterogeneity.
Our \href{https://anonymous.4open.science/r/Asynchronous_Decentralized_Learning/README.md}{codes} are available for reproducibility.

\end{abstract}
\keywords{Decentralized Machine Learning, Random Walk-Based Learning, Gossip Learning}


\maketitle

\section{Introduction}
Decentralized learning has gained significant attention as a robust alternative to traditional federated learning 
approaches, addressing critical limitations such as communication bottlenecks and single points of failure \cite{Tsits1984DCN, Nedi2009DistributedSM, mcmahan2023communicationefficientlearningdeepnetworks}. Among decentralized methods, two prominent approaches have emerged: gossip and random walk-based algorithms. While both paradigms have been extensively studied \cite{Boyd2006RandomizedGA, Lian2017CanDA, pmlr-v97-koloskova19a, incremental, Ayache2020PrivateWR, Sun2018OnMC, needell2015stochasticgradientdescentweighted}, 
a gap remains in understanding their relative performance and trade-offs across different graph topologies and data heterogeneity. 
Specifically, a comprehensive analysis comparing their convergence rates, communication cost, and computational overhead is still lacking, which constitutes the primary focus of this work.


Gossip algorithms advocate that nodes in a graph iteratively update their models with Stochastic Gradient Descent (SGD) \cite{Robbins1951ASA, bottou2018optimizationmethodslargescalemachine} and exchange the updated models with their neighbors, leading to global consensus over time. 
Gossip can employ synchronous communication \cite{Lian2017CanDA,unified-koloskova20a}, where nodes must wait for all nodes to update their model in each round. However, in the presence of straggler nodes or nodes with varying computation speeds \cite{kairouz2021advancesopenproblemsfederated}, synchronous gossip results in significant idle times for fast nodes and creates bottlenecks \cite{chen2017revisitingdistributedsynchronoussgd}.
Asynchronous gossip algorithms \cite{async1,Tsitsiklis1984DistributedAD,Recht2011HogwildAL} have been developed to leverage resources more effectively, allowing nodes to compute gradients using a stale model and communicate in a asynchronous manner, thereby eliminating the need to wait for all nodes \cite{lian2018asynchronousdecentralizedparallelstochastic, accelerate, nadiradze2022asynchronousdecentralizedsgdquantized, bornstein2022swiftrapiddecentralizedfederated, unifiedasync}.
In both synchronous and asynchronous cases, gossip incurs high communication costs due to frequent message exchange among nodes.

The random walk-based learning algorithms suggest that one node at a time updates a model with its local data. The node then randomly selects a neighbor and sends the updated model to it. This neighbor becomes the next activated node and updates the model using its own local data. This process repeats until convergence. Random walk-based algorithms \cite{Ayache2020PrivateWR,Sun2018OnMC, needell2015stochasticgradientdescentweighted} are typically single stream, i.e., only one node updates the model at any given time, which leads to slow convergence. 
While utilizing multiple streams can improve convergence, the coordination and interaction among these streams remain largely unexplored in the context of random walk-based learning—a gap this paper aims to address.




To investigate the relative performance of gossip-based and random walk-based learning under varying graph topologies and data heterogeneity, we first design and analyze a multi-stream random walk algorithm, which we call asynchronous ``Multi-Walk''. We then conduct a comprehensive comparison of Multi-Walk and \agos in terms of iteration count (computation), wall-clock time, and communication cost. Our main contributions are as follows:

\textbf{Design of Multi-Walk algorithm.} 
We design, for the first time in the literature, a random walk-based learning algorithm with multiple asynchronous streams, Multi-Walk.
The core idea behind Multi-Walk is to improve the convergence rate of random walk-based methods by initiating multiple random walks (streams) simultaneously across the graph. This strategy increases the number of concurrent computations, enabling the algorithm to improve its convergence rate. 
Multi-Walk allows for a trade-off between convergence speed and resource utilization by adjusting the number of walks. There is no need for special coordination among the walks, as each walk operates independently on the graph. Furthermore, we demonstrate that Multi-Walk achieves a linear speedup with the number of walks. 
To the best of our knowledge, Multi-Walk is the first asynchronous random walk-based learning algorithm to leverage multiple parallel streams.

\textbf{Comprehensive analysis of Multi-Walk and \agos.} 
We provide an in-depth examination of both Multi-Walk and \agos algorithms. Specifically, we analyze their convergence properties \wrt iterations (computation), wall-clock time, and communication overhead. This detailed comparison addresses a significant gap in the literature, offering insights into the performance trade-offs of these methods. 
We analyze both algorithms under the assumption of non-convex, smooth, and heterogeneous loss functions, without any upper bounds on computation or communication delays.

\textbf{Theoretical insights.} Our analysis demonstrates that Multi-Walk exhibits superior performance on graphs with larger diameters, while \agos is likely a better choice for small-diameter graphs in terms of iteration complexity. 
Specifically, Multi-Walk outperforms \agos in both iteration complexity and communication overhead on graph topologies such as cycles.
We showed that in iid setting, on graphs where 
\(
p = \mathcal{O}\left( \frac{1}{V} \right),
\)
with \( V \) representing the number of nodes in the network graph, Multi-Walk shows superior performance compared to \agos. Here, \( p \) refers to the spectral gap of \( \vec{P}^\top \vec{P} \), where \( \vec{P} \) is the mixing matrix of \agos. Intuitively, \( p^{-1/2} \) correlates with the graph's diameter. 
When evaluating convergence in terms of clock time, \agos benefits from a linear speed-up with the number of nodes. Multi-Walk outperforms \agos when considering convergence in terms of communication overhead except in small-diameter graphs with extreme data heterogeneity (non-iid). This highlights the effectiveness of each algorithm in different scenarios.

\textbf{Empirical validation.} 
We conduct experiments to validate our theoretical findings. The results confirm that Multi-Walk converges faster \wrt iterations for graphs with larger diameters, such as cycles. However, this advantage does not hold for topologies with smaller diameters, such as complete graphs.
We also examine the impact of non-iid data in an Erdős–Rényi topology, observing behavior consistent with our theoretical analysis.
To highlight the benefits of Multi-Walk over \agos in communication-constrained settings, we conducted experiments measuring convergence rates \wrt total transmitted bits during the fine-tuning of OPT-$125$M \citep{OPT} as a large language model. Overall, the experiments provide valuable insights into the performance trade-offs between gossip and random walk-based decentralized learning algorithms.
\section{Related Work}\label{related}

Decentralized optimization algorithms have been extensively explored in the literature, where nodes in a graph collaborate with their neighbor nodes to solve optimization problems \cite{Tsits1984DCN, Nedi2009DistributedSM, Duchi_2012, yuan2015convergencedecentralizedgradientdescent, Digest}. These algorithms mostly rely on mixing information among nodes, leading to a considerable communication overhead.
Decentralized algorithms based on \gos involve a mixing step where nodes compute their new models by mixing their own and neighbors' models \cite{Xiao2003FastLI,Lian2017CanDA,unified-koloskova20a}. Model updates propagate gradually over the graph due to iterative gossip averaging. However, this is costly in terms of communication as it requires $\mathcal{O} (|\mathcal{E}|)$ data exchange per model update for a graph with an edge set of $\mathcal{E}$, where $|\cdot|$ is the size of a set.

The study of asynchronous optimization has its roots in earlier works such as those by \citet{async1}, with one of the first convergence results for Asynchronous SGD provided by \citet{Tsits1984DCN}.
Many works have focused on asynchronous algorithms in federated learning settings \cite{Agarwal2011DistributedDS, Lian2015AsynchronousPS, Zheng2016AsynchronousSG, feyzmahdavian2023asynchronousiterationsoptimizationnew,Mishchenko2022AsynchronousSB,koloskova2022sharperconvergenceguaranteesasynchronous}.
Going to decentralized setting along with asynchrony, \citet{Assran_2021} addresses asymmetric asynchronous communication (push-sum), but to guarantee convergence, their approach requires all nodes to participate in computations synchronously at each iteration.
\citet{nadiradze2022asynchronousdecentralizedsgdquantized} explores quantized gossip communication; however, their work does not account for delays in communication or computation. A wait-free decentralized algorithm that allows nodes to have different computation speeds is proposed in \cite{bornstein2022swiftrapiddecentralizedfederated}, but it does not consider any communication delays. A framework for communication acceleration on time-varying topologies with local stochastic gradient steps is considered in \citet{accelerate}, but it does not consider computation or communication delays.
%
%
\citet{lian2018asynchronousdecentralizedparallelstochastic} introduces the Asynchronous Decentralized Stochastic Gradient Descent algorithm (AD-PSGD), one of the most prominent asynchronous decentralized learning methods. In this paper, we consider the same algorithm as \agos and further analyze it to understand its relative performance as compared to Multi-Walk.
In particular, \citet{lian2018asynchronousdecentralizedparallelstochastic} derive a convergence rate under the assumption of an upper bound on computation delays, and their result is valid only when the number of iterations exceeds a certain threshold. Our analysis in this paper relaxes these assumptions, leading to a more comprehensive convergence  proof. 
%
%
\citet{unifiedasync} introduces the Asynchronous SGD on Graph (AGRAF SGD) algorithm, which operates with a continuous \texttt{while true} loop for communication among nodes without any assumptions about the frequency and amount of communication. This design makes it theoretically infeasible to quantify the communication overhead.

On the other end of the spectrum of decentralized learning algorithms are random walk-based approaches \cite{Ayache2020PrivateWR}. When there is only a single walk in the graph, the problem closely resembles to data sampling for stochastic gradient descent, \eg \citet{Sun2018OnMC, needell2015stochasticgradientdescentweighted}, and the distinction between synchronous and asynchronous operations becomes irrelevant.
\citet{hendrikx23a} analyzes random walk-based decentralized learning under the assumption of strong duality, which holds only in convex settings. 
Their analysis also relies on the computation of full gradients, which is a strong and often impractical assumption in many real-world applications.
Moreover, their algorithm needs synchronization and centralized coordination across the network. As compared to this line of work, we design, for the first time in the literature, a random walk-based
learning algorithm with multiple streams, Multi-Walk, where multiple walks operate on a graph in an asynchronous manner.

\section{Setup and Algorithm Design}
We model the underlying network topology with a connected graph $G=(\mathcal{V},\mathcal{E})$, where $\mathcal{V}$ is the set of vertices (nodes) and $\mathcal{E}$ is the set edges.
The vertex set contains $V$ nodes, \ie $ |\mathcal{V}|= V$. 
If node $i$ is connected to node $j$ through a communication link, then $\{i, j\}$ is in the edge set, \ie $\{i, j\} \in \mathcal{E}$.
The set of the nodes that node $i$ is connected to and can transmit data is called the neighbors of node $i$, and the neighbor set of node $i$ is denoted by $\mathcal{N}_i$.

Assume that the nodes in the network jointly minimize a $d$-dimensional function  $f: \mathbb{R}^d \rightarrow \mathbb{R}$.
The goal is to solve optimization problems where the elements of the objective function (i.e., the data used in machine learning tasks) are distributed across different nodes,
\begin{align}
    \min_{\vec{x} \in 	\mathbb{R}^d}\left[f(\vec{x}) := \frac{1}{V} \sum_{v \in \mathcal{V}} \left[f_v(\vec{x}) = \EX_{\xi \sim  \mathcal{D}_v} \F[v][][][] \right]\right]\label{e1}.
\end{align}
$\F[v][][][]: \mathbb{R}^d \rightarrow \mathbb{R}$ is the loss function of $\vec{x}$ associated with data sample $\xi$ at node $v$.
The loss function on local dataset $\data{v}$ at node $v$ is $f_v(\vec{x})$.
We provide a table of notations in Appendix \ref{appendixa}.

\subsection{Asynchronous Multi-Walk Algorithm}

This section presents our novel Multi-Walk  algorithm, which considers the standard asynchronous SGD for model updates. To achieve consensus, communication is performed using multiple walks. Multi-Walk algorithm is summarized in Algorithm \ref{alg:MW}, and detailed in the following. 

\begin{algorithm}[t!]
\caption{Asynchronous Multi-Walk with $R$ walks}\label{alg:MW} 
\begin{algorithmic}[1]
\State Start walk $r$ at node $r-1$, which sets $\xs[r][0] = \xz$, where $r \in \{1,\dots, R\}$.
\State \master initializes $\{u^{r}\}_{r \in \{1,\dots, R\}}$ with $\xz$. 
\State Set $l=1$, which is the last walk that visited \master. 
\For{$t=0$ to $T-1$}
\If {Node $v_t$ finishes the calculation of $\nabla \F[v_t][r_t][t-\tau_t][t]$ at point $\xs[r_t][t-\tau_t]$, which was transmitted to node $v_t$ by one of its neighbors
via walk $r_t$}
iteration $t$ is started and node $v_t$ executes lines 6-12. 
%
\State $\xs[r_t][t+1] = \xs[r_t][t-\tau_t] - \eta_t \nabla \F[v_t][r_t][t-\tau_t][t]$
\If{$v_t = 0$}
    \State $\xs[r_t][t+1] = u^{l} + \frac{1}{R}(\xs[r_t][t+1] - u^{r_t})$. 
    \State $u^{r_t} = \xs[r_t][t+1]$. 
    \State $l = r_t$. 
\EndIf
\State Choose the next node based on matrix $\vec{P}$.
\State Send $\xs[r_t][t+1]$ to the next node via walk $r_t$. 
\EndIf
\EndFor
\end{algorithmic}
\end{algorithm}

First, we assume that there are $R$ walks over the graph, where $R \leq V$.
Without loss of generality, we initialize walk $r$ at node $r-1$ by setting $\xs[r][0] = \xz$, where $r \in \{1,\dots, R\}$. $\xz$ is the global initial model. These nodes start computing the stochastic gradient at $\xz$ using their local data. 
In order to mix the information among walks, we consider a dedicated node that we assume to be \master without loss of generality.\footnote{We note that \master may become unavailable or fail due to underlying network conditions. This issue is addressed in Section \ref{sec:Node0Fail}.} 
We also define $\{u^{r}\}_{r \in \{1,\dots, R\}}$, where $u^{r}$ is a copy of walk~$r$'s model at the most recent instance when that walk was at \master. 
At \master, we initialize $\{u^{r}\}_{r \in \{1,\dots, R\}}$ with $\xz$ that will be used in the mixing. Assume that $l$ is the last walk that visited node \master, and $l$ is initialized with $1$, \ie $l=1$. 
Throughout the algorithm, each node receiving a model via a walk computes its gradient at its own pace, using its local data and the received model.
On line 5, once a node (denoted as $v_t$) completes computing the gradient using the model received via walk $r_t$, iteration $t$ is started.
This model was last updated at iteration $t - \tau_t$ by a neighbor of $v_t$, or corresponds to the initial model $\xz$.
We note that only one gradient computation completion event happens in each iteration. On line 6, node $v_t$ incorporates the computed gradient to update the model using the step size \(\eta_t\).  Note that communicating the model of walk $r_t$ to node $v_t$ and computing the gradient takes \(\tau_t\) iterations. 
Now, if \(v_t\) is \master, we need to mix the current walk, \(r_t\), with other walks.
This is done in lines 8--10. On line 8, we incorporate the newly introduced updates of walk \(r_t\), i.e., \((\xs[r_t][t+1] - u^{r_t})\), which have not been mixed before, into the latest model (\(u^{l}\)) with a weight of \(\frac{1}{R}\).
We update the last applied model of walk \(r_t\) (\(u^{r_t}\)) and the latest walk (\(l\)) on lines 9 and 10.
Finally, node $v_t$ chooses the next node based on the transition matrix \(\vec{P}\) and sends the model.
We note that $\vec{P}$ is the transition matrix of a Markov chain, representing each walk, where $p_{ij}$ in row $i$ and column $j$ of $\vec{P}$ denotes the probability of choosing the next node as $j$ given that the current node is $i$.
Figure \ref{fig:mw} depicts the operation of the Multi-Walk algorithm on a $3$-node network employing two parallel walks ($R=2$). Furthermore, the diagram visualizes the sequence of iterations over real time.

\begin{figure*}[bt]
\centering
\scalebox{0.83}{
\begin{tikzpicture}[scale=1.2,>=stealth]


  \shade[left color=gray!10, right color=gray!10] (0,0) rectangle (13,3.25);

    \shade[left color=gray!20, right color=gray!20] (0,.5) rectangle (13,1);

    \shade[left color=gray!20, right color=gray!20] (0,1.25) rectangle (13,1.75);

    \shade[left color=gray!20, right color=gray!20] (0,2) rectangle (13,2.5);


  \draw[ultra thick,->] (0,0) node[anchor=east]{\textbf{Time}} -- (13,0);


  \node[anchor=east] at (0,2.25) {\textbf{Node 0}};

  \node[anchor=east] at (0,1.5) {\textbf{Node 1}};

  \node[anchor=east] at (0,.75) {\textbf{Node 2}};


  \foreach \x/\t in {3/0, 4/1, 6/2, 8.5/3, 11/4, 12/5} {

      \draw[thick] (\x,-0.2) -- (\x,0.2);

      \node[below] at (\x,-0.2) {\small \textbf{$t=\t$}};

  }



  \draw[fill=NavyBlue!80,rounded corners=3pt] (0,2.25) rectangle (4,2.5);

  \draw[fill=NavyBlue!30,rounded corners=3pt] (4,2.25) rectangle (6.5,2.5);

  \draw[fill=teal!80,rounded corners=3pt] (8,2) rectangle (11,2.25);

  \draw[fill=teal!30,rounded corners=3pt] (11,2) rectangle (13,2.25);

  \draw[fill=teal!30] (12.5,2) rectangle (13,2.25);

  \draw[teal!30, line width=2pt] (12.5,2.005) -- (12.5,2.244);

  \draw[teal!30, line width=2pt] (12.98,2.005) -- (12.98,2.244);


  \draw[fill=NavyBlue!80,rounded corners=3pt] (6.5,1.5) rectangle (8.5,1.75);

  \draw[fill=NavyBlue!30,rounded corners=3pt] (8.5,1.5) rectangle (9.5,1.75);

  \draw[fill=teal!80,rounded corners=3pt] (5,1.25) rectangle (6,1.5);

  \draw[fill=teal!30,rounded corners=3pt] (6,1.25) rectangle (8,1.5);


  \draw[fill=NavyBlue!80,rounded corners=3pt] (9.5,0.75) rectangle (12,1);

  \draw[fill=NavyBlue!30,rounded corners=3pt] (12,0.75) rectangle (12.75,1);

  \draw[fill=NavyBlue!30] (12.5,0.75) rectangle (13,1);

  \draw[NavyBlue!30, line width=2pt] (12.5,.755) -- (12.5,.994);

  \draw[NavyBlue!30, line width=2pt] (12.98,.755) -- (12.98,.994);

  \draw[fill=teal!80,rounded corners=3pt] (0,0.5) rectangle (3,0.75);

  \draw[fill=teal!30,rounded corners=3pt] (3,0.5) rectangle (5,0.75);


  \foreach \x in {3,4,5,6,6.5,8,8.5,9.5,11,12} {

      \draw[dashed,gray!70] (\x,0) -- (\x,3.25);

  }


\node[draw,fill=white,rounded corners=3pt,anchor=north west,inner sep=2pt] at (0,3.25) {

    \begin{tabular}{@{}c@{\hskip 2pt}c@{\hskip 7pt}c@{\hskip 2pt}c@{\hskip 7pt}c@{\hskip 2pt}c@{\hskip 7pt}c@{\hskip 2pt}c@{}}

    \tikz \draw[fill=NavyBlue!80,rounded corners=2pt] (0,0) rectangle (0.5,0.25); & \small Computation (Walk 1) & 

    \tikz \draw[fill=NavyBlue!30,rounded corners=2pt] (0,0) rectangle (0.5,0.25); & \small Communication (Walk 1) & 

    \tikz \draw[fill=teal!80,rounded corners=2pt] (0,0) rectangle (0.5,0.25); & \small Computation (Walk 2) & 

    \tikz \draw[fill=teal!30,rounded corners=2pt] (0,0) rectangle (0.5,0.25); & \small Communication (Walk 2) \\

    \end{tabular}

};

\end{tikzpicture}

}
\caption{
Example of Multi-Walk in a $3$-node network with $2$ walks ($R=2$), where $t$ represents the iteration number.}
\label{fig:mw}
\end{figure*}

\subsection{\agos Algorithm}

\begin{algorithm}[t!]
\caption{Asynchronous \gos (AD-PSGD)}\label{alg:AsyncGoss} 
\begin{algorithmic}[1]
\State Initialize local models $\xv = \xz$ in all nodes. All nodes start computing the stochastic gradient.
\For{$t=0$ to $T-1$}
\State Node $v_t$ is randomly sampled from all nodes. 
\If {Node $v_t$ finishes computing the gradient at point $\xs[v_t][t-\tau_t]$, \ie $\nabla\F[v_t][v_t][t-\tau_t][t]$} iteration $t$ is triggered. Node $v_t$ executes lines 5-7.
\State $\xs[v_t][t+\frac{1}{2}] = \xs[v_t][t] - \eta_t \nabla \F[v_t][v_t][t-\tau_t][t]$.
\State $\xs[v][t+1] = \sum_{i \in \mathcal{N}_v} p_{vi} \xs[i][t+\frac{1}{2}]$ (gossip averaging for all $v \in \mathcal{V}$ based on mixing matrix $\vec{P}$)
\State Start computing gradient at point $\xs[v_t][t+1]$.
\EndIf
\EndFor
\end{algorithmic}
\end{algorithm}

This section presents the \agos algorithm based on \citet{lian2018asynchronousdecentralizedparallelstochastic}.\footnote{We note that we include the description of \agos in this section for completeness as we will provide its comprehensive convergence analysis in the next section. We also note that \agos is named as Asynchronous Decentralized Stochastic Gradient Descent (AD-PSGD) in \citet{lian2018asynchronousdecentralizedparallelstochastic}. We will use \agos and AD-PSGD interchangeably in the rest of the paper.} During the course of the algorithm, all nodes are engaged in gradient computations. At iteration \(t\), node $v_t$ is selected randomly among all the nodes. When node $v_t$  finishes computing the gradient at point $\xs[v_t][t-\tau_t]$, \ie $\nabla\F[v_t][v_t][t-\tau_t][t]$, iteration $t$ is triggered (line 4). 
The gradient is computed with a delay \(\tau_t\) and subsequently applied to the current model of node \(v_t\), i.e., \(\xs[v_t][t]\), using learning rate \(\eta_t\) (line 5). At the end of each iteration, a gossip averaging step is performed based on mixing matrix \(\vec{P}\) (line 6), where $p_{ij}$, which is the element of \(\vec{P}\), is the weight of node $j$'s model in the weighted averaging used to find node $i$'s new model. After gossip averaging is finished, node $v_t$ starts computing gradient at point $\xs[v_t][t+1]$ (line 7).

\section{Convergence Analysis}
We use the following standard assumptions in our analysis.
\begin{enumerate}[leftmargin=*]
\item \textbf{Smooth local loss.} \label{as1}
$f_v(\vec{x})$ is differentiable and its gradient is $L$-Lipschitz for $v \in \mathcal{V}$, \ie $\normm{\nabla f_v(\vec{y})-\nabla f_v(\vec{x})} \leq~L \normm{\vec{y}-\vec{x}}, \quad \forall \vec{x},\vec{y} \in \mathbb{R}^d$.

\item \textbf{Bounded local variance.} \label{as2}
The variance of the stochastic gradient is bounded for $v \in \mathcal{V}$, \ie  $ \EX_{\xi \sim \mathcal{D}_v} \norm{\nabla \F[v][][][] - \nabla f_v(\vec{x})}\leq \sigma^2$.

\item \textbf{Bounded diversity.} \label{as3}
The diversity of the local loss functions are bounded for $v \in \mathcal{V}$, \ie  $ \norm{\nabla f_v(\vec{x}) - \nabla f(\vec{x})}\leq~\zeta^2$.

\item \textbf{Transition (mixing) matrix.}\label{as4}
In Algorithm \ref{alg:MW}, \(\vec{P}\) is the transition matrix of an irreducible and aperiodic Markov chain, representing each walk.
In Algorithm \ref{alg:AsyncGoss}, it defines the mixing step of the gossip averaging. Matrix \(\vec{P}\) is doubly stochastic (\(\vec{P} \vec{1} = \vec{1}\),  \(\vec{1}^\top \vec{P} = \vec{1}^\top\)) and the spectral gaps of \(\vec{P}^\top \vec{P}\) and \(\vec{P}\) are denoted by \(p\) and \(p'\), respectively.

\end{enumerate}

\subsection{Convergence rate \wrt iterations} \label{conv}

\begin{theorem}\label{T1} \textbf{Multi-Walk.}
Let assumptions \ref{as1}-\ref{as4} hold, with a constant and small enough learning rate $\eta$ (potentially depending on $T$), after $T$ iterations of Algorithm \ref{alg:MW}, $\frac{1}{T}\sum_{t=0}^{T-1}\EX\norm{\nabla f(\xs)}$ is
\begin{small}
\begin{align}
       \mathcal{O} \bigg( \frac{FLRH}{T}    +    \frac{R\zeta^2}{p'T}   +   \sqrt{\frac{  FL  \left(\sigma^2    +    \zeta^2\right)}{T}}   +   (\hspace{-1pt}\frac{FLR\sqrt{V\sigma^2 + H^2\zeta^2}}{T}  )\hspace{-1pt}^{\frac{2}{3}}    \bigg),\label{cove_rate_MW} 
\end{align}
\end{small}
where $F := f(\xz)-f^*$, and $H^2$ is the second moment of the first return time to \master for the Markov chain representing each walk.\footnote{Specifically,
$H^2 = \mathbb{E}[h^2]$,
where
$h = \min\{\, k \ge 1 : X_k = 0 \mid X_0 = 0\}$
represents the number of steps it takes for the Markov chain representing each walk, starting from \master ($X_0 = 0$), to return to \master for the first time.}
\hfill $\Box$
\end{theorem}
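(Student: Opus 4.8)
The plan is to recast Algorithm~\ref{alg:MW} as a single "virtual" sequence undergoing perturbed asynchronous SGD, and then to run a standard descent-lemma argument on a well-chosen averaged iterate. First I would introduce the average model $\bar{\vec{x}}_t := \frac{1}{R}\sum_{r=1}^{R} \xs[r][t]$ (extending each walk's model to all $t$ by holding it constant between the iterations that touch it), and argue that the mixing step at \master (lines~8--10) is mass-preserving in the sense that it does not change $\sum_r u^r$ plus the "in-flight" increments; concretely, I would show that the effect of one iteration on $\bar{\vec{x}}_t$ is $\bar{\vec{x}}_{t+1} = \bar{\vec{x}}_t - \frac{\eta_t}{R}\nabla F_{v_t}(\xs[r_t][t-\tau_t],\xi_t)$, i.e. the average behaves exactly like sequential SGD with effective stepsize $\eta_t/R$ and a stale, non-uniformly-sampled gradient. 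This is the structural heart of the argument and I expect the bookkeeping here — tracking the $u^r$ copies and verifying telescoping across the possibly-many iterations between consecutive visits of walk $r$ to \master — to be the main obstacle.

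Next I would apply $L$-smoothness to $f$ at $\bar{\vec{x}}_t$ to get the one-step inequality
\begin{align}
\EX f(\bar{\vec{x}}_{t+1}) \le \EX f(\bar{\vec{x}}_t) - \frac{\eta_t}{R}\EX\inpr{\nabla f(\bar{\vec{x}}_t)}{\nabla f_{v_t}(\xs[r_t][t-\tau_t])} + \frac{L\eta_t^2}{2R^2}\EX\norm{\nabla F_{v_t}(\xs[r_t][t-\tau_t],\xi_t)}. \nonumber
\end{align}
The cross term splits into the desired $-\frac{\eta_t}{R}\|\nabla f(\bar{\vec{x}}_t)\|^2$-type descent plus error terms controlled by (i) the sampling bias of $v_t$ — here I would invoke the Markov-chain stationarity of the walk together with Assumption~\ref{as4} ($\vec{P}$ doubly stochastic, so the stationary distribution is uniform) to show the per-node gradients average correctly, with the mixing/return-time quantity $H^2$ and the spectral gap $p'$ entering when bounding how fast each walk equilibrates; (ii) the staleness gap $\|\xs[r_t][t-\tau_t] - \bar{\vec{x}}_t\|$; and (iii) the consensus gap $\|\xs[r][t]-\bar{\vec{x}}_t\|$ across walks. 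Terms (ii) and (iii) are bounded by summing at most $O(\tau_t)$ (resp.\ $O(H)$) recent update steps, each of size $\eta\cdot(\text{gradient norm})$, and using Assumptions~\ref{as2},~\ref{as3} to replace $\|\nabla F\|^2$ by $\sigma^2 + \zeta^2 + \|\nabla f(\bar{\vec{x}}_s)\|^2$; the $\zeta^2/p'$ and $R\sqrt{V\sigma^2+H^2\zeta^2}$ signatures in \eqref{cove_rate_MW} come precisely from these drift sums, with the delay $\tau_t$ absorbed using the standard argument that each gradient-completion event can be "charged" to at most one in-flight iteration per walk, so delays telescope without needing an a priori bound.

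Finally I would sum the one-step inequality over $t=0,\dots,T-1$, telescope $f(\bar{\vec{x}}_0)-f(\bar{\vec{x}}_T) \le F$, move the accumulated gradient-norm terms from (ii)--(iii) to the left side (valid for $\eta$ small enough, hence the "small enough learning rate" hypothesis), divide by $\frac{\eta}{R}\sum_t 1 = \frac{\eta T}{R}$, and then optimize over the constant $\eta$: balancing $\frac{FR}{\eta T}$ against the $\eta$-linear variance term gives the $\sqrt{FL(\sigma^2+\zeta^2)/T}$ rate, balancing against the $\eta^2$ drift term gives the $\big(FLR\sqrt{V\sigma^2+H^2\zeta^2}/T\big)^{2/3}$ rate, and the remaining $\eta$-independent pieces collapse to $\frac{FLRH}{T} + \frac{R\zeta^2}{p'T}$. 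One caveat I would flag: passing from a bound on $\frac{1}{T}\sum_t \EX\|\nabla f(\bar{\vec{x}}_t)\|^2$ to the stated bound on $\frac{1}{T}\sum_t \EX\|\nabla f(\xs[r_t][t])\|^2$ (the quantity in the theorem) requires one more application of $L$-smoothness and the consensus bound from (iii), which I would carry out at the end.
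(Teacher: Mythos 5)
Your overall route---introduce a virtual/averaged sequence, prove a descent lemma for it, bound the deviation of the actual walk iterates from it, then telescope and tune a constant $\eta$---is the same perturbed-iterate strategy the paper uses. However, the structural identity you place at the ``heart'' of the argument is false as stated. Consider a mixing iteration ($v_t=0$). Before it, walk $r_t$'s model is $\xs = u^{r_t}-\eta\sum_{z\in\mathcal{S}\setminus\{t\}}\nabla F_z$, where $\mathcal{S}$ collects walk $r_t$'s iterations since its last visit to \master; after lines 6 and 8 it becomes $u^{l}-\frac{\eta}{R}\sum_{z\in\mathcal{S}}\nabla F_z$. Hence the across-walk average $\bar{\vec{x}}_t=\frac{1}{R}\sum_{r}\vec{x}^r_t$ changes by $\frac{1}{R}\bigl(u^{l}-u^{r_t}\bigr)+\frac{R-1}{R^2}\eta\sum_{z\in\mathcal{S}\setminus\{t\}}\nabla F_z-\frac{\eta}{R^2}\nabla F_t$, not by $-\frac{\eta}{R}\nabla F_t$: the mixing step is not mass-preserving for the plain average, so ``$\bar{\vec{x}}_t$ is exact sequential SGD with stepsize $\eta/R$'' cannot be proved, and the rest of your plan is built on it.

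The fix is to choose the virtual sequence the paper uses: $\xtild[t+1]=\xtild-\frac{\eta}{R}\nabla F_{v_t}(\xs,\xi_{t+\hat{\tau}_t})$, i.e.\ every gradient is counted exactly once, with weight $1/R$, at the moment it is assigned---equivalently, the latest \master model $u^{l}$ plus all not-yet-reported increments scaled by $1/R$. With this choice the deviation $\xtild-\xs$ decomposes into other walks' unreported gradients (weight $\eta/R$) and the current walk's own unreported gradients (weight $(1-\tfrac1R)\eta$), and its second moment is controlled by the first and second moments of the return time to \master ($\EX h=V$ by double stochasticity, $\EX h^2=H^2$); this is precisely where the $V\sigma^2+H^2\zeta^2$ signature comes from, rather than from an ``equilibration speed'' argument. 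Two further points of contrast: staleness needs no charging argument at all, since a walk's model is frozen while in transit or being computed on, so $\xs[r_t][t-\tau_t]=\xs$; and the sampling bias of $v_t$ is handled inside the descent lemma by a total-variation/mixing-time bound yielding terms $(1-p')^{2|\mathcal{T}_{r_t}|}$ that sum to the $\frac{R\zeta^2}{p'T}$ term (this part of your sketch matches the paper). Finally, the paper's descent lemma is stated directly in terms of $\EX\|\nabla f(\xs)\|^2$, so the end conversion from the averaged iterate that you flag is unnecessary once the correct virtual sequence is used.
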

\begin{proof}
    Please refer to Appendix \ref{appendixb}.
\end{proof}

\begin{theorem}\label{T2} \textbf{\agos.}
Let assumptions \ref{as1}-\ref{as4} hold, with a constant and small enough learning rate $\eta$ (potentially depending on $T$), after $T$ iterations of Algorithm \ref{alg:AsyncGoss}, $\frac{1}{T}\sum_{t=0}^{T-1}\EX\norm{\nabla \gf[]}$ is
\begin{small}
\begin{align}
       & \mathcal{O} \bigg(\frac{FLV}{pT} + \sqrt{\frac{FL\left(\sigma^2 + \zeta^2\right)}{T}} + (\frac{FLV\sqrt{\frac{\sigma^2}{p} + \frac{\zeta^2}{p^2}}}{T})^{\frac{2}{3}} \bigg), \label{cove_rate_gos} 
\end{align}
\end{small}
where $F := f(\xz)-f^*$. \hfill $\Box$
\end{theorem}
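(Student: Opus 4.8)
To prove Theorem~\ref{T2} the plan is to run a ``perturbed descent'' argument on a virtual averaged iterate, the standard device for decentralized SGD, while carefully isolating the two perturbations that separate \agos from vanilla SGD: the consensus gap among the node models $\xv$, and the staleness of the gradients. Define $\xbar := \tfrac{1}{V}\sum_{v\in\mathcal V}\xv$. Since $\vec P$ is doubly stochastic, the gossip step on line~6 preserves the mean, so $\xbar[t+1] = \xbar - \tfrac{\eta_t}{V}\nabla\F[v_t][v_t][t-\tau_t][t]$; only the active node's gradient enters, effectively scaled by $1/V$, which is precisely what yields the linear speed-up in $T$. Applying $L$-smoothness of $f$ to this recursion and taking expectations over $\xi_t$ and the uniform choice of $v_t$ gives a one-step inequality of the form
\begin{align*}
\EX f(\xbar[t+1]) &\le \EX f(\xbar) - \tfrac{\eta_t}{2V}\EX\norm{\nabla f(\xbar)} + \tfrac{L\eta_t^2}{2V^2}\EX\norm{\nabla\F[v_t][v_t][t-\tau_t][t]} \\
&\quad + \tfrac{c\eta_t L^2}{V}\Big(\tfrac{1}{V}{\textstyle\sum_{v}}\,\EX\norm{\xv-\xbar} + \EX\norm{\xs[v_t][t-\tau_t]-\xs[v_t][t]}\Big),
\end{align*}
where the last line collects the consensus and staleness errors, produced by writing $\nabla f_{v_t}(\xs[v_t][t-\tau_t])$ as $\nabla f(\xbar)$ plus an $L$-Lipschitz correction for the displacement $\xs[v_t][t-\tau_t]\to\xbar$ and a mean-zero-over-$v_t$ term, then applying Young's inequality.

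The two error quantities are bounded by separate lemmas. \emph{(i) Consensus error.} Writing the deviation matrix $\vec X_t - \vec 1\,\xbar^\top$ and using that multiplication by $\vec P$ contracts its Frobenius norm by $1-p$ on the orthogonal complement of $\vec 1$ (Assumption~\ref{as4}), while each iteration injects a single rank-one term of norm $\mathcal{O}(\eta_t\normm{\nabla\F[v_t][v_t][t-\tau_t][t]})$, one gets $\Phi_{t+1}\le(1-\tfrac{p}{2})\Phi_t + \mathcal{O}\big(\tfrac{\eta_t^2}{p}\EX\norm{\nabla\F[v_t][v_t][t-\tau_t][t]}\big)$ for $\Phi_t := \tfrac{1}{V}\sum_{v}\EX\norm{\xv-\xbar}$. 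Unrolling and distinguishing the mean-zero stochastic part of the injection (which accumulates as independent noise, costing a factor $1/p$) from the deterministic heterogeneity part (which accumulates as a bias, costing $1/p^2$) gives $\sum_t\Phi_t = \mathcal{O}\big(\tfrac{1}{p}\sum_t\eta_t^2\sigma^2 + \tfrac{1}{p^2}\sum_t\eta_t^2\zeta^2 + \tfrac{1}{p^2}\sum_t\eta_t^2\,\EX\norm{\nabla f(\xbar)}\big)$; this is the source of the $\sigma^2/p + \zeta^2/p^2$ under the square root in the last term of \eqref{cove_rate_gos}. \emph{(ii) Staleness error.} Since $\xbar$ moves by $\mathcal{O}(\tfrac{\eta_t}{V}\normm{\nabla\F[v_t][v_t][t-\tau_t][t]})$ per iteration and $\xv$ also relaxes toward $\xbar$ under gossip, Cauchy--Schwarz bounds $\EX\norm{\xs[v_t][t-\tau_t]-\xs[v_t][t]}$ by $\mathcal{O}\big(\tfrac{\eta^2}{V}\tau_t\sum_{s=t-\tau_t}^{t-1}(\cdots)\big)$ plus a consensus contribution. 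Here the a priori delay bound used by \citet{lian2018asynchronousdecentralizedparallelstochastic} is replaced by the observation that, because the completing node is uniform over the $V$ nodes, the gap between consecutive completions of a fixed node -- hence $\tau_t$ -- is stochastically dominated by a geometric variable of mean $\Theta(V)$, so $\EX\tau_t = \mathcal{O}(V)$ and $\EX\tau_t^2 = \mathcal{O}(V^2)$; a re-summation that charges each computed gradient to the iteration at which its computation started then collapses $\sum_t\eta^2\,\EX[\tau_t(\cdots)]$ to $\mathcal{O}(V)\sum_t\eta^2(\cdots)$.

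Summing the one-step inequality over $t=0,\dots,T-1$, telescoping the $f(\xbar)$ terms into $F=f(\xz)-f^*$, substituting the two lemmas, and bootstrapping to absorb the $\EX\norm{\nabla f(\xbar)}$ pieces hidden inside $\Phi_t$, the staleness bound, and $\EX\norm{\nabla\F[v_t][v_t][t-\tau_t][t]}$ back into the left-hand side (which is why $\eta$ must be small enough, of order $p/L$) yields a bound of the schematic shape
\[
\tfrac{1}{T}\sum_{t}\EX\norm{\nabla f(\xbar)} = \mathcal{O}\Big(\tfrac{FV}{\eta T} + \tfrac{L\eta(\sigma^2+\zeta^2)}{V} + L^2\eta^2\big(\tfrac{\sigma^2}{p}+\tfrac{\zeta^2}{p^2}\big)\Big).
\]
Taking $\eta$ to be the minimum of $\Theta(p/L)$, $\Theta\big(\sqrt{FV^2/(LT(\sigma^2+\zeta^2))}\big)$ and $\Theta\big((FV/(L^2T(\sigma^2/p+\zeta^2/p^2)))^{1/3}\big)$ balances the three terms and reproduces exactly $\tfrac{FLV}{pT}$, $\sqrt{FL(\sigma^2+\zeta^2)/T}$ and $\big(FLV\sqrt{\sigma^2/p+\zeta^2/p^2}/T\big)^{2/3}$. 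A final use of the consensus bound converts this guarantee on $\xbar$ into the claimed one on $\tfrac{1}{T}\sum_t\EX\norm{\nabla f(\xs[v_t][t])}$, via $\norm{\nabla f(\xs[v_t][t])}\le 2\norm{\nabla f(\xbar)} + 2L^2\norm{\xs[v_t][t]-\xbar}$ and $\sum_t\EX\norm{\xs[v_t][t]-\xbar}=\sum_t\Phi_t$, which is of lower order after the tuning.

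The main obstacle is closing the recursion that the unbounded delays create: $\EX\norm{\nabla\F[v_t][v_t][t-\tau_t][t]}$ depends on $\xbar[t-\tau_t]$ and on the consensus errors at time $t-\tau_t$, which themselves depend, through $\Phi$, on gradients at still earlier times, so without any bound on $\tau_t$ this chain has no natural termination. The resolution -- and the most delicate bookkeeping in the proof -- is the charging/re-summation argument for the asynchronous schedule together with the geometric tail of $\tau_t$, which collapses $\sum_t\eta^2\,\EX[\tau_t\norm{\cdots}]$ to $\mathcal{O}(V)\sum_t\eta^2\norm{\cdots}$; keeping that factor equal to $V$ (not $V^2$) in the leading $\tfrac{FLV}{pT}$ term, and keeping it multiplicatively separate from the $1/p$ mixing factor rather than letting the two compound, is where the constants must be tracked most carefully.
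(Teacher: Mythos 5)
Your route is recognizably the AD-PSGD-style analysis built on the exact node average $\bar{\vec{x}}_t=\tfrac1V\sum_v \vec{x}^v_t$: descent on $f(\bar{\vec{x}}_t)$, a consensus lemma via the spectral-gap contraction, an explicit staleness term $\|\vec{x}^{v_t}_{t-\tau_t}-\vec{x}^{v_t}_t\|^2$ controlled through a probabilistic bound $\EX\tau_t=\mathcal{O}(V)$, and a final stepsize tuning; the balancing you describe does reproduce the three terms of (\ref{cove_rate_gos}). The paper takes a genuinely different path on exactly the point you identify as hardest: it never controls $\tau_t$ at all. Following the perturbed/virtual-iterate idea of Stich, it defines $\tilde{\vec{x}}_{t+1}=\tilde{\vec{x}}_t-\tfrac{\eta}{V}\nabla F_{v_t}(\vec{x}^{v_t}_t,\xi_{t+\hat{\tau}_t})$, i.e.\ each gradient is charged to the iteration at which its evaluation point is formed rather than the iteration at which the computation finishes. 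With this re-indexing the delay disappears from the analysis by construction: the descent lemma (Lemma \ref{lem3}) is stated directly in terms of $\norm{\nabla f(\vec{x}^{v_t}_t)}$ (the quantity in the theorem, so no final conversion from $\bar{\vec{x}}_t$ is needed), and the only error term is $\norm{\tilde{\vec{x}}_t-\vec{x}^{v_t}_t}$, bounded in Lemma \ref{lem4} by a Frobenius-norm contraction argument that produces the $\sigma^2/p+\zeta^2/p^2$ factors, after which Lemma \ref{lem5} gives the rate. What your approach buys is a more familiar object ($\bar{\vec{x}}_t$) and an explicit quantification of staleness; what the paper's buys is delay-freeness, which is precisely what lets it claim the result without any bound or model on computation/communication delays.

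The gap in your proposal is the step you yourself flag as the delicate one, and as written it is asserted rather than established. First, $\tau_t$ is determined by the sampling history $\{v_s\}_{s\le t}$, and the gradient norms accumulated over $[t-\tau_t,t-1]$ are also functions of that same history (and of the iterates it generates), so $\tau_t$ is not independent of the summands; the collapse of $\sum_t\eta^2\,\EX\big[\tau_t\sum_{s=t-\tau_t}^{t-1}(\cdots)\big]$ to $\mathcal{O}(V)\sum_t\eta^2(\cdots)$ cannot be justified by $\EX\tau_t=\mathcal{O}(V)$ alone and needs either a careful conditioning/charging argument or a worst-case rearrangement, neither of which is supplied. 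Second, the geometric-tail claim itself needs care: conditioned on $v_t=v$, the staleness is a backward-recurrence-type quantity of the Bernoulli$(1/V)$ trigger process (size-biased relative to a plain inter-completion gap), so the constants in $\EX\tau_t,\EX\tau_t^2$ must be tracked to keep the leading term at $FLV/(pT)$ rather than $FLV^2/(pT)$, and to keep the delay factor from compounding with the $1/p$ mixing factor. Until that bookkeeping is carried out, your argument does not yet yield the stated bound under the theorem's hypotheses, whereas the paper's virtual-sequence construction avoids the issue entirely.
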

\begin{proof}
    Please refer to Appendix \ref{appendixc}.
\end{proof}

\textbf{Dominant terms.}
The dominant term in both (\ref{cove_rate_MW}) and (\ref{cove_rate_gos}) is identically given by $\sqrt{\frac{FL\left(\sigma^2 + \zeta^2\right)}{T}}$.
Focusing on the next most significant term for comparison, in (\ref{cove_rate_MW}), this term is given by $(\frac{FLR\sqrt{V\sigma^2 + H^2\zeta^2}}{T})^{\frac{2}{3}}$, whereas in (\ref{cove_rate_gos}), it is $(\frac{FLV\sqrt{\frac{\sigma^2}{p} + \frac{\zeta^2}{p^2}}}{T})^{\frac{2}{3}}$.
Note that (\ref{cove_rate_MW}) includes a non-dominating term that describes the rate at which walks converge to their steady state. This term is related to the spectral gap of $\vec{P}$, represented by $p'$. 
In the following, we compare the dominant terms in the convergence rates of both algorithms in different settings.

\textbf{Homogeneous data distribution.} 
In iid setting ($\zeta = 0$), the differentiating factor in the second dominant term of convergence rate is $\frac{V}{\sqrt{p}}$ for \agos and $R\sqrt{V}$ for Multi-Walk. 
Specifically, for graphs with $p = \mathcal{O}(\frac{V}{R^2})$, Multi-Walk outperforms, while for $p = \Omega(\frac{V}{R^2})$, \agos converges faster \wrt iterations.
It is interesting to observe that the graph's topology does not impact the performance of Multi-Walk in iid setting, and the only factors are the number of nodes and walks.
We compare convergence rate and communication overhead for each algorithm in Table~\ref{iid_table} across three different graph topologies, using the commonly employed Metropolis-Hastings matrix, $\vec{P}$, where 
\(
p_{ij} = p_{ji} = \min \left\{ \frac{1}{\operatorname{deg}(i) + 1}, \frac{1}{\operatorname{deg}(j) + 1} \right\}, \quad \text{for } \{i,j\} \in \mathcal{E}.
\)
Note that computation overhead is the same for both and equal to the number of iterations, \ie $T$, and we do not include that in the table.
We observe that for both cycle and $2$D-torus topologies, Multi-Walk outperforms \agos in convergence rate.
However, when the graph diameter decreases (i.e., $p$ increases), such as in the case of a complete graph, Multi-Walk loses its advantage.
It is important to note that Multi-Walk consistently maintains lower communication overhead; in each iteration, it involves at most one communication step, whereas \agos activates multiple edges for mixing based on the graph topology

\begin{table*}[t]
\caption{Comparison of the convergence rate and communication overhead in \textbf{iid} setting for Metropolis-Hastings $\vec{P}$.}
\label{iid_table}
\begin{center}
{\fontsize{10.2}{28}\selectfont
\begin{sc}
\begin{tabular}{@{\hskip 1pt}l@{\hskip 1pt}l@{\hskip 4pt}c@{\hskip 0pt}c}
\toprule
\addlinespace[-.1cm]
Topology & Algorithm & Convergence rate & Comm-cost \\
\midrule
\multirow{2}{*}{Cycle ($p = \Theta(\frac{1}{V^2})$)} & Multi-Walk & ${\mathcal{O} \bigg(\frac{\sigma}{\sqrt{T}} + (\frac{R\sqrt{V\sigma^2}}{T})^{\frac{2}{3}} \bigg)}$\cmark & ${\Theta(T)}$ \\
 & \agos & $\mathcal{O} \bigg(\frac{\sigma }{\sqrt{T}} + (\frac{V\sqrt{V^2 \sigma^2}}{T})^{\frac{2}{3}} \bigg)$ & $\Theta(VT)$ \\
\addlinespace[.2cm]
\midrule
\multirow{2}{*}{2d-torus ($p = \Theta(\frac{1}{V})$)} & Multi-Walk & ${\mathcal{O} \bigg(\frac{\sigma}{\sqrt{T}} + (\frac{R\sqrt{V\sigma^2}}{T})^{\frac{2}{3}} \bigg)}$\cmark & ${\Theta(T)}$ \\
 & \agos & ${\mathcal{O} \bigg( \frac{\sigma}{\sqrt{T}} + (\frac{V\sqrt{V\sigma^2 }}{T})^{\frac{2}{3}} \bigg)}$ & $\Theta(VT)$ \\
 \addlinespace[.2cm]
\midrule
\multirow{2}{*}{Complete ($p = 1$)} & Multi-Walk & $\mathcal{O} \bigg( \frac{\sigma}{\sqrt{T}} + (\frac{R\sqrt{V\sigma^2}}{T})^{\frac{2}{3}} \bigg)$[\cmark \textit{if} \( R = \mathcal{O}( \sqrt{V} )\)] & ${\Theta(T)}$ \\
 & \agos & ${\mathcal{O} \bigg( \frac{\sigma}{\sqrt{T}} + (\frac{V\sqrt{\sigma^2 }}{T})^{\frac{2}{3}} \bigg)}$[\cmark \textit{if} \( R = \mathbf{\varOmega}( \sqrt{V} )\)] & $\Theta(V^2T)$ \\
 \addlinespace[.3cm]
\bottomrule
\end{tabular}
\end{sc}
}
\end{center}
\end{table*}

\begin{table*}[t]
\caption{Comparison of the convergence rate and communication overhead in \textbf{noniid} setting for Metropolis-Hastings $\vec{P}$.}
\label{noniid_table}
\begin{center}
{\fontsize{10.2}{28}\selectfont
\begin{sc}
\begin{tabular}{llcc}
\toprule
\addlinespace[-.1cm]
Topology & Algorithm & Convergence rate & Comm-cost \\
 \addlinespace[.2cm]
\midrule
\multirow{2}{*}{Cycle ($p = \Theta(\frac{1}{V^2})$)} & Multi-Walk & ${\mathcal{O} \bigg(\sqrt{\frac{\sigma^2 + \zeta^2}{T}} + (\frac{R\sqrt{V\sigma^2 + V^3\zeta^2}}{T})^{\frac{2}{3}} \bigg)}$\cmark & ${\Theta(T)}$ \\
 & \agos & $\mathcal{O} \bigg(\sqrt{\frac{\sigma^2 + \zeta^2}{T}} + (\frac{V\sqrt{V^2\sigma^2 + V^4 \zeta^2 }}{T})^{\frac{2}{3}} \bigg)$ & $\Theta(VT)$ \\
  \addlinespace[.2cm]
\midrule
\multirow{2}{*}{2d-torus ($p = \Theta(\frac{1}{V})$)} & Multi-Walk & ${\mathcal{O} \bigg(\sqrt{\frac{\sigma^2 + \zeta^2}{T}} + (\frac{R\sqrt{V\sigma^2 + H^2\zeta^2}}{T})^{\frac{2}{3}} \bigg)}$ & ${\Theta(T)}$ \\
 & \agos & ${\mathcal{O} \bigg( \sqrt{\frac{\sigma^2 + \zeta^2}{T}} + (\frac{V\sqrt{V\sigma^2 + V^2 \zeta^2 }}{T})^{\frac{2}{3}} \bigg)}$ & $\Theta(VT)$ \\
\midrule
\multirow{2}{*}{Complete ($p = 1$)} & Multi-Walk & ${\mathcal{O} \bigg(\sqrt{\frac{\sigma^2 + \zeta^2}{T}} + (\frac{R\sqrt{V\sigma^2 + V^2\zeta^2}}{T})^{\frac{2}{3}} \bigg)}$ & ${\Theta(T)}$ \\
 & \agos & ${\mathcal{O} \bigg( \sqrt{\frac{\sigma^2 + \zeta^2}{T}} + (\frac{V\sqrt{\sigma^2 + \zeta^2 }}{T})^{\frac{2}{3}} \bigg)}$ & $\Theta(V^2T)$ \\
 \addlinespace[.3cm]
\bottomrule
\end{tabular}
\end{sc}
}
\end{center}
\end{table*}

\textbf{Heterogeneous data distribution.}
In non-iid setting, $\zeta^2$ is multiplied by $H^2$ for Multi-Walk and by $p^2$ for \agos.
We derived $H^2$ for cycle and complete topologies with Metropolis-Hastings transition matrix in Appendix~\ref{appendixe}, and the comparison is summarized in Table~\ref{noniid_table}. 
We observe that for the cycle topology, Multi-Walk consistently demonstrates faster convergence in terms of iterations. However, this advantage diminishes as we move to topologies with smaller diameters.
In complete topology, we observe that \(\zeta^2\) is multiplied by \(V^2\) in Multi-Walk, whereas it is multiplied by \(1\) in \agos.
This indicates that, as we transition to increasingly non-iid settings in small-diameter topologies, Multi-Walk perform poorly. 


\subsection{Convergence rate \wrt transmitted bits}
\label{conv-bits}

\begin{table*}[t] 
\bgroup
\def\arraystretch{1.1}
\caption{Analysis in total transmitted bits ($B$).}
\label{bits}
\begin{center}
{\fontsize{10.2}{28}\selectfont
\begin{sc}
\begin{tabular}{@{\hskip 1pt}lcc}
\toprule
\addlinespace[-.1cm]
Algorithm & Convergence rate & Comp-cost\\
\midrule
Multi-Walk & $\mathcal{O} \bigg(\frac{FLRHm}{B} + \frac{R\zeta^2m}{p'B}+ \sqrt{\frac{FLm\left(\sigma^2 + \zeta^2\right)}{B}} + (\frac{FLRm\sqrt{V\sigma^2 + H^2\zeta^2}}{B})^{\frac{2}{3}} \bigg)$  & $\Theta(\frac{B}{m})$ \\
\agos & $\mathcal{O} \bigg(\frac{FLVm\|\vec{P}\|_0}{pB} + \sqrt{\frac{FLm\|\vec{P}\|_0\left(\sigma^2 + \zeta^2\right)}{B}} +(\frac{FLVm\|\vec{P}\|_0\sqrt{\frac{\sigma^2}{p} + \frac{\zeta^2}{p^2}}}{B})^{\frac{2}{3}} \bigg)$ & $\Theta(\frac{B}{m\|\vec{P}\|_0})$ \\
\addlinespace[.3cm]
\bottomrule
\end{tabular} 
\end{sc}
}
\end{center}
\egroup
\end{table*}

Assume the model size is $m$ bits. Each iteration of Algorithm \ref{alg:MW} and \ref{alg:AsyncGoss} communicates one and $\|\vec{P}\|_0$ models, respectively. $\|\vec{P}\|_0$ denote the number of non-zero elements of mixing matrix $\vec{P}$.
\begin{corollary}
    Under the condition of Theorem \ref{T1}, \ref{T2}, we get
    the convergence rate of Algorithm \ref{alg:MW}, and \ref{alg:AsyncGoss} as shown in Table \ref{bits} where $B$ represents total transmitted bits.
\end{corollary}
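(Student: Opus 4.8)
The plan is to treat the corollary as a change of variables in the two convergence bounds of Theorems~\ref{T1} and~\ref{T2}, converting "number of iterations $T$'' into "total transmitted bits $B$''. First I would fix the bookkeeping: in Algorithm~\ref{alg:MW}, each triggered iteration ends with exactly one model transmission along a single walk, so $T$ iterations transmit $Tm$ bits, giving $B = Tm$ and hence $T = B/m$. In Algorithm~\ref{alg:AsyncGoss}, the gossip-averaging step on line~6 activates every edge with a nonzero mixing weight, so each iteration transmits $\|\vec{P}\|_0$ models, i.e. $\|\vec{P}\|_0 m$ bits; thus $B = T m \|\vec{P}\|_0$ and $T = B/(m\|\vec{P}\|_0)$. (One should note the harmless convention of counting a self-weight $p_{vv}$ as not requiring transmission, or absorbing it into constants — this does not affect the $\Theta(\cdot)$ claims.)

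Next I would substitute these expressions for $T$ directly into~(\ref{cove_rate_MW}) and~(\ref{cove_rate_gos}). For \ours, replacing $T$ by $B/m$ in $\frac{FLRH}{T} + \frac{R\zeta^2}{p'T} + \sqrt{\frac{FL(\sigma^2+\zeta^2)}{T}} + \big(\frac{FLR\sqrt{V\sigma^2+H^2\zeta^2}}{T}\big)^{2/3}$ multiplies each "$1/T$'' by $m$, each "$1/\sqrt{T}$'' by $\sqrt{m}$, and each "$(1/T)^{2/3}$'' by $m^{2/3}$, which is exactly absorbed into the argument of the $(\cdot)^{2/3}$ term as written in Table~\ref{bits}. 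The same substitution with $T = B/(m\|\vec{P}\|_0)$ in~(\ref{cove_rate_gos}) produces the \agos row, with $m\|\vec{P}\|_0$ playing the role that $m$ played for \ours. The computation-cost columns follow immediately: \ours performs one gradient computation per iteration, so its computational cost is $T = \Theta(B/m)$; \agos likewise performs one gradient computation per iteration, so its cost is $T = \Theta(B/(m\|\vec{P}\|_0))$.

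The substitution itself is routine; the only real content — and the step I would be most careful about — is justifying that the per-iteration communication counts ($1$ for \ours, $\|\vec{P}\|_0$ for \agos) are both exact and, crucially, \emph{deterministic} and iteration-independent, so that the deterministic identity $B = Tm$ (resp. $B = Tm\|\vec{P}\|_0$) can be inverted without any expectation or concentration argument. For \ours this is clear from lines~12--13 of Algorithm~\ref{alg:MW} (exactly one send per triggered iteration, regardless of whether the mixing at \master\ occurs). For \agos it requires observing that line~6 is a global gossip step whose support is fixed by $\vec{P}$ and does not depend on which node $v_t$ was activated; hence every iteration costs the same $\|\vec{P}\|_0 m$ bits. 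Once that is in place, the corollary is just Theorems~\ref{T1}--\ref{T2} re-expressed in the variable $B$, and no new analysis is needed.
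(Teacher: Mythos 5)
Your proposal is correct and matches the paper's (implicit) derivation exactly: the paper states that each iteration of Algorithm~\ref{alg:MW} transmits one model and each iteration of Algorithm~\ref{alg:AsyncGoss} transmits $\|\vec{P}\|_0$ models, and the corollary is obtained by substituting $T = B/m$ and $T = B/(m\|\vec{P}\|_0)$ into the bounds of Theorems~\ref{T1} and~\ref{T2}, which is precisely your argument. Your extra care about the per-iteration communication counts being deterministic is a reasonable remark but adds nothing beyond the paper's accounting.
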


The dominating term here is $\sqrt{\frac{FLm\left(\sigma^2 + \zeta^2\right)}{B}}$ for Multi-Walk and 
$\sqrt{\frac{FLm\|\vec{P}\|_0\left(\sigma^2 + \zeta^2\right)}{B}}$ for \agos. Thus, we observe that Multi-Walk outperforms \agos in terms of transmitted bits when the second dominating term is not comparable in magnitude. Intuitively, in every model transmission, Multi-Walk executes approximately one computation per model transmission, while \agos performs $\|\vec{P}\|_0$ model transmissions per computation. Therefore, Multi-Walk is a better choice when there is a restriction on the amount of communicated bits.

In the extreme noniid regime (large $\zeta$), the second dominant term is proportional to $H^2\zeta^2$ for Multi-Walk, compared to $\frac{\zeta^2}{p^2}$ for \agos. This suggests that \agos is advantageous in highly non-i.i.d. settings with small graph diameters. In the extreme non-i.i.d. case, the second term becomes comparable to the leading term, and in small-diameter graphs, this term specifically favors \agos. Taking a complete graph as an example, the terms simplify to $V^2\zeta^2$ for Multi-Walk versus $\zeta^2$ for \agos, a difference that significantly favors \agos in such settings.




\subsection{Convergence rate \wrt wall-clock time}
\label{conv-time}

In Algorithm \ref{alg:MW}, assume each walk performs one iteration (computation and communication) with a rate-$\frac{1}{d}$ exponential random variable, independent across walks and over time.
The value of $d$ is determined by the average computation and communication delay in the network.
Thus, each walk does one iteration in Algorithm \ref{alg:MW} according to a rate-$\frac{1}{d}$ Poisson process. Equivalently, this corresponds to all iterations in Algorithm \ref{alg:MW} are according to a rate-$\frac{R}{d}$ Poisson process at times $\{Z_t\}_{t=0}^{T-1}$ where $\{Z_t - Z_{t-1}\}_{t=1}^{T-1}$, denoting the $t$-th iteration duration,
are i.i.d. exponentials of rate $\frac{R}{d}$. Therefore, we have $\EX\left[Z_t\right] = \frac{td}{R}$ and for any $\delta >0 $:
\begin{align}
    Pr \left ( |Z_t - \frac{td}{R}| \geq \frac{\delta td}{R} \right) \leq 2 \exp\left( \frac{-\delta^2 t}{2}\right).\label{realtime}
\end{align}
(\ref{realtime}) follows directly from Cramer’s theorem \cite{Boyd2006RandomizedGA}.
Hence, by multiplying the terms obtained regarding iterations by \(\frac{d}{R}\), we obtain the corresponding terms in real time. In other words, the convergence rate in Theorem \ref{T1} can be transformed to real time (\(Z\)) by substituting \(T\) with \(\frac{RZ}{d}\).

For Algorithm \ref{alg:AsyncGoss}, we assume each node has a clock that ticks at the times of a rate-$\frac{1}{d}$ Poisson process.
Here, the value of $d$ is determined by the average computation and gossip communication delay for nodes.
And the same result of (\ref{realtime}) is valid by replacing $R$ with $V$.

\begin{table*}[t] 
\bgroup
\caption{Analysis in wall-clock time ($Z$).}
\label{real_time}
\begin{center}
{\fontsize{10.2}{28}\selectfont
\begin{sc}
\begin{tabular}{@{\hskip 0pt}l@{\hskip -10pt}c@{\hskip 0pt}c@{\hskip 5pt}c}
\toprule
\addlinespace[-.1cm]
Algorithm & Convergence rate & Comm-cost & Comp-cost\\
\midrule
Multi-Walk & $\mathcal{O} \bigg(\frac{FLHd}{Z} +\frac{\zeta^2d}{p'Z} + \sqrt{\frac{FLd\left(\sigma^2 + \zeta^2\right)}{RZ}} + (\frac{FLd\sqrt{\sigma^2V + \zeta^2H^2}}{Z})^{\frac{2}{3}} \bigg)$  & ${\Theta(\frac{ZRm}{d})}$   & $\Theta(\frac{ZR}{d})$ \\
\agos & $\mathcal{O} \bigg(\frac{FLd}{pZ} + \sqrt{\frac{FLd\left(\sigma^2 + \zeta^2\right)}{VZ}} + (\frac{FLd\sqrt{\frac{\sigma^2}{p} + \frac{\zeta^2}{p^2}}}{Z})^{\frac{2}{3}} \bigg)$ & $\Theta(\frac{ZVm\|\vec{P}\|_0}{d})$  & $\Theta(\frac{ZV}{d})$\\
\addlinespace[.3cm]
\bottomrule
\end{tabular} 
\end{sc}
}
\end{center}
\egroup
\end{table*}

\begin{corollary}
    Under the condition of Theorem \ref{T1}, \ref{T2}, we get
    the convergence rate of Algorithms \ref{alg:MW} and \ref{alg:AsyncGoss} as shown in Table \ref{real_time} where $Z$ represents wall-clock time.
\end{corollary}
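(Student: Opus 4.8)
The plan is to treat the wall-clock-time statement as a change of variables layered on top of Theorems~\ref{T1} and~\ref{T2}, together with a direct counting argument for the communication and computation columns. First I would set up the counting process: for Algorithm~\ref{alg:MW}, let $N(Z)$ be the number of iterations completed by time $Z$. Since each of the $R$ walks performs iterations according to an independent rate-$\frac1d$ Poisson process, their superposition is a rate-$\frac Rd$ Poisson process, so the iteration times $\{Z_t\}$ satisfy the concentration bound~(\ref{realtime}). Using~(\ref{realtime}) with a fixed $\delta$ (say $\delta=\tfrac12$) gives that, with probability at least $1-2\exp(-\Theta(Z))$, we have $N(Z)\ge (1-\delta)\frac{RZ}{d}=:T_Z$, i.e.\ at least $\Theta(RZ/d)$ iterations have occurred.

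Next I would invoke Theorem~\ref{T1}. Every term in the bound~(\ref{cove_rate_MW}) is of the form $\mathrm{const}\cdot T^{-a}$ with $a\in\{\tfrac23,\tfrac12,1\}$, so the whole bound is monotonically decreasing in the iteration count. Hence, running Algorithm~\ref{alg:MW} with the step size $\eta$ tuned to the deterministic horizon $T_Z$, on the event $\{N(Z)\ge T_Z\}$ the averaged gradient norm over the first $N(Z)$ iterations is at most~(\ref{cove_rate_MW}) evaluated at $T=T_Z=\Theta(RZ/d)$; on the complementary event (probability $2\exp(-\Theta(Z))$) the averaged gradient norm is bounded by a constant, which is absorbed into $\mathcal{O}(\cdot)$. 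Substituting $T\mapsto \frac{RZ}{d}$ into~(\ref{cove_rate_MW}) — so that $\frac{FLRH}{T}\mapsto\frac{FLHd}{Z}$, $\frac{R\zeta^2}{p'T}\mapsto\frac{\zeta^2 d}{p'Z}$, $\sqrt{\frac{FL(\sigma^2+\zeta^2)}{T}}\mapsto\sqrt{\frac{FLd(\sigma^2+\zeta^2)}{RZ}}$, and $(\frac{FLR\sqrt{V\sigma^2+H^2\zeta^2}}{T})^{2/3}\mapsto(\frac{FLd\sqrt{V\sigma^2+H^2\zeta^2}}{Z})^{2/3}$, with the $R$ canceling in the last two — yields exactly the \ours row of Table~\ref{real_time}. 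Repeating the argument with the superposition rate $\frac Vd$ in place of $\frac Rd$ (i.e.\ $T\mapsto\frac{VZ}{d}$) applied to Theorem~\ref{T2} gives the \agos row.

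The cost columns follow by counting events. Over wall-clock time $Z$ there are $\Theta(RZ/d)$ iterations of Algorithm~\ref{alg:MW}, each consisting of exactly one local gradient computation and one transmission of an $m$-bit model, giving computation cost $\Theta(ZR/d)$ and communication cost $\Theta(ZRm/d)$; for Algorithm~\ref{alg:AsyncGoss} there are $\Theta(VZ/d)$ iterations, each with one computation and $\|\vec{P}\|_0$ model transmissions, giving $\Theta(ZV/d)$ and $\Theta(ZVm\|\vec{P}\|_0/d)$ respectively.

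I expect the main obstacle to be the bookkeeping that makes the "substitute $T\mapsto RZ/d$" step rigorous. Because $N(Z)$ is itself random while the step size in Theorem~\ref{T1} is horizon-dependent, one must fix $\eta$ to a deterministic proxy for $N(Z)$, exploit monotonicity of~(\ref{cove_rate_MW}) in $T$, and control the rare event that far fewer than $\Theta(RZ/d)$ iterations have occurred; the exponential tail in~(\ref{realtime}) is exactly what renders that contribution lower-order, but it needs to be spelled out so the resulting (in-expectation) bound, rather than merely a high-probability one, matches the table.
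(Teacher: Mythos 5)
Your proposal is correct and follows essentially the same route as the paper: model the iteration times as the superposition of rate-$\frac{1}{d}$ Poisson processes (rate $\frac{R}{d}$ for \ours, $\frac{V}{d}$ for \agos), use the concentration bound~(\ref{realtime}) to justify substituting $T \mapsto \frac{RZ}{d}$ (resp.\ $\frac{VZ}{d}$) in Theorems~\ref{T1} and~\ref{T2}, and obtain the cost columns by counting one computation and one (resp.\ $\|\vec{P}\|_0$) model transmissions per iteration. Your additional care about the randomness of $N(Z)$, the horizon-dependent step size, and the low-probability tail event is a more rigorous rendering of a step the paper treats informally.
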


The dominating term here is $\sqrt{\frac{FLd\left(\sigma^2 + \zeta^2\right)}{RZ}}$ for Multi-Walk and 
$\sqrt{\frac{FLd\left(\sigma^2 + \zeta^2\right)}{VZ}}$ for \agos. 
This highlights the advantage of \agos when considering real-time performance. The reason is that all nodes operate simultaneously, enabling multiple iterations to be completed in a shorter period of time in terms of wall-clock duration.
We also observe that Multi-Walk achieves a linear speed-up proportional to the number of walks, making it competitive with \agos \wrt wall-clock time.
Increasing the number of walks reduces the impact of the dominant term. If we consider the second dominant term, given by  
$(\frac{FLd\sqrt{\sigma^2V + \zeta^2H^2}}{Z})^{\frac{2}{3}}$ for Multi-Walk and 
$(\frac{FLd\sqrt{\sigma^2/p + \zeta^2/p^2}}{Z})^{\frac{2}{3}}$ for \agos, we observe that this term favors Multi-Walk for topologies with large diameters.
Here, we also observe that the computation and communication cost of \agos is higher than that of Multi-Walk in real time.
The communication overhead for \agos is proportional to \( V \|\vec{P}\|_0 \) because all nodes are active, and gossip is used for information dissemination. In contrast, for Multi-Walk, it is proportional to \( R \), as there are \( R \) active walks, each performing one peer-to-peer communication. The computation overhead is proportional to \( R \) and \( V \) in Multi-Walk and \agos, respectively, as Multi-Walk and \agos have  \( R \) and \( V \) concurrent active nodes calculating gradients.




\section{\label{sec:Node0Fail} Resilience of Multi-Walk against Node Failures }\label{fail_resilience}

Any node in the graph may fail. If a node other than Node 0 fails in Multi-Walk, Algorithm~\ref{alg:MW}, two cases arise. (i) If no walk is currently associated with the failed node, the algorithm continues to operate unchanged. (ii) If the failed node is participating in a computation or communication phase of a walk, the information associated with that walk is lost. This failure model has been studied in \cite{10901339} for the single random-walk case, and the extension to multiple concurrent walks is straightforward.

On the other hand, if Node 0 fails in Multi-Walk, there may be a natural concern if this causes a single point of failure.
Fortunately, this is not the case. In addition to \master, all streams in the graph maintain a copy of the global model. Even though these streams may hold slightly outdated versions of the global model, each still retains a valid model state. Therefore, if \master fails, any other node can take its place and resume the aggregation process across the streams.

Nodes periodically exchange heartbeat signals to monitor the liveness of \master. If \master becomes unresponsive for a certain period, nodes initiate a local communication protocol with their one-hop neighbors, exchanging messages that include information such as node degree, bandwidth, available memory, and compute capacity. Based on this information, the nodes collaboratively reach a consensus and elect a new \master, selecting the node best suited to take over the aggregation task. 
The newly selected \master waits for a walk to arrive. The local copies $\{u^{r}\}_{r \in \{1,\dots, R\}}$ are then initialized with the model parameters from the arriving walk (similar to line 2 of Algorithm \ref{alg:MW}), after which the Multi-Walk algorithm resumes. 
Regardless of the specific \master selection mechanism, it is evident that failures of the \master can impact convergence time. Next, we analyze the convergence behavior under scenarios where \master failures occur and a new \master is selected after failures.





Let us assume there are $E$ failures throughout the learning process, corresponding to $E+1$ different \master selected until convergence. Let $H^2_i$ be the second moment of the first return time to \master chosen after the $i$-th failure. Assumption \ref{as4} concerning the transition matrix must hold after each failure. Accordingly, we define $p'_i$ as the spectral gap of the transition matrix $\vec{P}$ after the $i$-th failure, which may involve changes to the network topology. For this analysis, we assume that failures do not alter the global loss function and that no streams are lost.

\begin{theorem}\label{T3} \textbf{Multi-Walk with \master failures.}
Let assumptions \ref{as1}-\ref{as4} hold, with a constant and small enough learning rate $\eta$ (potentially depending on $T$), after $T$ iterations of Algorithm \ref{alg:MW} with $E$ \master failures handled by new \master selection, 
$\frac{1}{T}\sum_{t=0}^{T-1}\EX\norm{\nabla f(\xs)}$ is
\begin{align}
       \mathcal{O} \bigg(\frac{FLRE\max\limits_{i \in \{0,...,E\}} H_i}{T}+\sum_{i=0}^{E}\frac{R\zeta^2}{p'_iT} + \sqrt{\frac{FL\left(\sigma^2  +  \zeta^2\right)}{T}}
       +(\frac{FLR\sqrt{EV\sigma^2 + E^2\zeta^2 \max\limits_{i \in \{0,...,E\}} H^2_i}}{T})^{\frac{2}{3}} \bigg),\label{cove_rate_MW_fail}
\end{align}
where $F := f(\xz)-f^*$.
\end{theorem}
\begin{proof}
    Please refer to Appendix \ref{appendixd}.
\end{proof}

\textbf{Impact of failures.}
We see from  (\ref{cove_rate_MW_fail}) that Multi-Walk algorithm still converges to a stationary point despite \master failures. However, these failures negatively impact the convergence rate due to; $E \max\limits_{i \in \{0,...,E\}} H_i$ in the first term, the summation $\sum_{i=0}^{E}(\cdot)$ in the second, and the factor of $E$ and $E^2 \max\limits_{i \in \{0,...,E\}} H^2_i$ in the last. This degradation is expected, as each \master failure leads to some loss of updated information and additional time to recover. Nevertheless, it is noteworthy that convergence is still guaranteed despite these interruptions.



\section{Experiments}

In this section, we validate our theoretical results through empirical experiments, which include the following:
Section \ref{exp-topologu} verifies the impact of network graph topology on the convergence rate.
Section \ref{exp-noniid} explores the impact of data heterogeneity on the convergence rate in two different graph topologies with small and large diameters.
Section \ref{exp-comm} evaluates the communication efficiency of Multi-Walk in bandwidth-constrained environments through an LLM fine-tuning task. 
Finally, Section \ref{exp_res} investigates the impact of \master failure on \ours to verify its resilience. We also compare this with \agos, observing its performance when the same sequence of nodes fails.

We use two machine learning tasks: (i) \emph{Image classification} on CIFAR-$10$ \cite{cifar10} using ResNet-$20$ \cite{He2015resnet}; and (ii) \emph{LLM fine-tuning} of OPT-$125$M \citep{OPT} as a large language model on the Multi-Genre Natural Language Inference (MultiNLI) corpus \citep{multi-nli}.
The details of the image classification and LLM fine-tuning tasks are specified in Table \ref{table:imagenet_settings} and \ref{table:llm_settings}, respectively.

We repeat each experiment $10$ times and present the error bars associated with the randomness of the optimization. In every figure, we include the average and standard deviation error bars. 
In the figures, we use ``MW'' as an abbreviation for \ours.

\begin{table}[hbt]
\centering
\caption{Default experimental settings for the image classification training}
\label{table:imagenet_settings}
\resizebox{\textwidth}{!}{%
\begin{tabular}{ll}
\toprule
\textbf{Dataset} & CIFAR-$10$ \citep{cifar10}, licensed under the MIT License \\
\midrule
\textbf{Architecture} & ResNet-$20$ \cite{He2015resnet}, licensed under the MIT License \\ 
\textbf{Loss function} & cross entropy \\ 
\textbf{Accuracy objective} & top-1 accuracy \\ 
\midrule
\textbf{Number of nodes} & 20 \\ 
\textbf{Topology} & cycle, complete, Erdős–Rényi \\ 
\textbf{Data distribution} & iid (shuffled and split), non-iid (based on labels) \\ 
\textbf{Local Steps $\tau$} & 5 \\
\midrule
\textbf{Optimizer} & SGD with momentum\\
\textbf{Batch size} & 32 per client \\ 
\textbf{Momentum} & 0.9 (Nesterov) \\ 
\textbf{Initial learning rate} & 0.05\\ 
\textbf{Learning rate schedule} & multiplied by $0.1$ once after $75$ and once after $90$ percent of the training\\ 
\textbf{Training time} & $15$ minutes for $\alpha \in \{10,1\}$ and $30$ minutes for $\alpha = 0.1$ \\ 
\textbf{Weight decay} & $10^{-4}$ \\ 
\textbf{Learning rate warm-up time} & $2$ minutes \\ 
\midrule
\textbf{Repetitions} & 10\\ 
\textbf{Reported metric} & Mean and standard deviation (1-sigma error bars) of the aggregated model's \\
& training loss and accuracy, accounting for randomness in network conditions \\
& and algorithmic factors such as random walk-based next node selection \\
& and neighbor selection in gossip-based averaging.\\
\bottomrule
\end{tabular}%
}
\end{table}

\begin{table}[h!]
\centering
\caption{Default experimental settings for the large language model fine-tuning}
\label{table:llm_settings}
\resizebox{\textwidth}{!}{%
\begin{tabular}{ll}
\toprule
\textbf{Dataset} & Multi-Genre Natural Language Inference (MultiNLI) corpus \cite{multi-nli}, \\
& released under the CC BY-SA 4.0 License \\ 
\midrule
\textbf{Architecture} & OPT-$125$M \cite{OPT}, released by Meta AI under the OPT License \\
& (a custom non-commercial research license) \\ 
\textbf{Loss function} & cross entropy \\ 
\midrule
\textbf{Number of nodes} & 20 \\ 
\textbf{Topology} & Erdős–Rényi \\ 
\textbf{Data distribution} & iid (shuffled and split), non-iid (based on genre) \\ 
\textbf{Local Steps $\tau$} & 1 \\
\midrule
\textbf{Optimizer} & Adam \\
\textbf{Batch size} & 16 sentences per client \\ 
\textbf{Adam $\beta_1$} & 0.9 \\
\textbf{Adam $\beta_2$} & 0.999 \\
\textbf{Adam $\epsilon$} & $10^{-8}$ \\
\textbf{Initial learning rate} & $10^{-4}$\\ 
\textbf{Learning rate schedule} & multiplied by $0.1$ once after $75$ and once after $90$ percent of the training\\ 
\textbf{Training time} & $15$ minutes \\ 
\textbf{Weight decay} & $10^{-4}$ \\ 
\textbf{Learning rate warm-up time} & $2$ minutes \\ 
\midrule
\textbf{Repetitions} & 10\\ 
\textbf{Reported metric} & Mean and standard deviation (1-sigma error bars) of the aggregated model's \\
& training loss and accuracy, accounting for randomness in network conditions \\
& and algorithmic factors such as random walk-based next node selection \\
& and neighbor selection in gossip-based averaging.\\ 
\bottomrule
\end{tabular}%
}
\vspace{-10pt}
\end{table}

\begin{figure}[tb]
\centering
\scalebox{0.83}{
\begin{tikzpicture}[
    node/.style={circle, draw, fill=black, inner sep=2pt},
    conn/.style={thick, color=blue!60!black},
    clusterBox/.style={draw=gray!60, fill=gray!10, rounded corners, inner sep=12pt}
]


\node[node] (ca1) at (0,0) {};
\node[node] (ca2) at (0.7,0.6) {};
\node[node] (ca3) at (1.4,0) {};
\node[node] (ca4) at (0.7,-0.6) {};

\node[node] (nw1) at (4,1.5) {};
\node[node] (nw2) at (4.7,2.1) {};
\node[node] (nw3) at (5.4,1.5) {};
\node[node] (nw4) at (4.7,0.9) {};

\node[node] (ia1) at (8,3) {};
\node[node] (ia2) at (8.7,3.6) {};
\node[node] (ia3) at (9.4,3) {};
\node[node] (ia4) at (8.7,2.4) {};

\node[node] (il1) at (11.5,1.5) {};
\node[node] (il2) at (12.2,2.1) {};
\node[node] (il3) at (12.9,1.5) {};
\node[node] (il4) at (12.2,0.9) {};

\node[node] (ks1) at (7,-2) {};
\node[node] (ks2) at (7.7,-1.4) {};
\node[node] (ks3) at (8.4,-2) {};
\node[node] (ks4) at (7.7,-2.6) {};

\begin{pgfonlayer}{background}
    \node[clusterBox, fit=(ca1)(ca2)(ca3)(ca4), label=below:{\textbf{CA}}] {};
    \node[clusterBox, fit=(nw1)(nw2)(nw3)(nw4), label=below:{\textbf{NV}}] {};
    \node[clusterBox, fit=(ia1)(ia2)(ia3)(ia4), label=above:{\textbf{IA}}] {};
    \node[clusterBox, fit=(il1)(il2)(il3)(il4), label=below:{\textbf{IL}}] {};
    \node[clusterBox, fit=(ks1)(ks2)(ks3)(ks4), label=below:{\textbf{KS}}] {};
\end{pgfonlayer}

\draw[conn] (ca1) -- (nw3);
\draw[conn] (ca2) -- (ks4);
\draw[conn] (nw2) -- (ia3);
\draw[conn] (ia2) -- (il1);
\draw[conn] (il3) -- (ks2);
\draw[conn] (ca4) -- (ia1);
\draw[conn] (nw4) -- (il4);
\draw[conn] (ks1) -- (ia4);
\draw[conn] (ks3) -- (nw1);
\draw[conn] (ca3) -- (il2);

\draw[conn] (ca1) -- (ca2);
\draw[conn] (ca3) -- (ca4);

\draw[conn] (nw1) -- (nw4);
\draw[conn] (nw2) -- (nw3);

\draw[conn] (ia1) -- (ia2);
\draw[conn] (ia3) -- (ia4);

\draw[conn] (il1) -- (il4);
\draw[conn] (il2) -- (il3);

\draw[conn] (ks1) -- (ks2);
\draw[conn] (ks3) -- (ks4);

\end{tikzpicture}
}
\caption{A $20$-node decentralized system distributed across five geographic clusters (CA, NV, IA, IL, KS). The links depict the overlay network, which is configurable based on the desired graph topology.}
\label{fig:clustered-network}
\end{figure}

\begin{figure*}[tb]
     \centering
     \begin{subfigure}[b]{0.329\textwidth}
         \centering
         \includegraphics[width=1.025\textwidth]{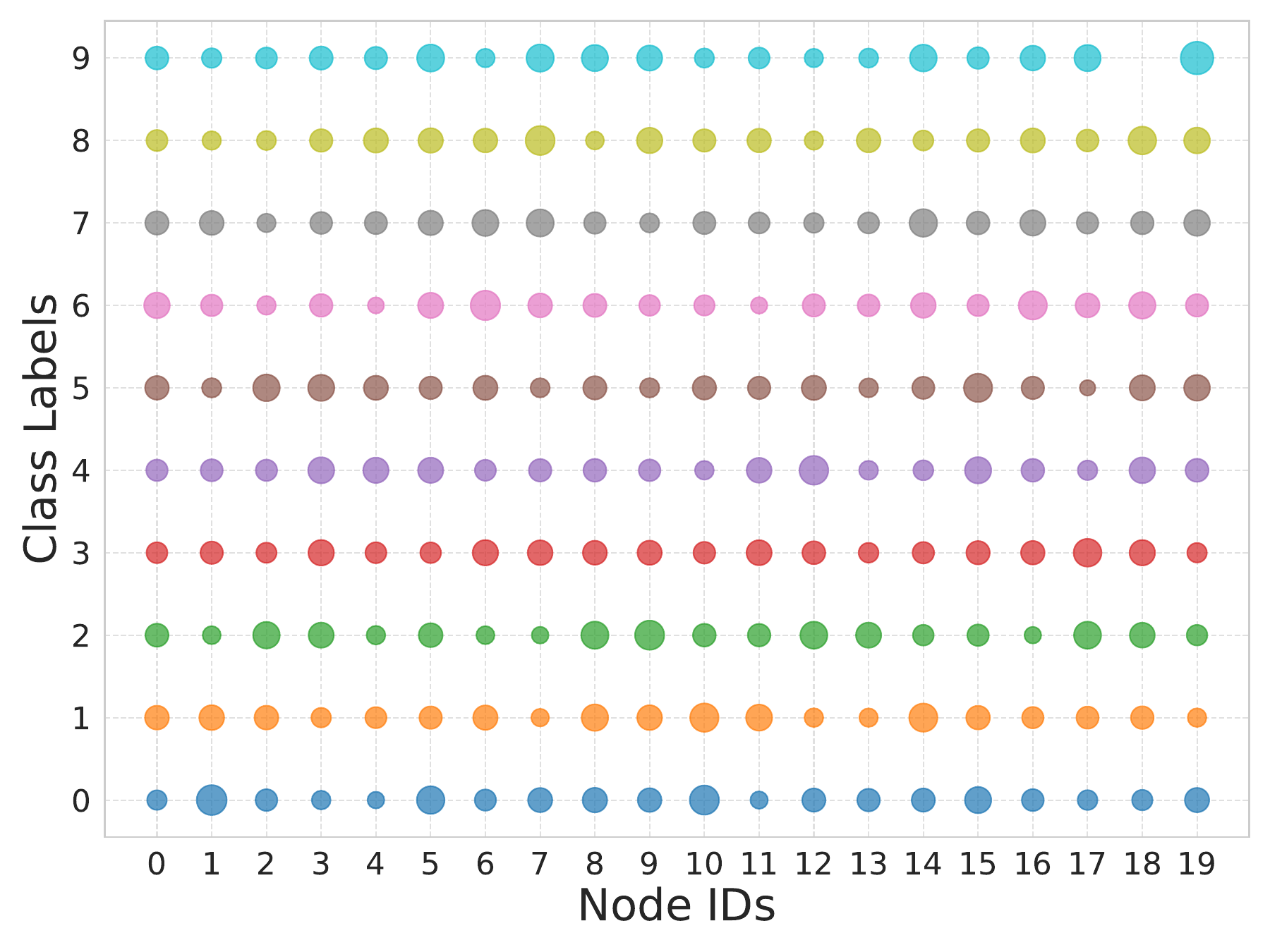}
         \caption{$\alpha = 10$.}
         \label{10}
     \end{subfigure}
     \begin{subfigure}[b]{0.329\textwidth}
         \centering
        \includegraphics[width=1.025\textwidth]{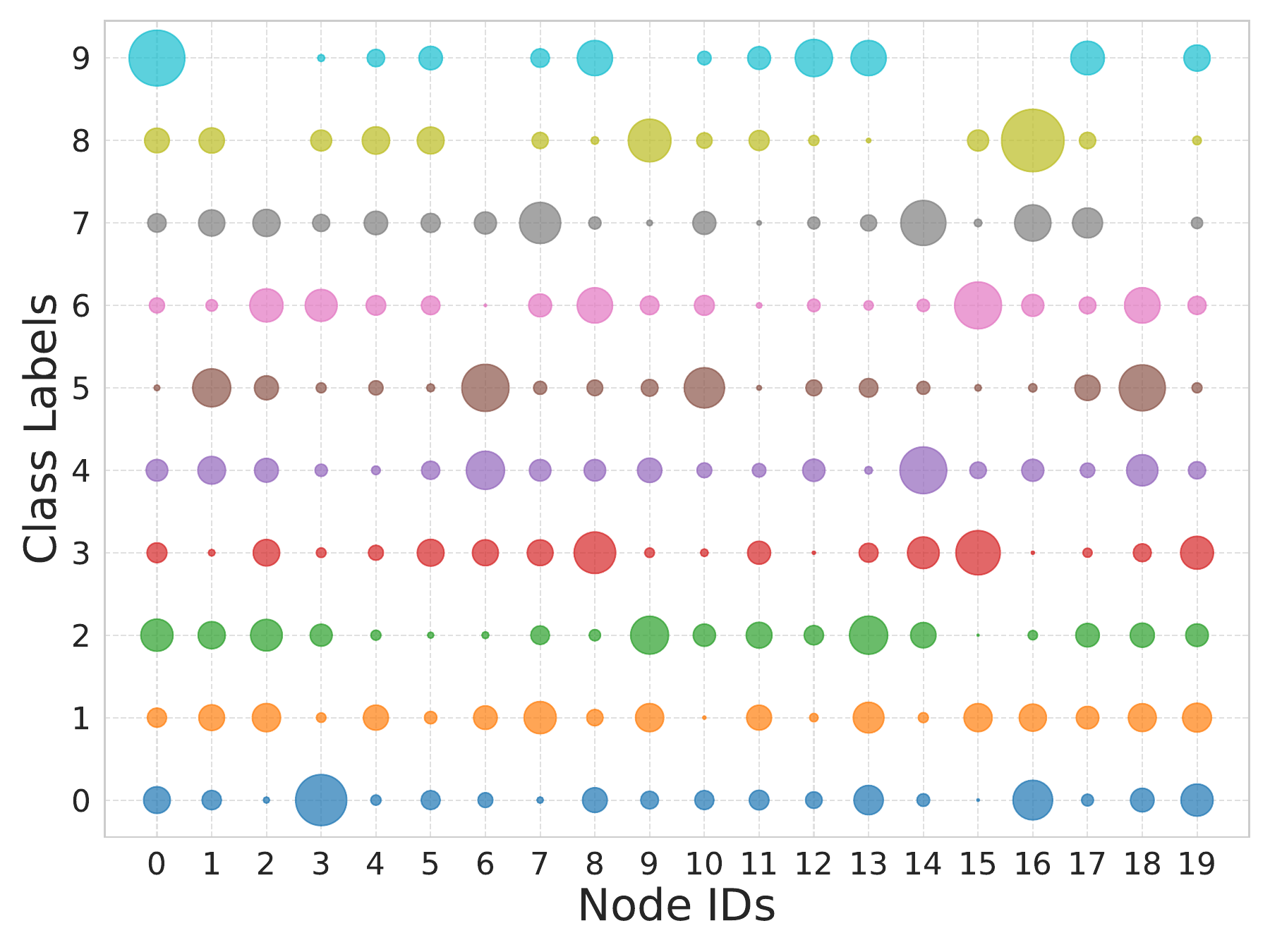}
         \caption{$\alpha = 1$.}
         \label{1}
     \end{subfigure}
     \begin{subfigure}[b]{0.329\textwidth}
         \centering
        \includegraphics[width=1.025\textwidth]{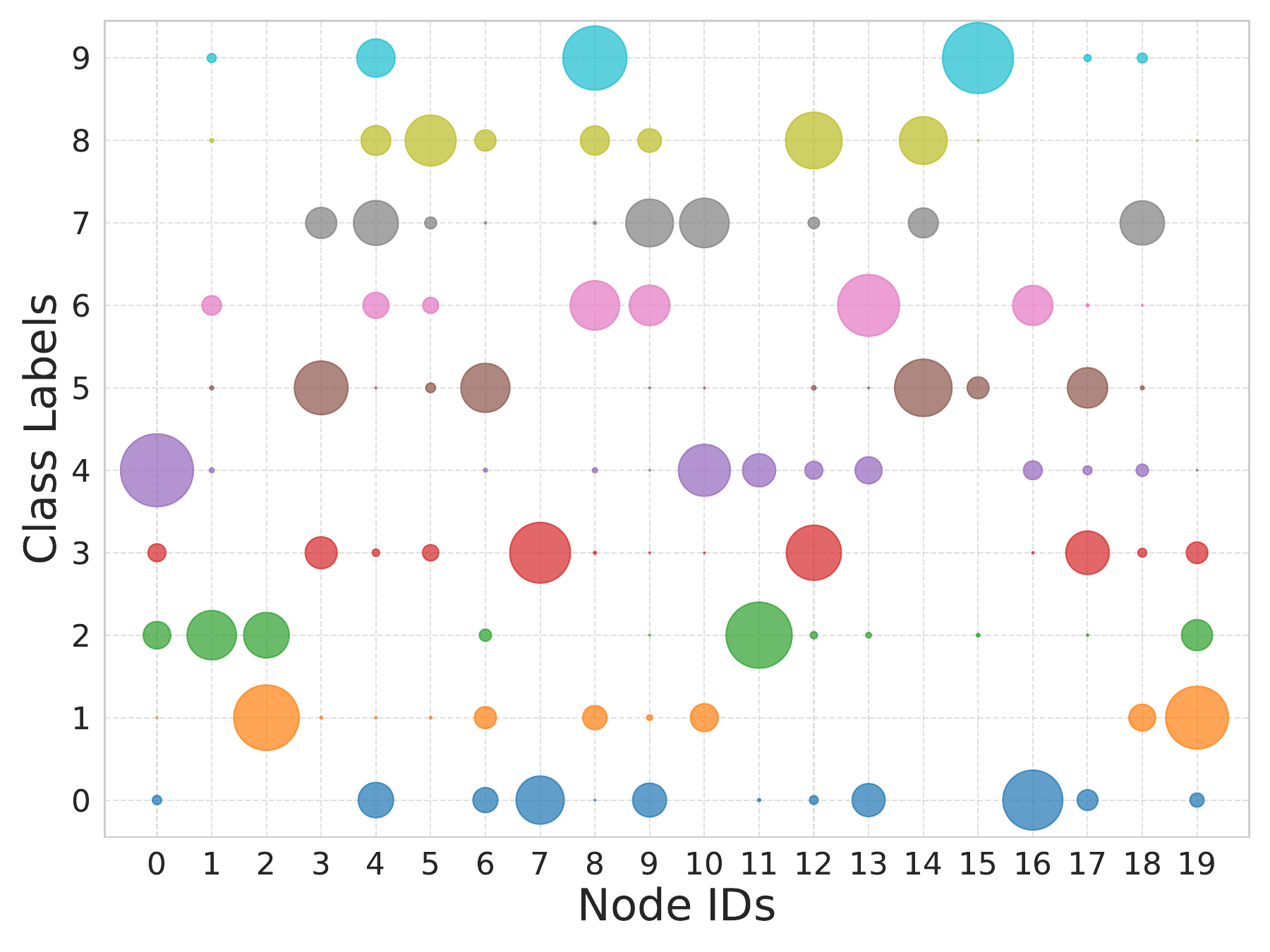}
         \caption{$\alpha = 0.1$.}
         \label{0.1}
     \end{subfigure}
        \caption{Data heterogeneity visualization for CIFAR-10 across a 20-node network using Dirichlet distributions with varying parameter $\alpha$.}
        \label{dirichlet}
\end{figure*}

We have conducted the experiments on
the National Resource Platform (NRP) \cite{NRP_UserGuide} cluster.
Figure~\ref{fig:clustered-network} provides a schematic representation of the network topology and node distribution used in our experiments. The setup consists of 20 nodes grouped into 5 geographic clusters labeled CA, NV, IA, IL, and KS, corresponding to the US states of California, Nevada, Iowa, Illinois, and Kansas, respectively. Within each cluster, nodes are connected locally, while additional links enable communication across clusters, implementing decentralized computation and communication patterns.
As the NRP dynamically assigns resources for each run, the exact node distribution may differ slightly from what is shown in Figure~\ref{fig:clustered-network}. This variability in node assignment is one of the sources of randomness in network conditions, which we account for in our results by reporting error bars.
All nodes are provisioned with $1$ GPU each, along with $10$ CPU cores and $80$ GiB of memory. Additionally, each node mounts a $13$ GiB in-memory volume for high-performance shared memory usage.
The GPU type (e.g., A100, V100, etc.) is determined based on node and cluster assignments made by the National Resource Platform (NRP), which matches resource requests to suitable hardware across participating sites. This assignment introduces an element of randomness into our experiments, as the exact GPU model may vary between runs depending on resource availability.
Most nodes are connected via high-speed research networks such as Science DMZs, with interconnect speeds ranging from 10G to 100G. This setup reflects a realistic decentralized learning environment over a wide-area network and introduces practical considerations like heterogeneous latency and bandwidth, which are difficult to model in simulation.

We use the Dirichlet distribution to create disjoint non-iid nodes \cite{pmlr-v139-lin21c}. The degree of data heterogeneity is controlled by the distribution parameter $\alpha$; the smaller $\alpha$ is, the more likely the nodes hold examples from only one class.
Throughout the experiments, we use three levels of $\alpha$; $10$, $1$, and $0.1$. 

In Figure \ref{dirichlet}, we include the effect of different values of $\alpha$ in creating disjoint noniid data from CIFAR-10 across nodes using the Dirichlet distribution. We observe that as $\alpha$ decreases, the probability of each node containing data from only one class increases.

\subsection{Graph topology} \label{exp-topologu}

\begin{figure*}[tb]
     \centering
     \begin{subfigure}[b]{1\textwidth}
         \centering
         \includegraphics[width=1\textwidth]{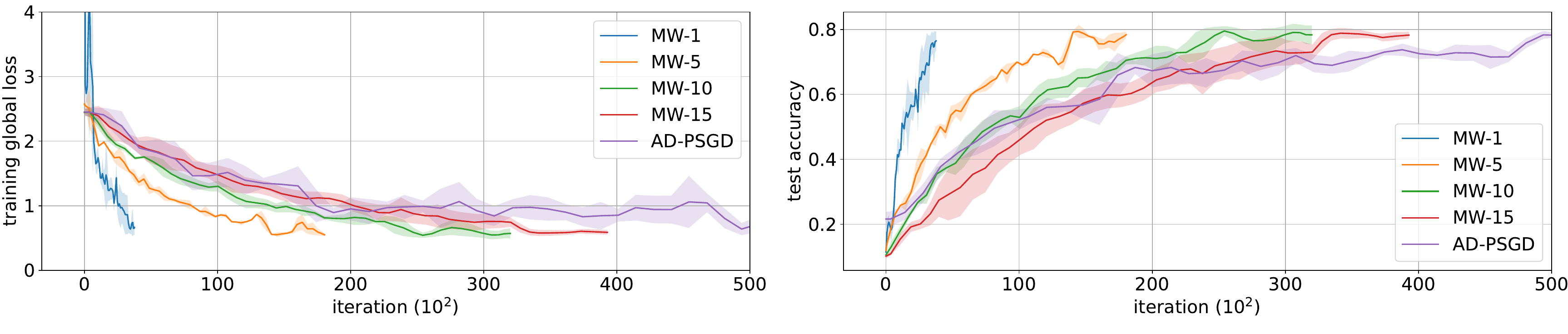}
         \caption{Cycle graph.}
         \label{cycle}
     \end{subfigure}
     \begin{subfigure}[b]{1\textwidth}
         \centering
     \includegraphics[width=1\textwidth]{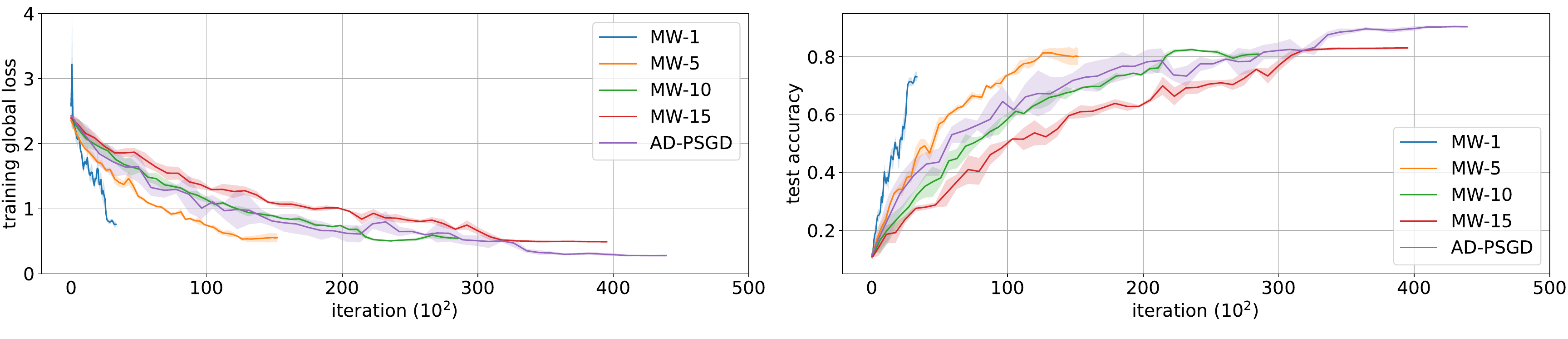}
         \caption{Complete graph.}
         \label{complete}
     \end{subfigure}
     \begin{subfigure}[b]{1\textwidth}
         \centering
        \includegraphics[width=1\textwidth]{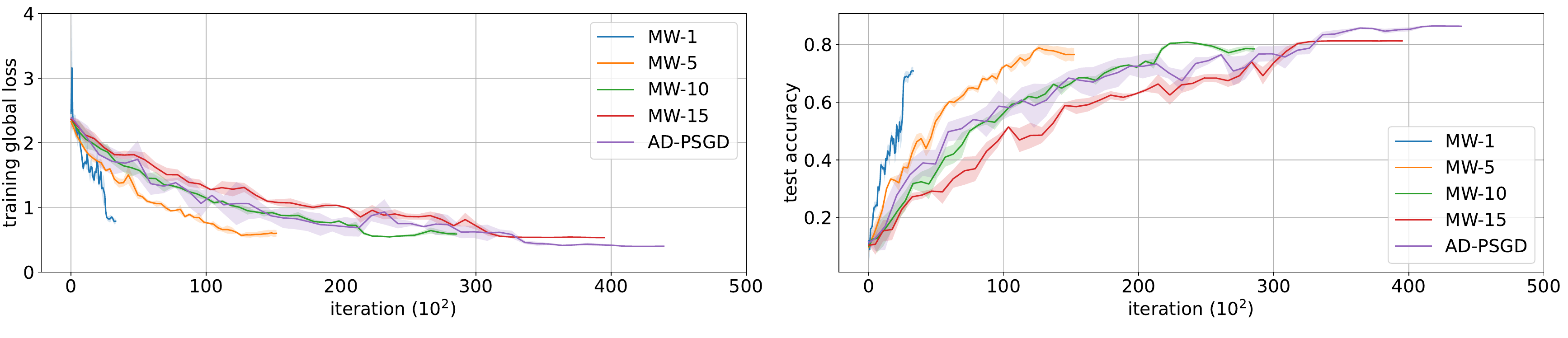}
         \caption{Erdős–Rényi ($0.3$) graph.}
         \label{erdos}
     \end{subfigure}
        \caption{Comparison across different network topologies for a $20$-node graph: Training loss (left column) and test accuracy (right column) of ResNet-$20$ on CIFAR-$10$.}
        \label{topology}
\end{figure*}

Figure \ref{topology} presents the training loss (left column) and test accuracy (right column) of the image classification task in a graph of $20$ nodes. We consider three topologies of cycle, complete, and Erdős–Rényi with connection probability of each pair of nodes being $0.3$.
The noniid-ness level for this experiment is set to $\alpha = 1$.
We observe in Figure \ref{cycle} that the convergence rate \wrt iterations in cycle topology is faster for \ours, regardless of the number of walks ($R$).
We also observe that as we decrease $R$, the convergence rate of \ours \wrt iterations improves.
These are consistent with the theoreticlal results derived in section \ref{conv} and shown Table \ref{iid_table} and \ref{noniid_table}. 
In the small-diameter topology shown in Figure \ref{complete} (a complete graph), we observe that \ours is no longer superior across all numbers of walks; specifically, \agos outperforms \ours when $15$ walks are used. This observation is consistent with the theoretical results indicating that in small-diameter graphs, there exists a specific threshold for the number of walks: below this threshold, Multi-Walk outperforms \agos, whereas above it, performance degrades.
Figure \ref{erdos} presents the results for an Erdős–Rényi topology with the connection probability of $0.3$. This topology, where each node is connected to every other node with a probability of $0.3$, is a well-connected graph with a small diameter. We observe that the Erdős–Rényi graph results are quite similar to the complete graph.

\subsection{Data heterogeneity (Noniid-ness)} \label{exp-noniid}
In this section, we present experiments to evaluate the impact of data heterogeneity on convergence behavior in small and large diameter graphs. We provide comparisons across three domains: iterations, wall-clock time, and transmitted bits.

\subsubsection{Convergence \wrt iterations}

\begin{figure*}[tb]
     \centering
     \begin{subfigure}[b]{0.49\textwidth}
         \centering
         \includegraphics[width=1\textwidth]{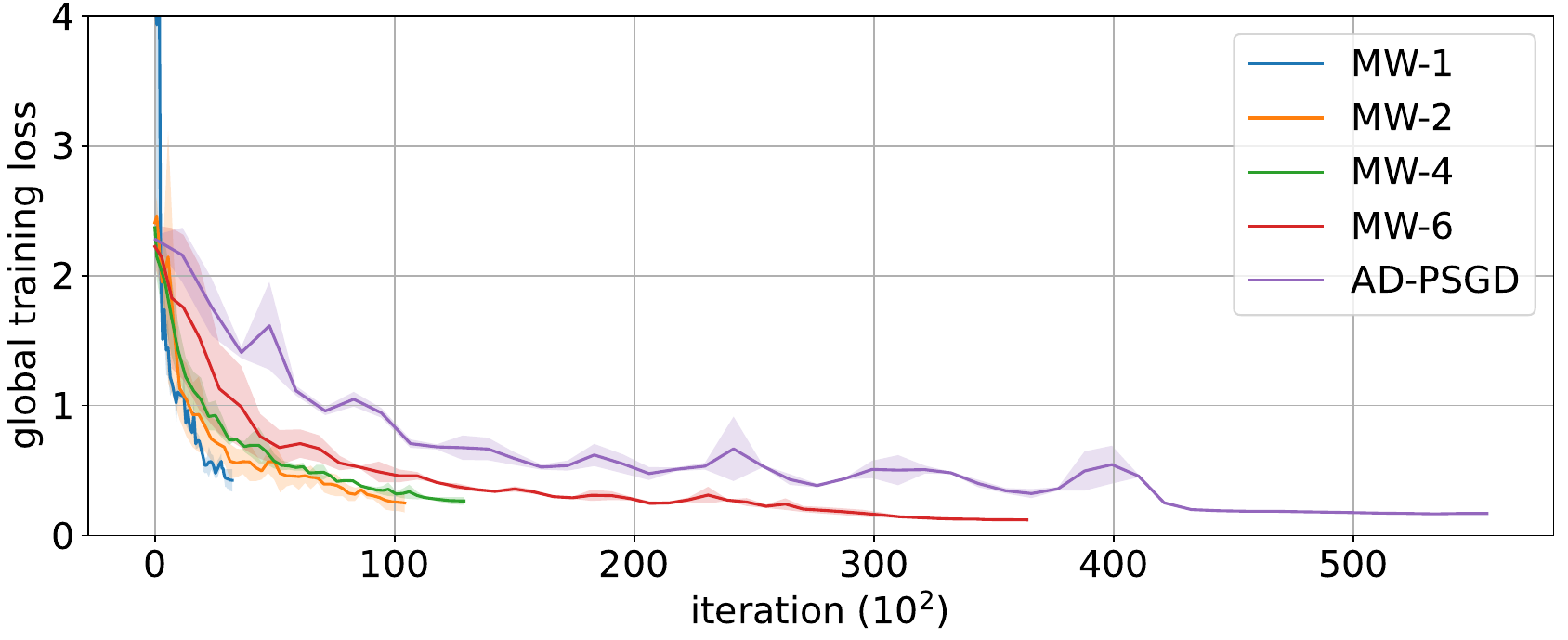}
         \caption{$\alpha=10$, Erdős–Rényi ($0.3$) graph.}
         \label{10-iter-erdos}
     \end{subfigure}
     \vspace{8pt}
     \begin{subfigure}[b]{0.49\textwidth}
         \centering
         \includegraphics[width=1\textwidth]{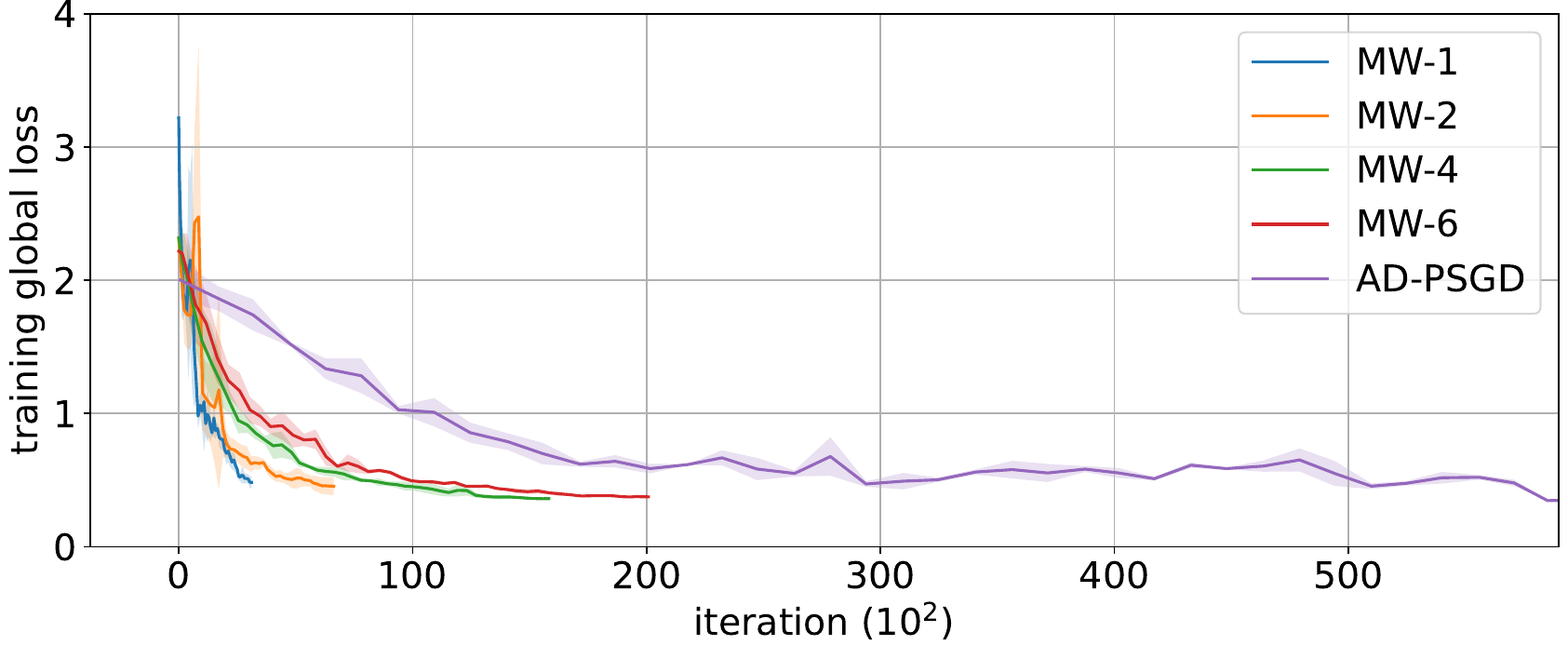}
         \caption{$\alpha=10$, cycle graph.}
         \label{10-iter-cycle}
     \end{subfigure}
     \vspace{8pt}
     \begin{subfigure}[b]{0.49\textwidth}
         \centering
        \includegraphics[width=1\textwidth]{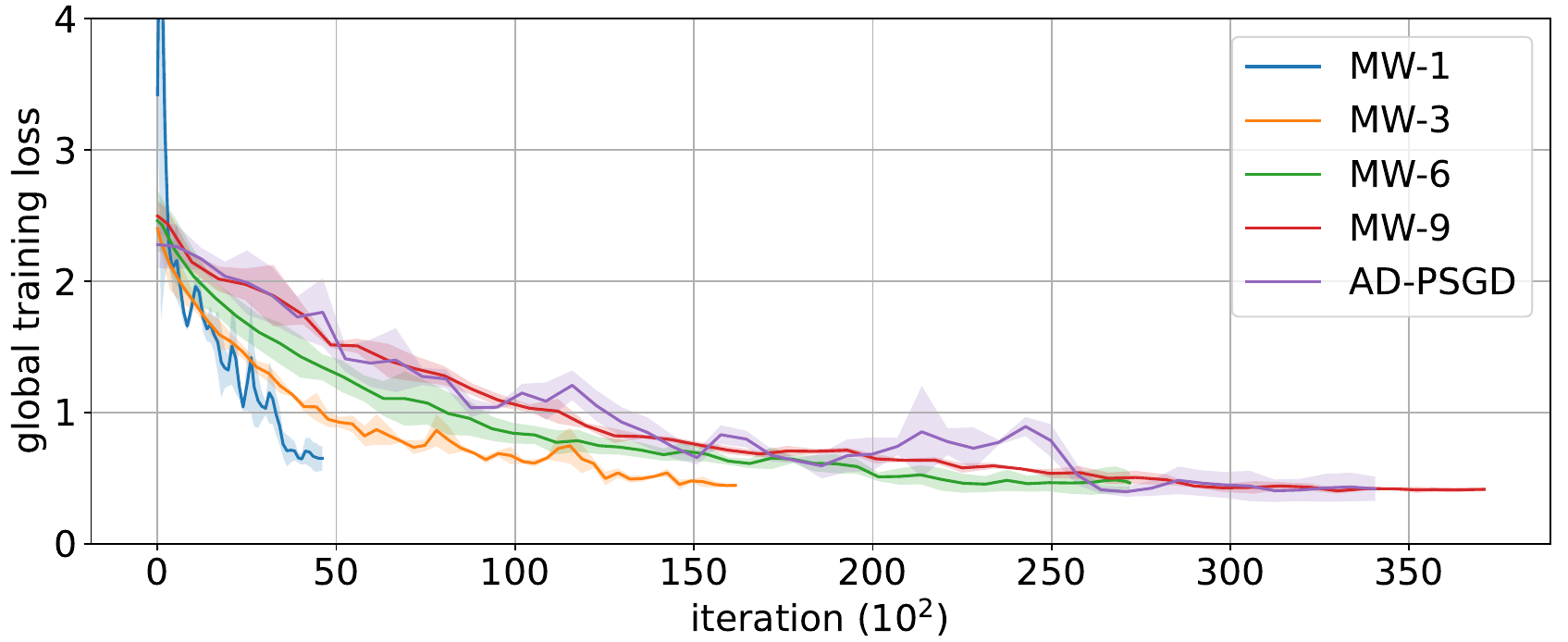}
         \caption{$\alpha=1$, Erdős–Rényi ($0.3$) graph.}
         \label{1-iter-erdos}
     \end{subfigure}
     \begin{subfigure}[b]{0.49\textwidth}
         \centering
        \includegraphics[width=1\textwidth]{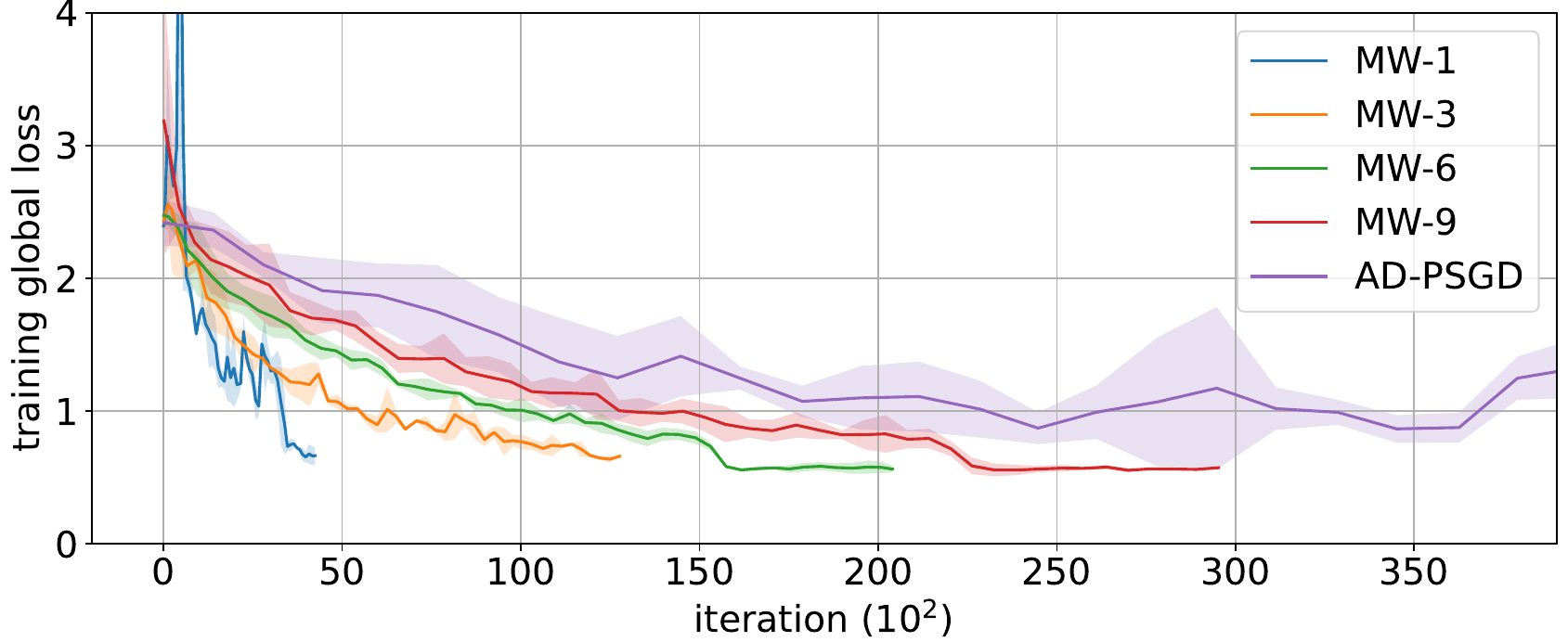}
         \caption{$\alpha=1$, cycle graph.}
         \label{1-iter-cycle}
     \end{subfigure}
     \begin{subfigure}[b]{0.49\textwidth}
         \centering
        \includegraphics[width=1\textwidth]{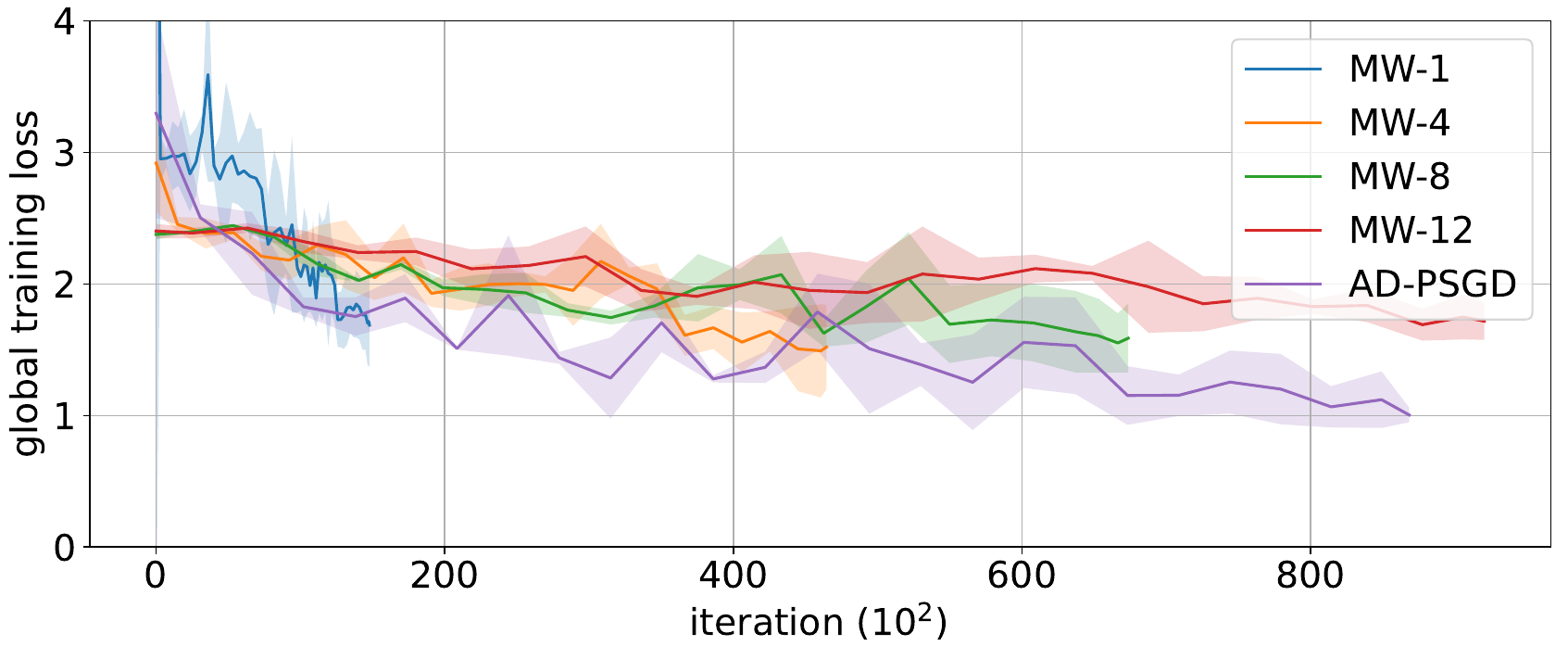}
         \caption{$\alpha=0.1$, Erdős–Rényi ($0.3$) graph.}
         \label{0.1-iter-erdos}
     \end{subfigure}
     \begin{subfigure}[b]{0.49\textwidth}
         \centering
        \includegraphics[width=.98\textwidth]{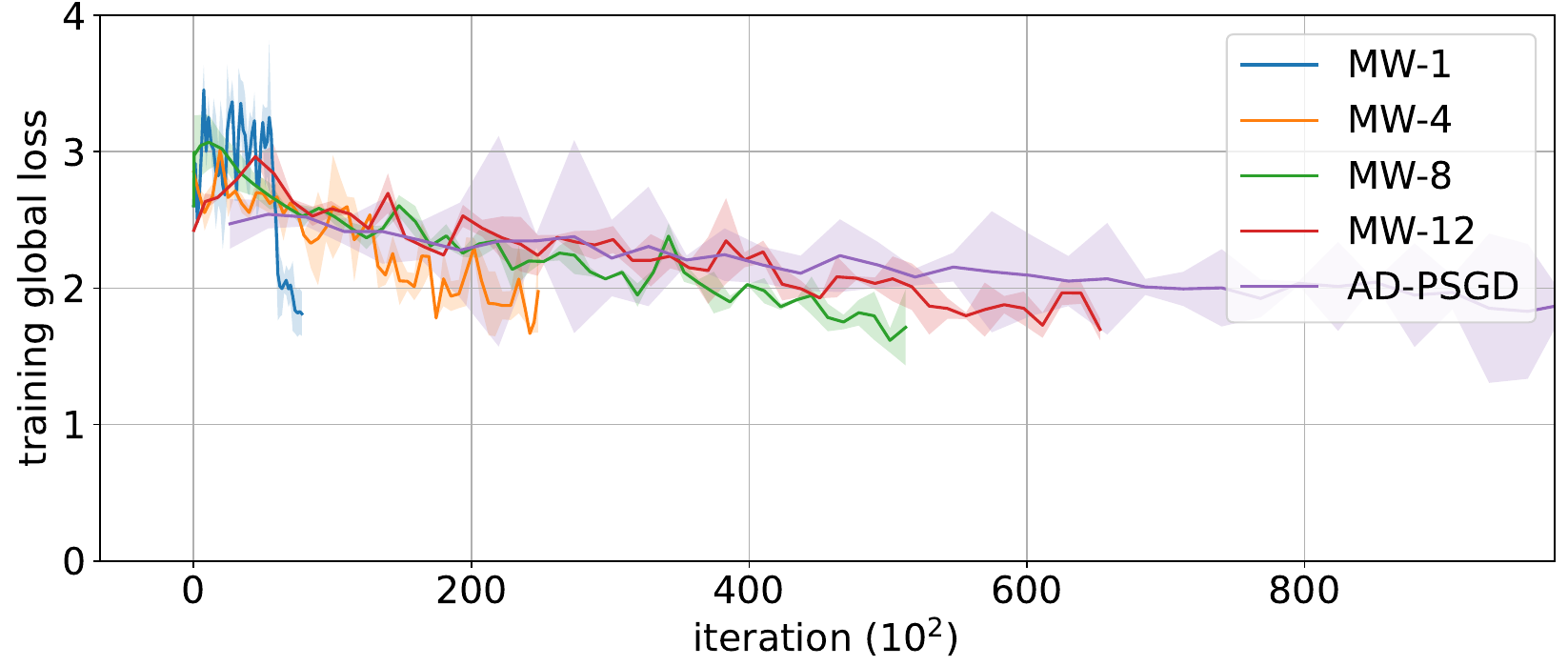}
         \caption{$\alpha=0.1$, cycle graph.}
         \label{0.1-iter-cycle}
     \end{subfigure}
        \caption{Comparison across different noniid-ness levels \textbf{\wrt iterations} for a $20$-node network with Erdős–Rényi ($0.3$) (left column) and cycle (right column) topology: Training loss for ResNet-$20$ on CIFAR-$10$.}
        \label{noniid-iter}
\end{figure*}

Figure \ref{noniid-iter} illustrates the convergence behavior of two different topologies over iterations under varying levels of data heterogeneity. The left column (subfigures d, c, e) corresponds to a 20-node Erdős–Rényi topology ($p=0.3$), while the right column (subfigures b, d, f) depicts a 20-node cycle topology.
We observed in section \ref{exp-topologu} that Erdős–Rényi ($0.3$) has a quite small diameter. 
In the first row (subfigures a, b), where $\alpha=10$ and the data distribution is nearly iid, \ours outperforms \agos in terms of iterations across both graph topologies. We further observe that in this iid scenario, the impact of graph topology is minimal compared to settings with higher data heterogeneity (shown in the second and third rows). Decreasing $\alpha$ to $1$ introduces a more noniid scenario, causing the performance of both \ours and \agos to degrade; however, even at this level of heterogeneity, \ours continues to outperform in both topologies. 
We can go further and reduce $\alpha$ to $0.1$ to get extreme non-iid data distribution (third row). 
Consistent with our theoretical results, \ours is no longer superior in small-diameter graphs under extreme noniid scenarios, as verified in Figure \ref{0.1-iter-erdos}. Conversely, in the cycle topology (characterized by a large diameter), \ours remains faster in terms of iterations.

This result is expected based on the theoretical bounds presented in Table \ref{noniid_table}. In the convergence rate of \ours, the heterogeneity term $\zeta^2$ is scaled by $H^2$, whereas in \agos, it is scaled by $1/p^2$.
In small-diameter graphs (e.g., complete graph), we have $H^2 = \mathcal{O}(V^2)$ and $p = 1$, meaning the impact of noniid data on \ours ($\mathcal{O}(V^2)$) is far more severe than on \agos ($\mathcal{O}(1)$). This confirms the degradation observed in our experiments.
On the other hand, in large-diameter graphs like the cycle topology, we have $H^2 = \mathcal{O}(V^3)$ and $p =\Theta(1/V^2)$ (implying $1/p^2 = \mathcal{O}(V^4)$). Consequently, the impact of heterogeneity scales better for \ours ($\mathcal{O}(V^3)$) compared to \agos ($\mathcal{O}(V^4)$), explaining why \ours retains its advantage in this setting.

\subsubsection{Convergence \wrt wall-clock time}

\begin{figure*}[tb]
    \centering
    \begin{subfigure}[b]{0.49\textwidth}
         \centering
         \includegraphics[width=1\textwidth]{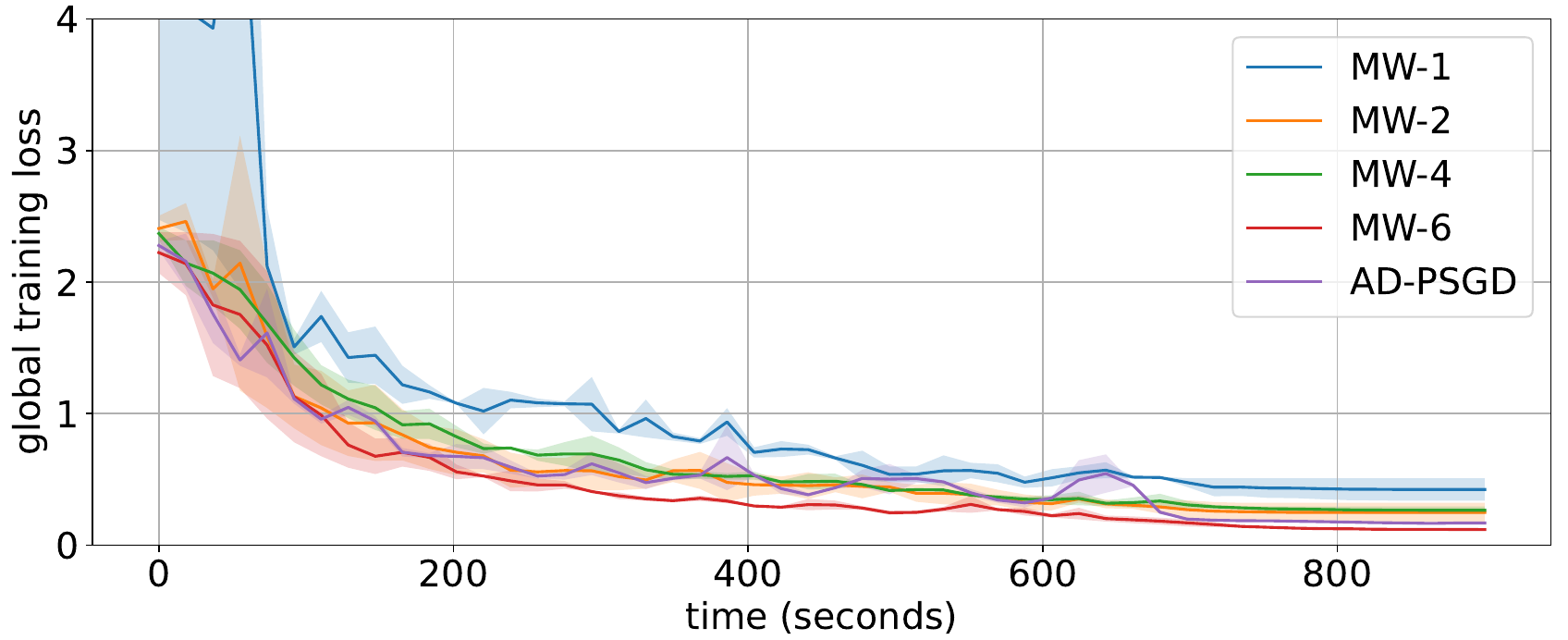}
         \caption{$\alpha=10$, Erdős–Rényi ($0.3$) graph.}
         \label{10-time-erdos}
     \end{subfigure}
     \vspace{7pt}
     \begin{subfigure}[b]{0.49\textwidth}
         \centering
         \includegraphics[width=.98\textwidth]{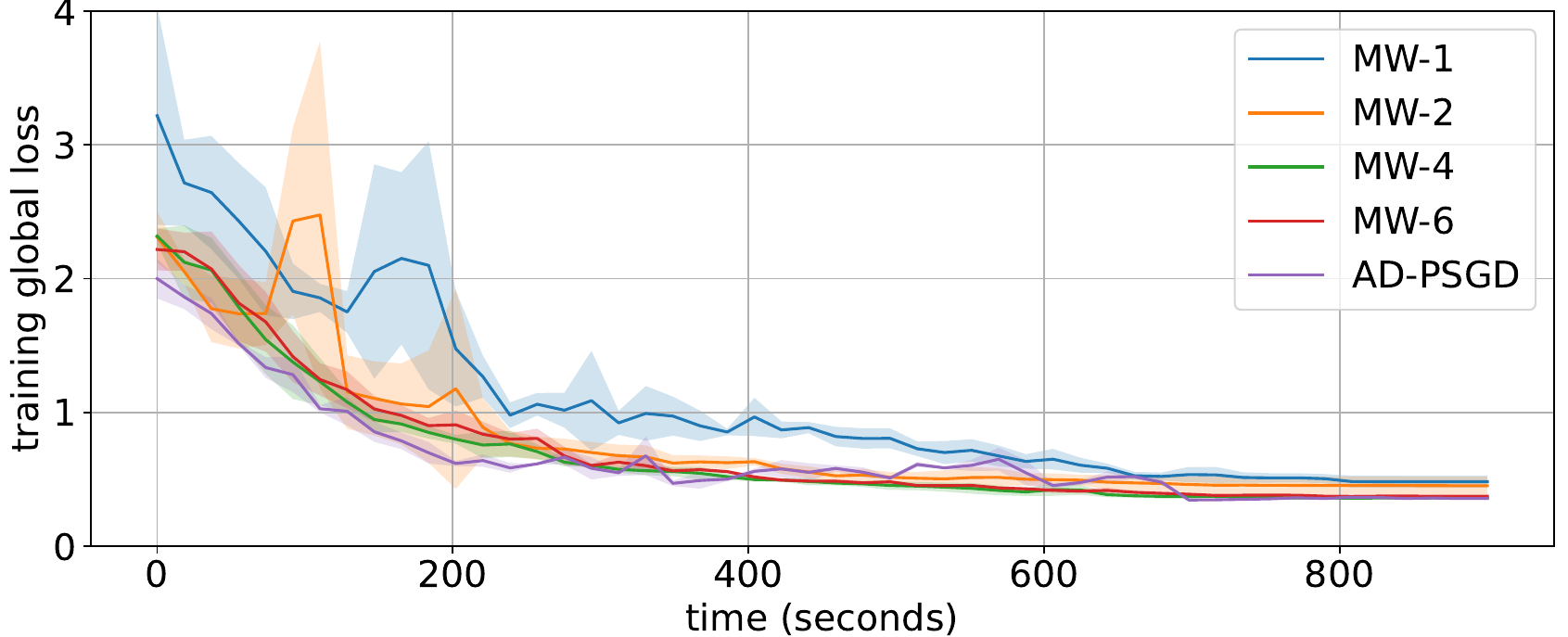}
         \caption{$\alpha=10$, cycle graph.}
         \label{10-time-cycle}
     \end{subfigure}
     \vspace{7pt}
     \begin{subfigure}[b]{0.49\textwidth}
         \centering
        \includegraphics[width=1\textwidth]{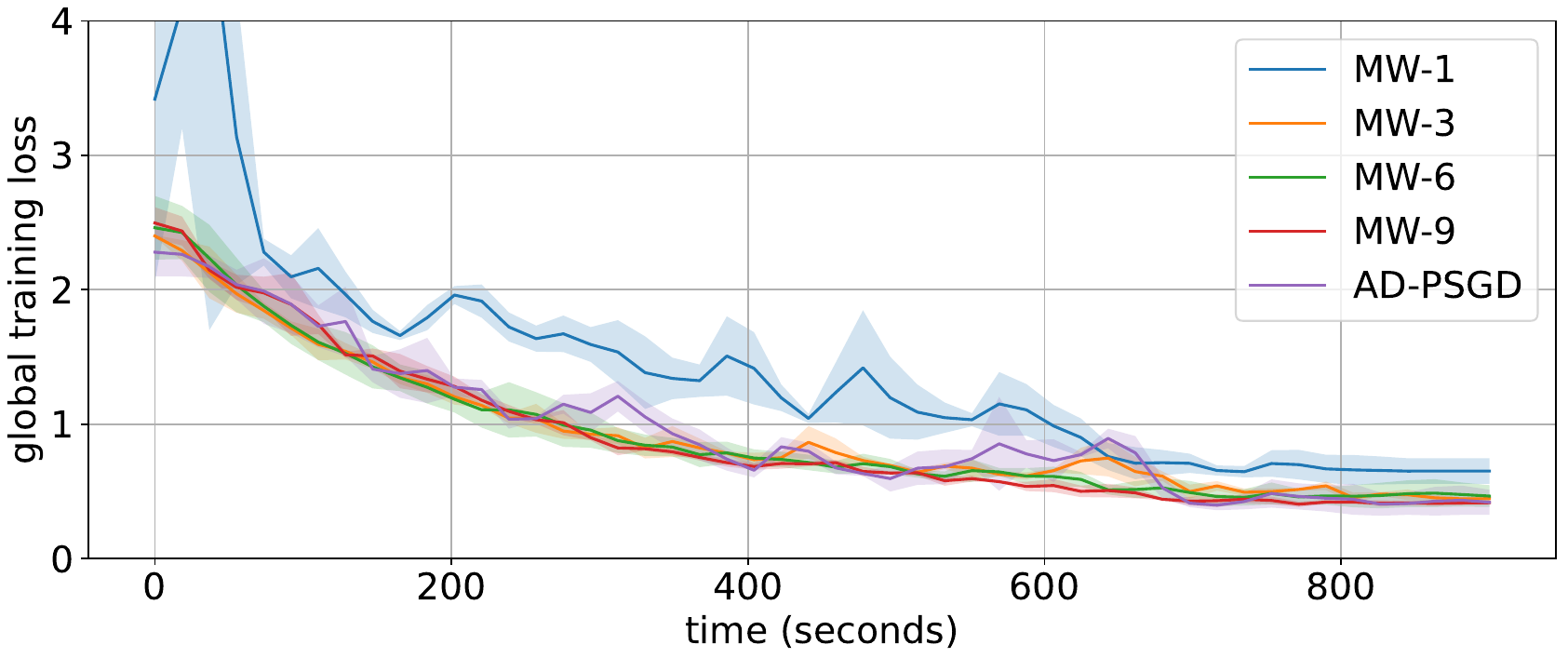}
         \caption{$\alpha=1$, Erdős–Rényi ($0.3$) graph.}
         \label{1-time-erdos}
     \end{subfigure}
     \begin{subfigure}[b]{0.49\textwidth}
         \centering
        \includegraphics[width=1\textwidth]{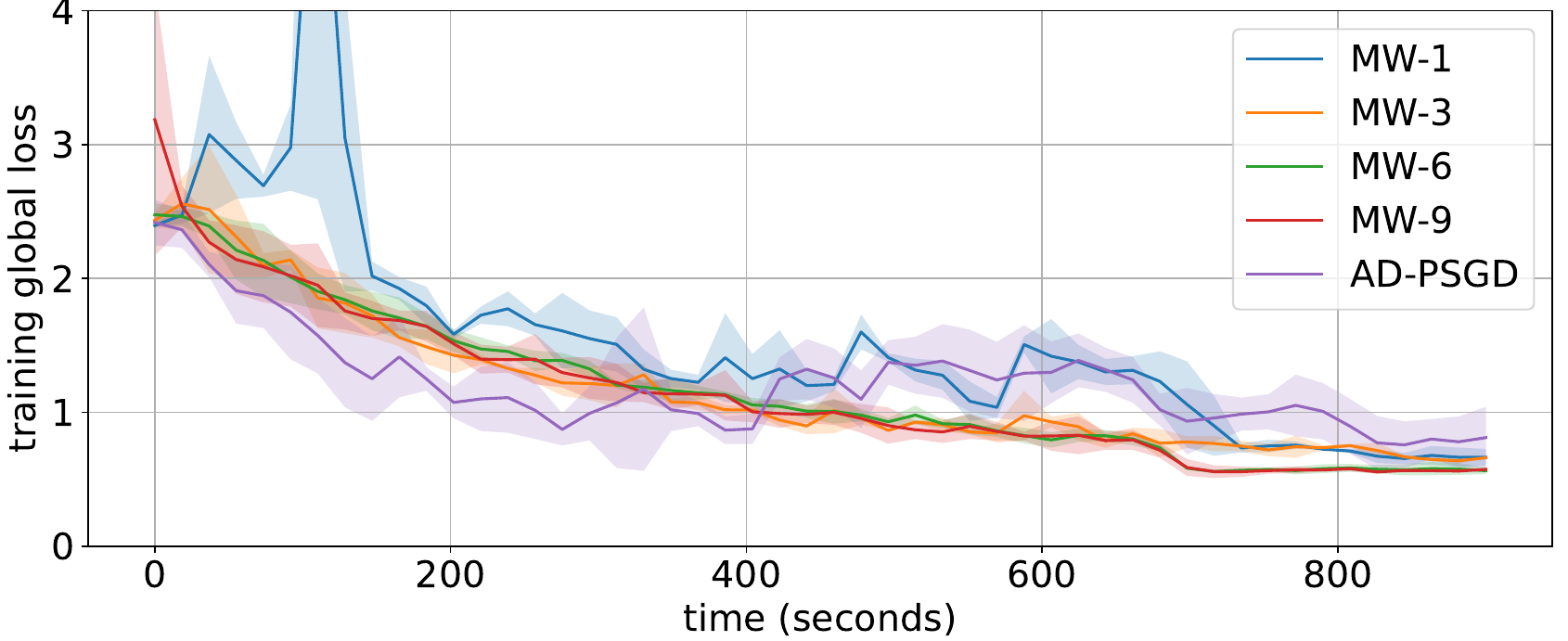}
         \caption{$\alpha=1$, cycle graph.}
         \label{1-time-cycle}
     \end{subfigure}
     \begin{subfigure}[b]{0.49\textwidth}
         \centering
        \includegraphics[width=1.025\textwidth]{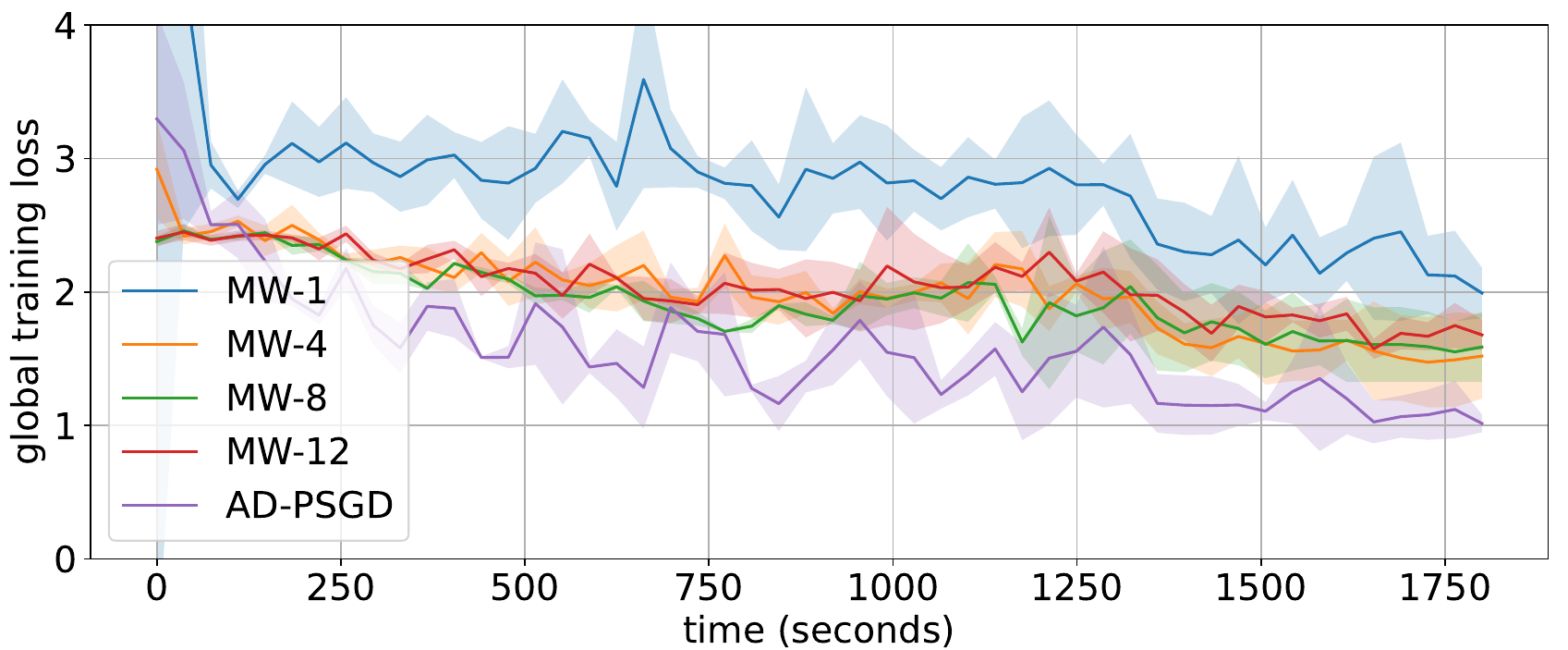}
         \caption{$\alpha=0.1$, Erdős–Rényi ($0.3$) graph.}
         \label{0.1-time-erdos}
     \end{subfigure}
     \begin{subfigure}[b]{0.49\textwidth}
         \centering
        \includegraphics[width=1.025\textwidth]{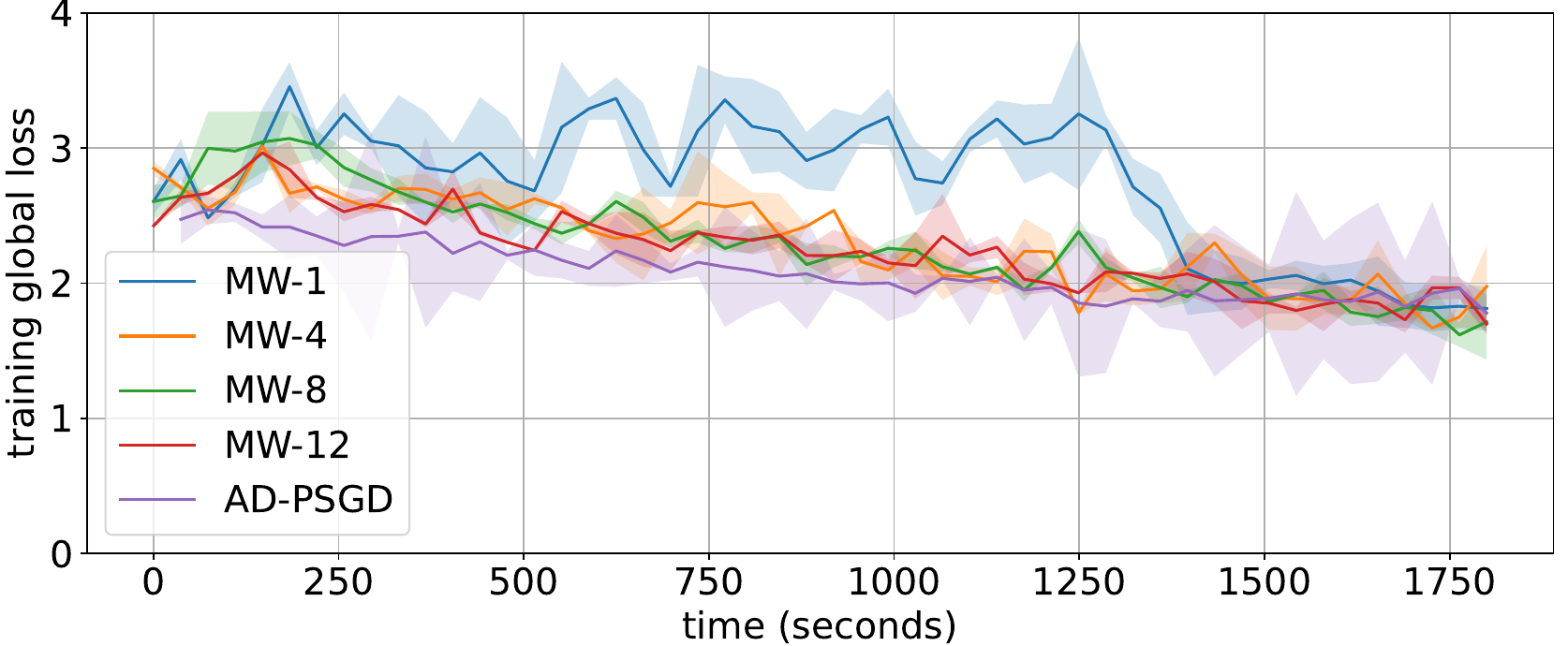}
         \caption{$\alpha=0.1$, cycle graph.}
         \label{0.1-time-cycle}
     \end{subfigure}
     \caption{Comparison across different noniid-ness levels \textbf{\wrt wall-clock time} for a $20$-node network with Erdős–Rényi ($0.3$) (left column) and cycle (right column) topology: Training loss for ResNet-$20$ on CIFAR-$10$.}
        \label{noniid-time}
\end{figure*}

Figure \ref{noniid-time} shows convergence versus wall-clock time.
We know that in the time domain, \agos achieves a linear speed-up with the number of nodes compared to \ours with a single walk. However, \ours can improve its time-domain performance by increasing the number of walks, yielding a linear speed-up with respect to the walk count.In the first and second rows, we observe that increasing the number of walks to 2 or 4 is sufficient to catch up with \agos. Conversely, as we increase heterogeneity to $\alpha = 0.1$, we observe that in the Erdős–Rényi ($0.3$) topology (which has a small diameter), the gap between \agos and \ours cannot be bridged even by increasing the number of walks. This reinforces the fact that \agos performs better in small-diameter graphs under extreme heterogeneity.

\subsubsection{Convergence \wrt transmitted bits}

\begin{figure*}
    \begin{subfigure}[b]{0.49\textwidth}
         \centering
         \includegraphics[width=1\textwidth]{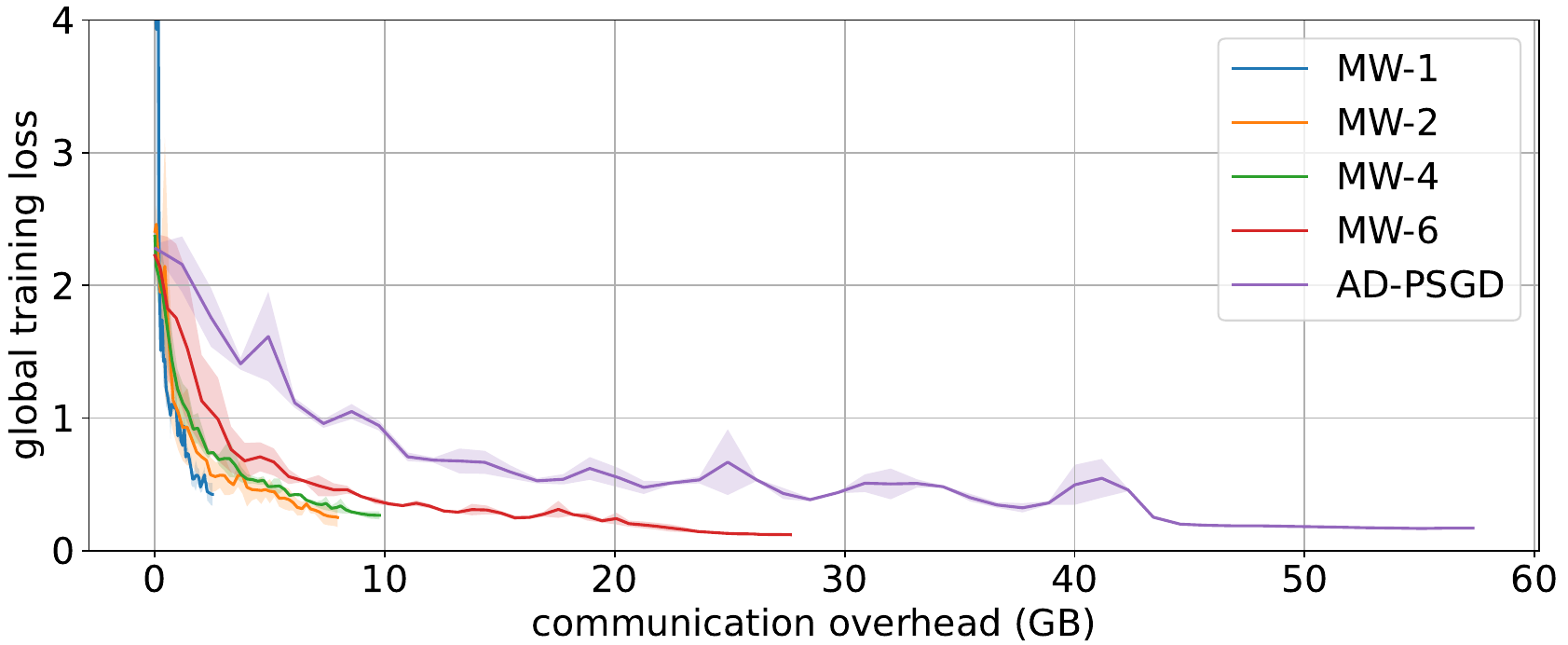}
         \caption{$\alpha=10$, Erdős–Rényi ($0.3$) graph.}
         \label{10-comm-erdos}
     \end{subfigure}
     \vspace{7pt}
     \begin{subfigure}[b]{0.49\textwidth}
         \centering
         \includegraphics[width=1\textwidth]{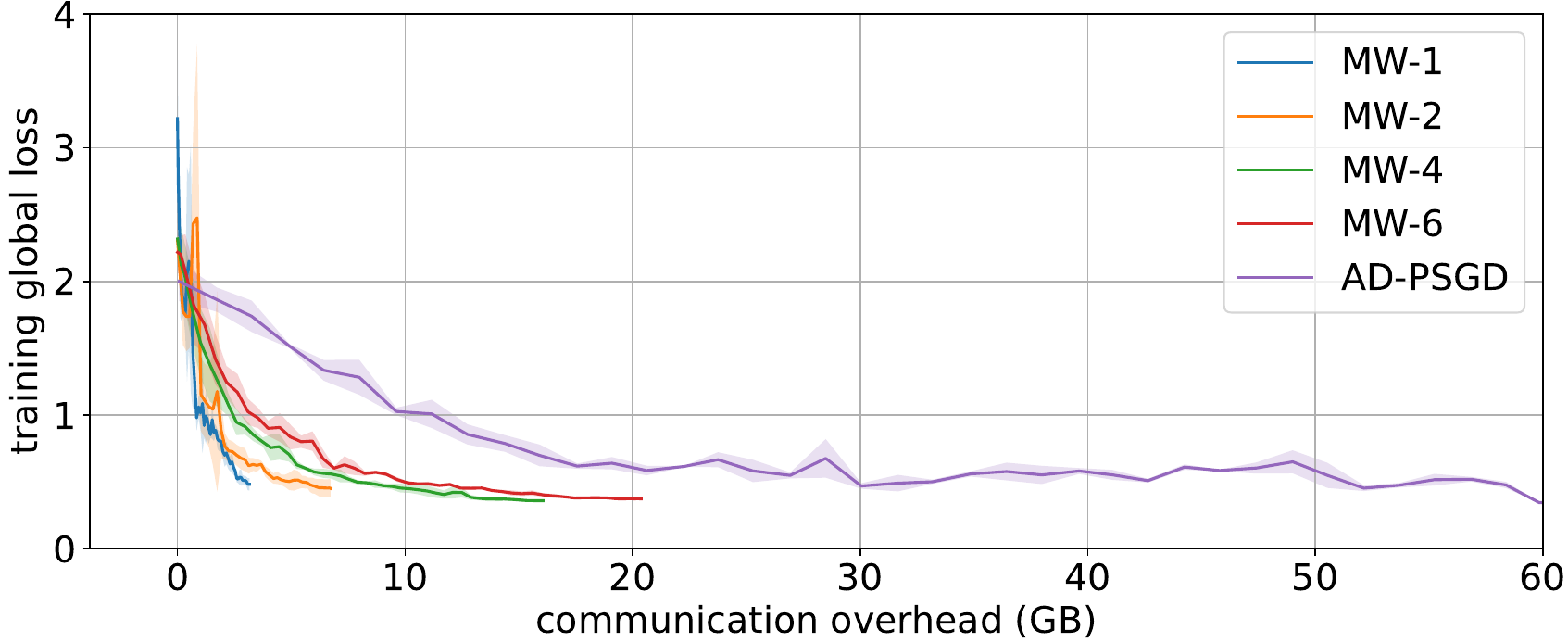}
         \caption{$\alpha=10$, cycle graph}
         \label{10-comm-cycle}
     \end{subfigure}
     \vspace{7pt}
     \begin{subfigure}[b]{0.49\textwidth}
         \centering
        \includegraphics[width=1\textwidth]{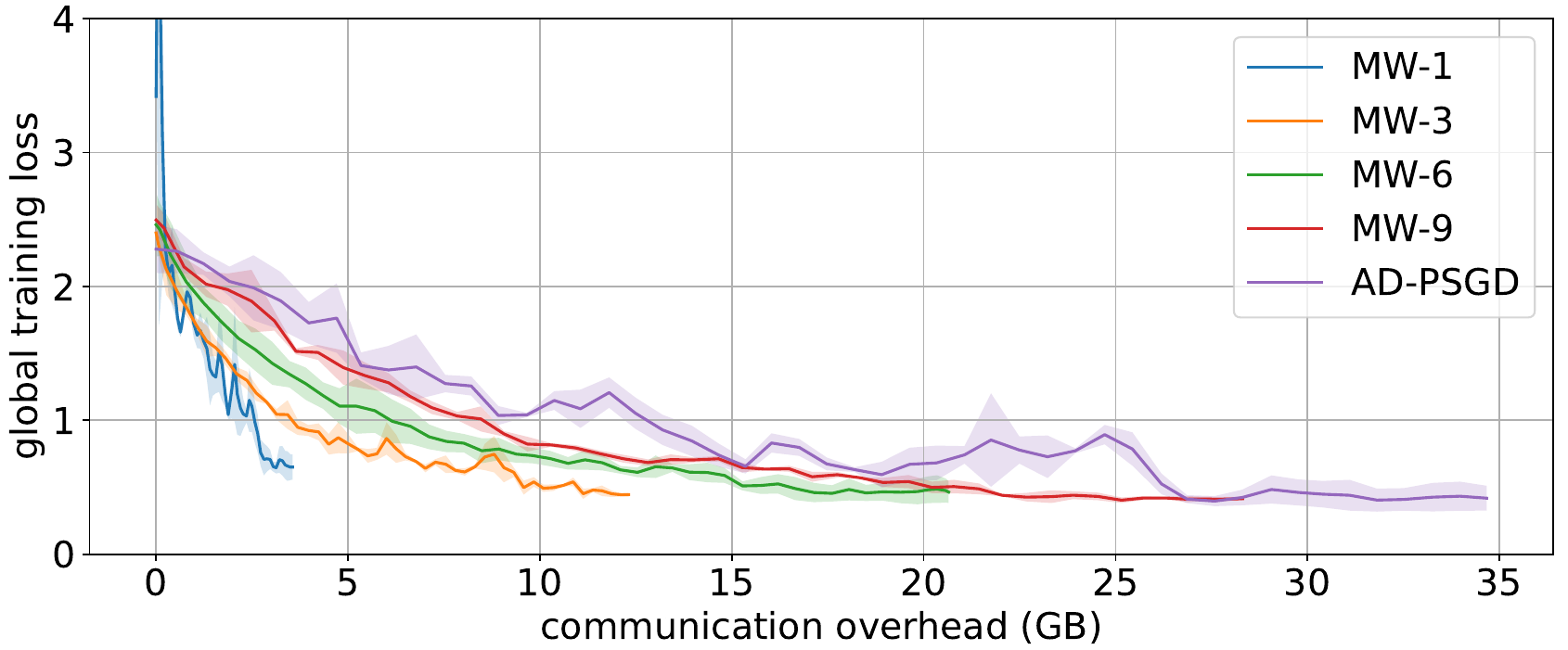}
         \caption{$\alpha=1$, Erdős–Rényi ($0.3$) graph.}
         \label{1-comm-erdos}
     \end{subfigure}
     \begin{subfigure}[b]{0.49\textwidth}
         \centering
        \includegraphics[width=1\textwidth]{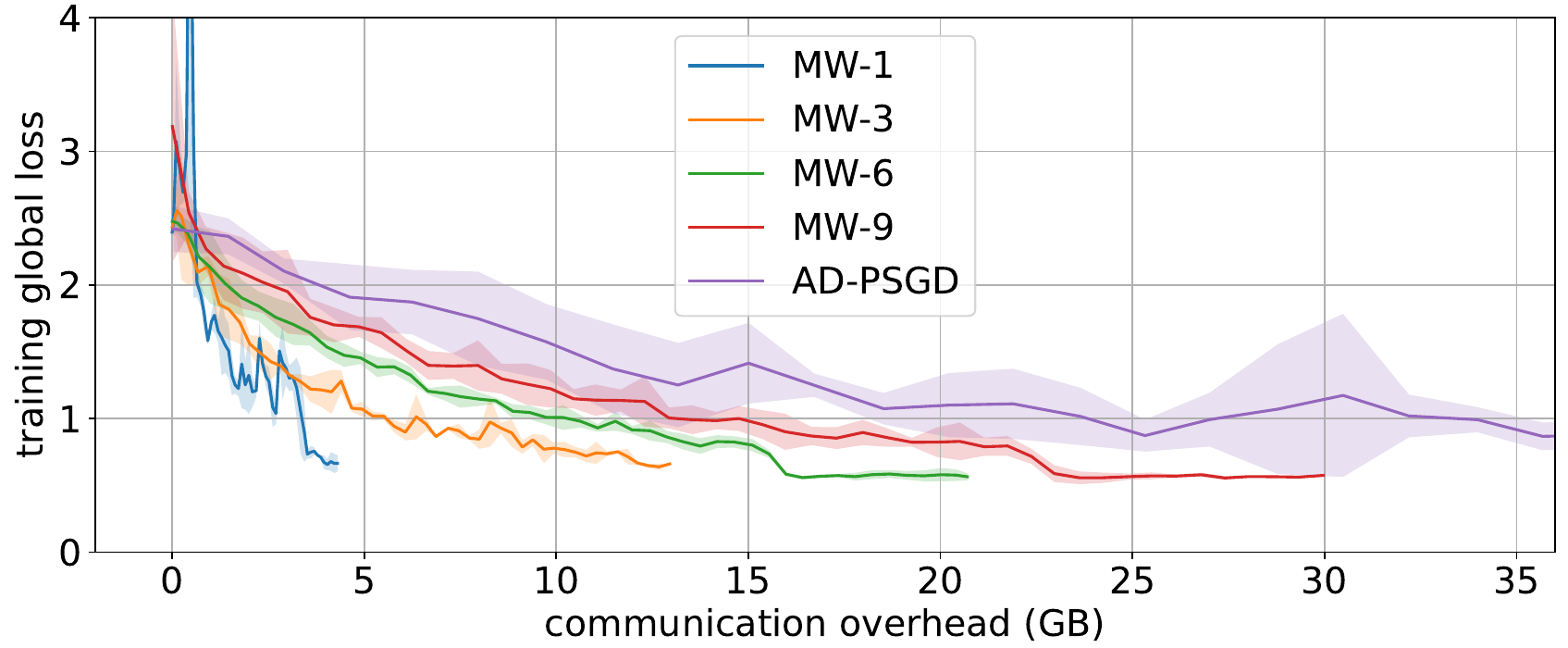}
         \caption{$\alpha=1$, cycle graph.}
         \label{1-comm-cycle}
     \end{subfigure}
     \begin{subfigure}[b]{0.49\textwidth}
         \centering
        \includegraphics[width=1\textwidth]{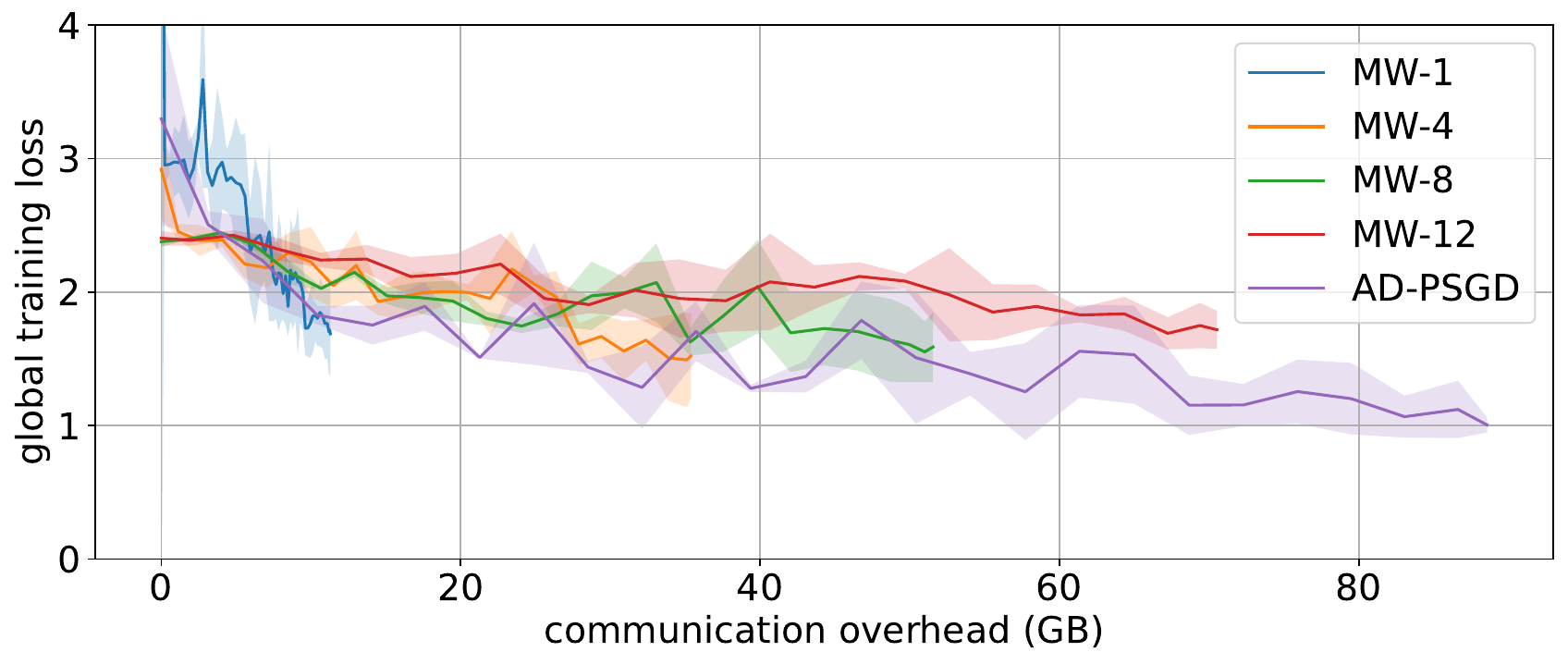}
         \caption{$\alpha=0.1$, Erdős–Rényi ($0.3$) graph.}
         \label{0.1-comm-erdos}
     \end{subfigure}
     \begin{subfigure}[b]{0.49\textwidth}
         \centering
        \includegraphics[width=.99\textwidth]{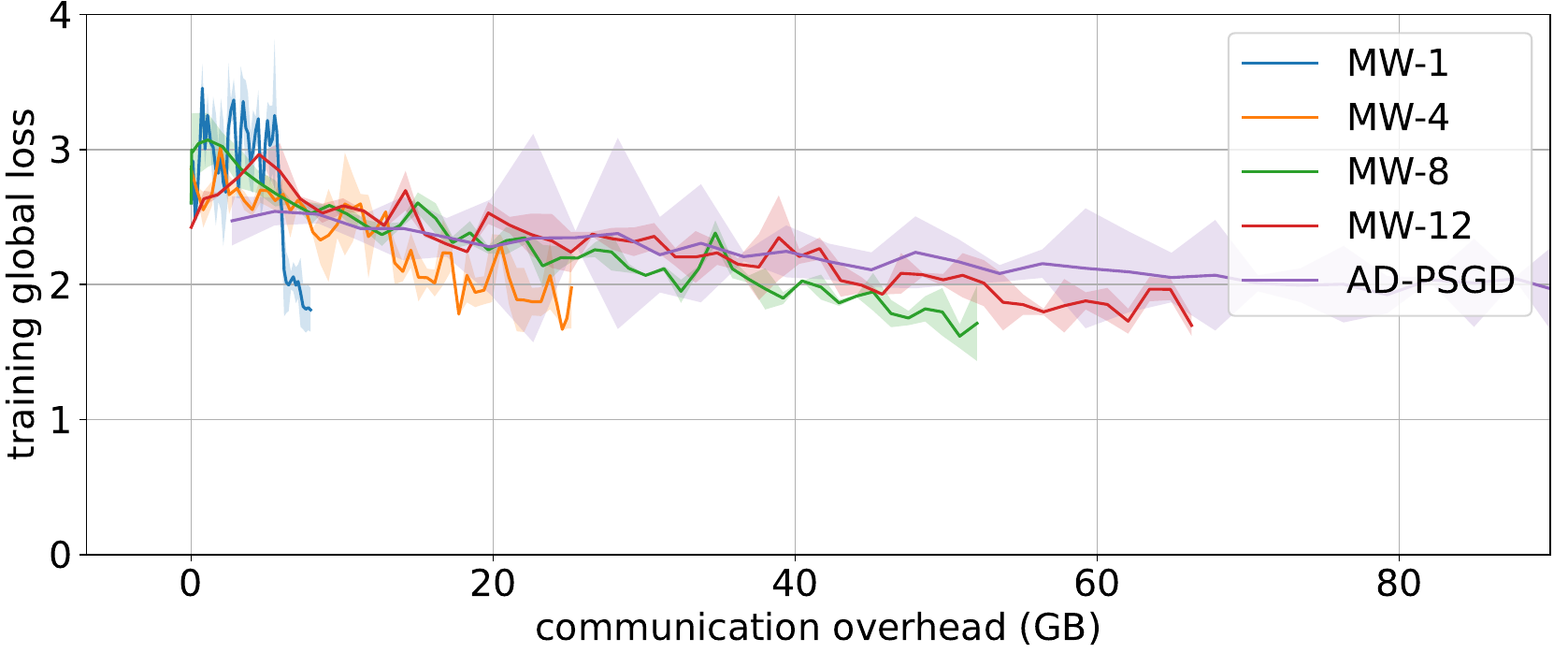}
         \caption{$\alpha=0.1$, cycle graph.}
         \label{0.1-comm-cycle}
     \end{subfigure}
     \caption{Comparison across different noniid-ness levels \textbf{\wrt transmitted bits} for a $20$-node network with Erdős–Rényi ($0.3$) (left column) and cycle (right column) topology: Training loss for ResNet-$20$ on CIFAR-$10$.}
        \label{noniid-bits}
\end{figure*}

Figure \ref{noniid-bits} shows convergence versus transmitted bits.
In terms of communication overhead, we observe that in settings that are not extremely non-iid, where the second dominating term is negligible (due to small $\zeta$), \ours outperforms \agos as predicted by the results in Table \ref{bits}. This can be seen in the first ans second row in Figure \ref{noniid-bits}.
However, in the extreme non-iid setting of the third row, the value of $\zeta$ becomes too large that the second dominating term in Table \ref{bits} comes into play. In this term, the impact of noniid-ness in a graph topology with a small diameter (Erdős–Rényi ($0.3$)) significantly disfavors \ours, which is evident from the observed results in Figure \ref{0.1-comm-erdos}.
In Figure \ref{0.1-comm-cycle}, we again observe that, in contrast to small-diameter topologies, \ours continues to outperform even under extreme noniid conditions. This is again predicted based on the theoretical results in section \ref{conv-bits}.
Intuitively, in small-diameter graphs where connectivity is dense, Gossip algorithms propagate information across the network more efficiently than Random Walks. This rapid information spread is critical under extreme heterogeneity (noniid settings), as nodes rely on global information to maintain a trajectory toward the global minimum and avoid getting trapped in local optima.

\subsection{Communication restricted settings} \label{exp-comm}


Figure \ref{comm} shows that fine-tuning OPT-125M on MultiNLI in an Erdős–Rényi (0.3) graph with 20 nodes.
The deployment of this larger model necessitates a data transfer of $500$ MB per communication between two nodes. This is significantly higher than the overhead for our image classification task using ResNet-20, which requires only $1.08$ MB.

In Figure \ref{llm-comm}, the horizontal axis represents the total communicated bits during fine-tuning. We observe that while \ours with a single walk requires approximately $50$ GB to converge, \agos requires roughly $12$ times that amount, around $600$ GB.
We have also presented the convergence \wrt wall-clock time in Figure \ref{llm-time}. Although \agos benefits from linear speedup due to the increased number of active nodes, \ours still outperforms it.
This is because, as the model size increases, using \agos with numerous concurrent communications in the network leads to congestion, which, in turn, increases the average communication delay across network links. Consequently, this results in a larger $d$ i.e., the average computation and gossip communication delay in the system (section \ref{conv-time}), making \agos slower with respect to wall-clock time as well.
In other words, even though we theoretically achieve a linear speed-up with the number of nodes (a $20\times$ factor in our setting) in \agos, the resulting network congestion and increased delay per iteration negate these gains in terms of wall-clock time compared to a random walk with a single walk. Consequently, we observe that in settings where communication resources are restricted, \ours offers a promising alternative.

\begin{figure}[tb]
     \centering
     \begin{subfigure}[b]{0.49\textwidth}
         \centering
         \includegraphics[width=1\textwidth]{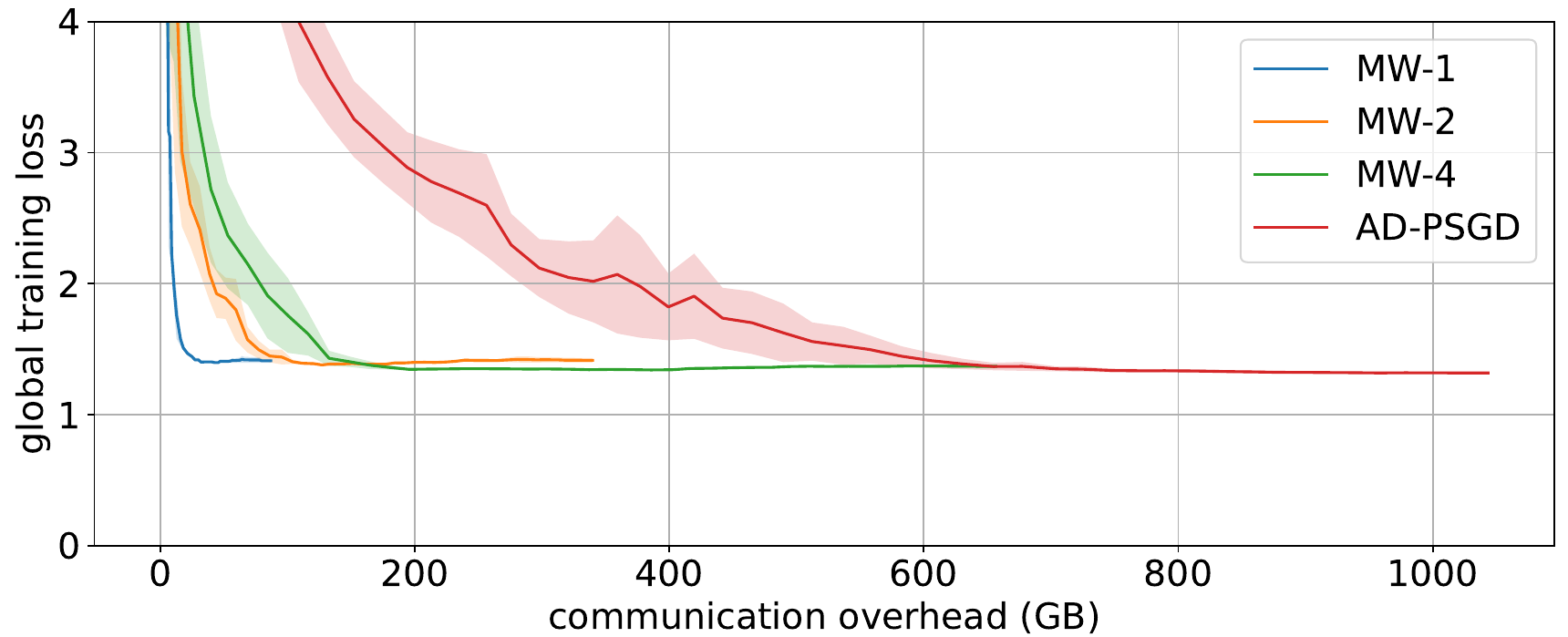}
         \caption{ \wrt  transmitted bits.}
         \label{llm-comm}
     \end{subfigure}
     \begin{subfigure}[b]{0.49\textwidth}
         \centering
         \includegraphics[width=1\textwidth]{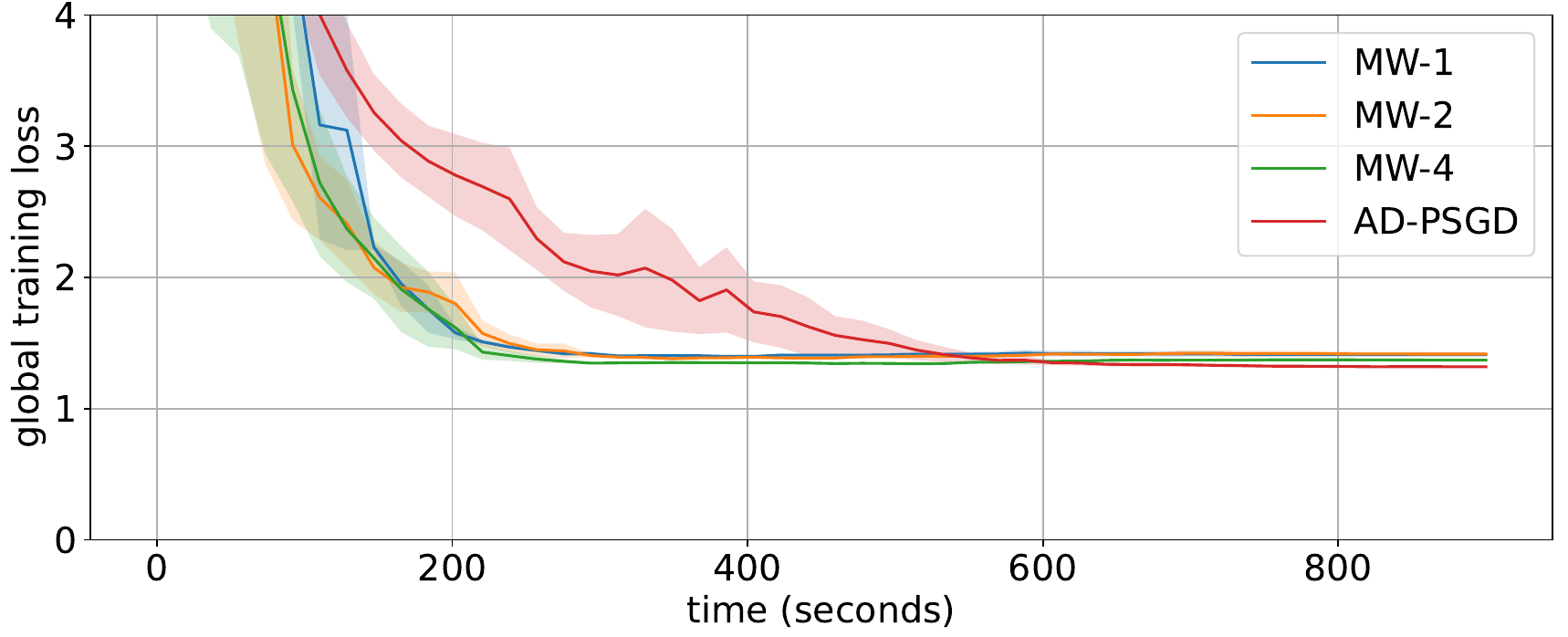}
         \caption{ \wrt  wall-clock time.}
         \label{llm-time}
     \end{subfigure}
        \caption{Fine-tuning OPT-$125$M on the MultiNLI corpus in a $20$-node Erdős–Rényi ($0.3$) graph. Congestion increases communication latency in \agos, allowing \ours to outperform \agos in the time domain as well.}
        \label{comm}
\end{figure}

\subsection{Resilience 
}\label{exp_res}

Figure \ref{Fail} depicts the convergence behavior of ResNet-20 training on CIFAR-10 over a 20-node Erdős–Rényi ($0.3$) graph, subject to two \master failures. The first failure happens at 300 second and the second one happend at 600 seconds as shown with gray dashed lines in Figure \ref{Fail}.

\begin{figure}[tb]
     \centering
     \begin{subfigure}[b]{0.49\textwidth}
         \centering
         \includegraphics[width=1\textwidth]{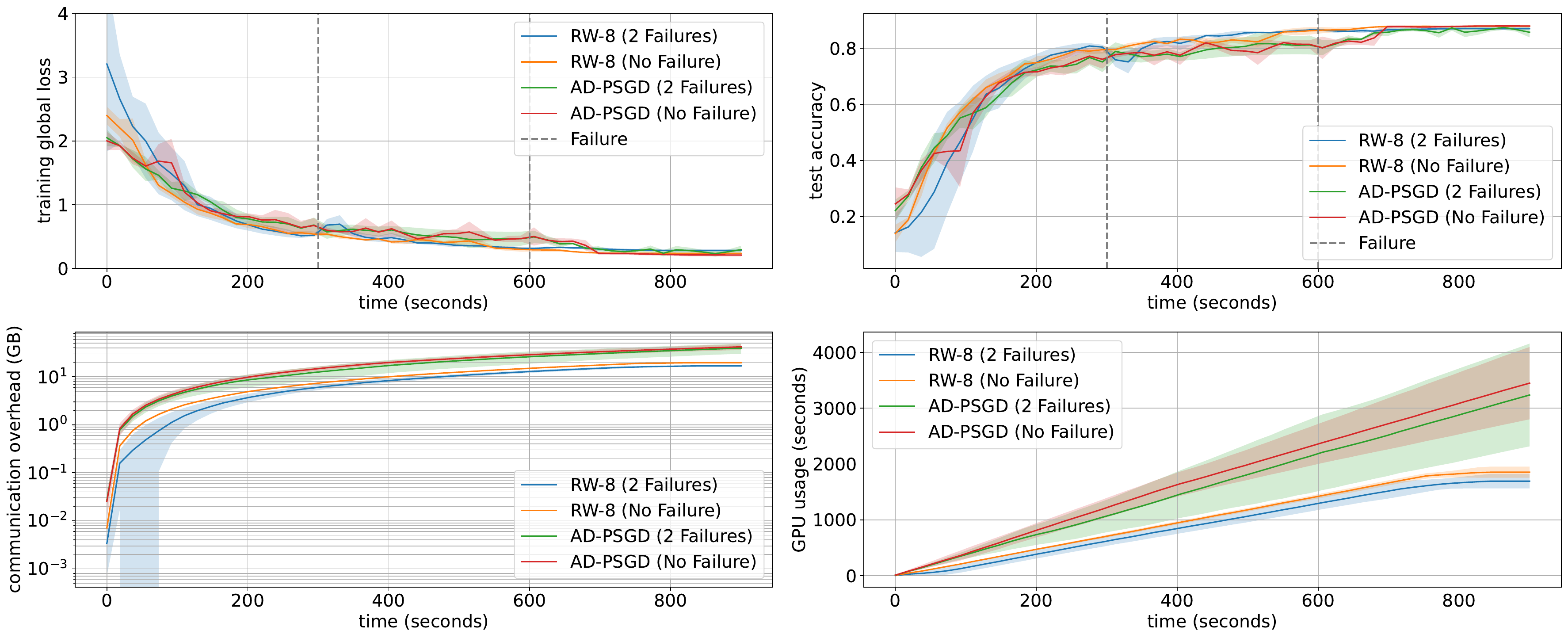}
         \caption{Training loss}
         \label{fail-loss}
     \end{subfigure}
     \begin{subfigure}[b]{0.49\textwidth}
         \centering
         \includegraphics[width=1\textwidth]{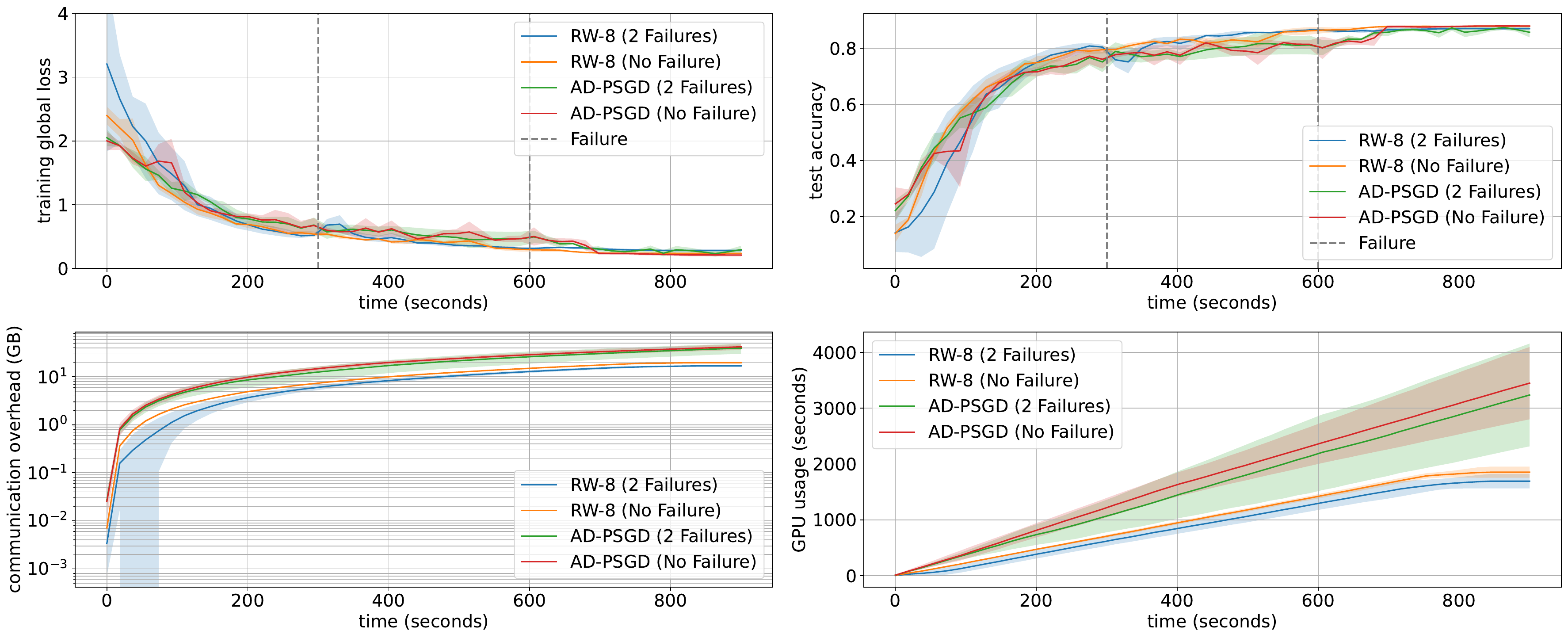}
         \caption{Test Accuracy}
         \label{fail-acc}
     \end{subfigure}
        \caption{Convergence behavior of ResNet-$20$ on CIFAR-$10$ subject to two \master failures occurring at $300$ and $600$ seconds.}
        \label{Fail}
\end{figure}

Here, we observe that although the \master fails twice during training, convergence is still achieved. This is because the \master is primarily used to integrate information from different walks, while the core model information is retained within each individual walk (as slightly outdated versions of the global model). Consequently, the loss of the \master has a minimal effect on the convergence rate, allowing training to proceed after each failure.
For comparison, we also evaluated \agos subject to an identical sequence of node failures. We observe that the performance results are nearly identical.
Note that when nodes are lost, the model converges to a slightly different solution. This occurs because the training proceeds without the corresponding partition of the dataset, effectively altering the global objective function.
\section{Conclusions}


In this paper, we designed and analyzed Multi-Walk, a random walk-based learning algorithm with multiple streams. We compared gossip- and random walk-based decentralized learning algorithms through analysis and exhaustive experiments. We observed that gossip-based methods perform better in networks with small diameters when the data distribution is extremely heterogeneous, while random walk-based ones excel in large-diameter networks. These results highlight the effectiveness of each algorithm across varying graph topologies and levels of data heterogeneity.
In terms of limitations, our analysis considers an upper bound for the rate of convergence to gain deeper insight into the behavior of these algorithms. However, without a corresponding lower bound that is close to the derived upper bounds, we cannot assess the tightness of these bounds or confidently characterize their properties.
Our analysis focuses on deriving an upper bound for the convergence rate to provide deeper insight into the behavior of these algorithms. While a matching lower bound would allow for a tighter characterization, our results still offer valuable understanding of their theoretical properties.

\bibliographystyle{ACM-Reference-Format}
\bibliography{main}

\appendix


\section{Notation Table}\label{appendixa}

\renewcommand{\arraystretch}{1.5} 
\begin{tabularx}{1\textwidth} { 
  | >{\raggedright\arraybackslash}m{5em}
  | >{\raggedright\arraybackslash}X |}
 \hline
 $G=(\mathcal{V},\xi)$ & The graph representing the network\\
 \hline
 $V$ & Number of nodes\\
 \hline
 $\mathcal{D}_v$ & Local dataset at node $v$\\
 \hline
 $\F[v][][][]$ & Loss function of $\vec{x}$ associated with the data sample $\xi$ at node $v$ \\
 \hline
 $f(\vec{x})$ & Global loss function of model $\vec{x}$ \\
 \hline
 $f_v(\vec{x})$  & Local loss function of model $\vec{x}$ on local dataset $\data{v}$ at node $v$\\
 \hline
 $f^*$ & $\min_{\vec{x} \in \mathbb{R}^d}f(\vec{x})$\\
\hline
 $\xz$ & Initial model\\
 \hline
$T$ & Total number of iterations\\
 \hline
 $\eta_t$ & Learning rate at iteration $t$\\
\hline
$\xs[r][t]$ & Model of walk $r$ at iteration $t$ in \ours Algorithm\\
\hline
$\xs[v][t]$ & Local model of node $v$ at iteration $t$ in \agos Algorithm\\
\hline
$u^{r}$ &  A copy of the model of walk $r$ at the most recent instance when that walk was at \master in \ours Algorithm; to be kept at \master\\
\hline
$l$ & The index of the latest walk visited \master in \ours Algorithm\\
\hline
 \(\vec{P}\) & The transition matrix of each walk in \ours, and in \agos, it defines the mixing step of the gossip process\\
 \hline
$p_{ij}$ &   The element in row $i$ and column $j$ of $\vec{P}$\\
\hline
\(p\) & The spectral gap of \(\vec{P}^\top \vec{P}\)\\
 \hline
\(p'\) & The spectral gap of  \(\vec{P}\) \\
\hline
 $m$ & Model size in bits \\
\hline
$B$  & Total transmitted bits\\
\hline
$Z$ &  Wall-clock time\\
\hline
$L$ & $f_v(\vec{x})$'s gradient is $L$-Lipschitz\\
\hline
$\sigma^2$ & Upper Bound for local variance\\
\hline
$\zeta^2$ & Upper Bound for diversity\\
\hline
$F$ & $f(\xz)-f^*$\\
\hline
$H^2$ &  The second moment of the first return time to \master for the Markov chain representing each walk\\
\hline
$\alpha$ &   The degree of noniid-ness in the Dirichlet distribution is used to create disjoint noniid nodes; smaller values indicate a higher level of noniid-ness\\
\hline
\end{tabularx}

\renewcommand{\arraystretch}{1} 
\newpage

\section{Proof of Theorem \ref{T1}}\label{appendixb}

Motivated by \cite{stich2019localsgdconvergesfast}, a virtual sequence $\{\xtild\}_{t\geq 0}$ is defined as follows.
\begin{equation}
\xtild[t+1] =\xtild - \frac{\eta 
}{R} \nabla \F,
\end{equation} 
where we define $\hat{\tau}_t$ as the delay with which the gradient of the corresponding point $(\xs)$ will be computed. If we
denote $t' = t + \hat{\tau}_t$, then it holds that $t' - \tau_{t'} = t$.
We do not need to calculate this sequence in the algorithm explicitly and it is only used for the sake of analysis.

First, we illustrate how the virtual sequence, $\{\xtild\}_{t\geq 0}$, approaches to the optimal. Second, we depict that there is a little deviation from the virtual sequence in the actual iterates, $\xs$. Finally, the convergence rate is proved.

\begin{lemma}[Descent Lemma for Multi-Walk]\label{lem1}
Under Assumptions \ref{as1}, \ref{as2}, \ref{as3}, and learning rate $\eta \leq \frac{R}{6L}$, it holds that
\begin{align}
    \EX f(\xtild[t+1]) \leq f(\xtild) -\frac{\eta}{4R} \norm{\nabla \f[]}  +\frac{2c \eta}{R}\zeta^2 (1-p')^{2|\mathcal{T}_{r_t}|} + \frac{3\eta^2L^2}{2R^2} \left(\sigma^2 + \zeta^2 \right)+ \frac{\eta L^2}{2R} \norm{\xtild - \xs}, 
\end{align}
where $\mathcal{T}_{r_t} = \{t' \leq t: r_{t'} = r_t\}$.
\end{lemma}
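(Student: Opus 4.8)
The plan is to derive the descent inequality from $L$-smoothness applied to the virtual sequence $\{\xtild\}_{t\ge 0}$, and then carefully control the bias introduced by the stale gradient $\nabla \F$ being evaluated at the actual iterate $\xs$ rather than at $\xtild$. First I would write $\EX f(\xtild[t+1]) \le f(\xtild) + \langle \nabla f(\xtild), \EX[\xtild[t+1] - \xtild]\rangle + \tfrac{L}{2}\EX\|\xtild[t+1]-\xtild\|^2$, and plug in $\xtild[t+1]-\xtild = -\tfrac{\eta}{R}\nabla \F$. Since the stochastic gradient is unbiased conditioned on the point, the inner product becomes $-\tfrac{\eta}{R}\langle \nabla f(\xtild), \nabla f_{v_t}(\xs)\rangle$ — but note the gradient that gets applied is the one from node $v_t$, which is only a single summand of $f = \tfrac{1}{V}\sum_v f_v$, not the full gradient. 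This is the crux: I need to argue that, in expectation over which node/walk is active and which steady-state node the walk occupies, the applied gradient behaves like $\nabla f(\xs)$ up to a term that decays with the mixing of the Markov chain. This is where the factor $(1-p')^{2|\mathcal{T}_{r_t}|}$ and the $\zeta^2$ term enter: the distribution of the current node of walk $r_t$ after $|\mathcal{T}_{r_t}|$ of its own steps is within $(1-p')^{|\mathcal{T}_{r_t}|}$ (in an appropriate norm) of the uniform stationary distribution, and the gap between $\nabla f_{v_t}$ averaged against that distribution versus against uniform is bounded using Assumption~\ref{as3} (bounded diversity $\zeta^2$).

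The key steps, in order, would be: (1) expand smoothness as above; (2) split the inner product $-\tfrac{\eta}{R}\langle\nabla f(\xtild),\nabla f_{v_t}(\xs)\rangle$ by adding and subtracting $\nabla f(\xs)$ and $\nabla f_{v_t}(\xtild)$, using $\langle a,b\rangle = \tfrac12(\|a\|^2+\|b\|^2-\|a-b\|^2)$ or Young's inequality to isolate a negative $-\tfrac{\eta}{cR}\|\nabla f(\xtild)\|^2$ term plus $L$-smoothness error terms of the form $\tfrac{\eta L^2}{R}\|\xtild-\xs\|^2$; (3) take expectation over the randomness in which walk is active and where it is — here I invoke the Markov-chain mixing bound to show that the conditional expectation of $\nabla f_{v_t}(\xtild)$ given the past equals $\nabla f(\xtild)$ plus a vector of norm $\mathcal{O}((1-p')^{|\mathcal{T}_{r_t}|}\zeta)$, contributing the $\tfrac{2c\eta}{R}\zeta^2(1-p')^{2|\mathcal{T}_{r_t}|}$ term after squaring; (4) bound the second-order term $\tfrac{L}{2}\tfrac{\eta^2}{R^2}\EX\|\nabla \F\|^2$ by $\tfrac{3\eta^2 L^2}{2R^2}(\sigma^2+\zeta^2)$-type quantities using Assumption~\ref{as2} for the variance, Assumption~\ref{as3} for the diversity, and $\|\nabla f(\xs)\|^2$ absorbed into the negative term (this is where $\eta \le \tfrac{R}{6L}$ is used, to ensure the coefficient of $\|\nabla f\|^2$ stays negative and consolidates to $-\tfrac{\eta}{4R}$); (5) translate $\|\nabla f(\xtild)\|^2$ back to $\|\nabla f_{v_t}(\xs)\|^2 = \|\nabla \f[]\|^2$ in the statement, again paying an $L^2\|\xtild-\xs\|^2$ smoothness price, and collect all constants.

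The main obstacle I anticipate is step (3): making rigorous the claim that the active node of walk $r_t$ is approximately uniformly distributed with error controlled by $(1-p')^{|\mathcal{T}_{r_t}|}$, and doing so in a way that interfaces cleanly with the asynchronous indexing (the global iteration counter $t$ versus the per-walk step counter $|\mathcal{T}_{r_t}|$). One must be careful that $|\mathcal{T}_{r_t}|$ is itself random, that the delays $\tau_t$ are unbounded, and that conditioning is done with respect to a filtration under which $\nabla \F$ is the only new randomness — the virtual sequence construction with $\hat\tau_t$ and the identity $t'-\tau_{t'}=t$ is precisely engineered so that $\xtild[t+1]$ depends on the gradient computed at $\xs$ in a predictable order. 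A secondary technical point is that the $\tfrac{\eta L^2}{2R}\|\xtild-\xs\|^2$ consensus-deviation term is left uncontrolled here on purpose; it will be bounded in a separate lemma (the deviation/consensus lemma) and fed back in when summing over $t$ to prove Theorem~\ref{T1}, so I would not attempt to bound it within this lemma.
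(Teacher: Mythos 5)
Your proposal is correct and follows essentially the same route as the paper's proof: $L$-smoothness on the virtual sequence, splitting the cross term into a polarization part (giving the negative gradient term plus an $L^2\|\xtild - \xs\|^2$ price) and a mixing-bias part bounded via the total-variation decay $c(1-p')^{|\mathcal{T}_{r_t}|}$ of the walk's distribution combined with the diversity bound $\zeta^2$, then bounding the second-order term by $\sigma^2+\zeta^2+\norm{\nabla \f[]}$ and absorbing the latter with $\eta \le \frac{R}{6L}$, while deferring the deviation term to the next lemma. One minor notational point: $\nabla\f[]$ denotes the \emph{global} gradient $\nabla f(\xs[r_t][t])$ at the walk's iterate, which already appears with a negative sign from the polarization identity, so your step (5) "translation" from $\norm{\nabla f(\xtild)}$ to the local gradient is unnecessary (and the statement's negative term is not $\norm{\nabla f_{v_t}(\xs)}$).
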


\begin{proof}
Based on the definition of $\xtild$ and $L$-smoothness of $f(\vec{x})$ we have
\begin{align}
    f(\xtild[t+1]) &= f(\xtild - \frac{\eta}{R} \nabla \F) \\
    &\leq f(\xtild) + \frac{\eta}{R} \inpr{\nabla f(\xtild)}{-\nabla \F} + \frac{\eta^2L}{2R^2} \norm{\nabla \F}. \label{rw1}
\end{align}
Lets take expectation of the second term on the right-hand side of (\ref{rw1}).
\begin{align}
    \frac{\eta}{R} \EX &\inpr{\nabla f(\xtild)}{-\nabla \F} \\
    &= \frac{\eta}{R} \EX_{v_t} \EX_{\xi_{t+\hat{\tau}_t}} \inpr{\nabla f(\xtild)}{-\nabla \F}\\
    &= \frac{\eta}{R} \EX_{v_t} \inpr{\nabla f(\xtild)}{-\nabla \f}\\
    &= \frac{\eta}{R} \inpr{\nabla f(\xtild)}{-\EX_{v_t} \nabla \f}\\
    &= \frac{\eta}{R} \inpr{\nabla f(\xtild)}{-\nabla \f[] + \nabla \f[] -\EX_{v_t} \nabla \f}\\
    &= \frac{\eta}{R} \underbrace{\inpr{\nabla f(\xtild)}{-\nabla \f[]}}_{=:T_1} +\frac{\eta}{R} \underbrace{\inpr{\nabla f(\xtild)}{ \nabla \f[] -\EX_{v_t} \nabla \f}}_{=:T_2}.
\end{align}
We estimate $T_1$ and $T_2$ separately.
\begin{align}
    T_1 &= -\frac{1}{2} \norm{\nabla f(\xtild)} -\frac{1}{2} \norm{\nabla \f[]} + \frac{1}{2} \norm{\nabla f(\xtild) -  \nabla \f[] }.
\end{align}
We also obtain

\begin{align}
    T_2 &\leq \frac{1}{2}\norm{\nabla f(\xtild)} + \frac{1}{2} \norm{\EX_{v_t} \left[\nabla \f - \f[]\right]} \label{rw1.5}\\
    &= \frac{1}{2} \norm{\nabla f(\xtild)} + \frac{1}{2} \norm{\sum_{v=1}^V P_v^t \left(\nabla \f - \f[]\right)}\\
    &= \frac{1}{2} \norm{\nabla f(\xtild)} + \frac{1}{2} \norm{\sum_{v=1}^V (P_v^t - \pi_v) \left(\nabla \f - \f[]\right)}\\
    & \leq \frac{1}{2} \norm{\nabla f(\xtild)} + \frac{1}{2} \left(\sum_{v=1}^V |P_v^t - \pi_v| \normm{\nabla \f - \f[]}\right)^2\\
    & \leq \frac{1}{2} \norm{\nabla f(\xtild)} + \frac{1}{2} \zeta^2\left(\sum_{v=1}^V |P_v^t - \pi_v|\right)^2\\
    & \leq \frac{1}{2} \norm{\nabla f(\xtild)} + \frac{1}{2} \zeta^2\left(2\normm{P^t - \pi}_{TV}\right)^2 \label{rw2} \\
    & \leq \frac{1}{2} \norm{\nabla f(\xtild)} + 2 c \zeta^2 (1-p')^{2|\mathcal{T}_{r_t}|}, \label{rw3}
\end{align} 
 where (\ref{rw1.5}) is based on the fact that for any $\lambda > 0$,
\begin{align}
2\inpr{a}{b} \leq \lambda \norm{a} + \frac{1}{\lambda} \norm{b}.\label{lambda_ineq}
\end{align} 
$P_v^t$ shows the probability of being at node $v$ at iteration $t$ and $\pi_v$ is the steady state distribution of node $v$.
In (\ref{rw2}) we have used the fact that the total variation distance between two probability distributions $\mu$ and $\nu$ on $\mathcal{X}$ satisfies 
\begin{align}
    \normm{\mu - \nu}_{TV} = \frac{1}{2} \sum_{x \in \mathcal{X}} |\mu(x) - \nu(x)|.
\end{align}
(\ref{rw3}) is based on the following well-known bound on the mixing time for a Markov chain (see, for example, \citet{guruswami2016rapidlymixingmarkovchains,Levin2017MarkovCA}).
\begin{align}
    \normm{P^t - \pi}_{TV} \leq c(1-p')^{|\mathcal{T}_{r_t}|},
\end{align}
where $\mathcal{T}_{r_t} = \{t' \leq t: r_{t'} = r_t\}$ is the set of all iteration on walk $r_t$.
$(1-p')$ is the second largest eigenvalue of matrix $\vec{P}$ representing the irreducible aperiodic Markov chain of each walk and $c > 0$ is a constant.

So we get 
\begin{align}
    \frac{\eta}{R} \EX &\inpr{\nabla f(\xtild)}{-\nabla \F} \leq -\frac{\eta}{2R} \norm{\nabla \f[]} + \frac{\eta}{2R} \norm{\nabla f(\xtild) -  \nabla \f[] } +\frac{2c \eta}{R}\zeta^2 (1-p')^{2|\mathcal{T}_{r_t}|}.
\end{align}

Now we derive expectation of the last term on the right-hand side of (\ref{rw1}).
\begin{align}
    &\EX\norm{\nabla \F} = \EX\norm{\nabla \F \pm \nabla \f \pm \nabla \f[]}\\
    &\leq 3\EX \norm{\nabla \F - \nabla \f} + 3\EX \norm{\nabla \f - \nabla \f[]} + 3 \norm{\nabla \f[]} \label{rw4}\\
    &\leq 3\sigma^2 + 3\zeta^2 + 3 \norm{\nabla \f[]},
\end{align}
where (\ref{rw4}) is based on the following inequality.
\begin{align}
    \norm{\sum_{i=1}^n \vec{a}_i} \leq n \sum_{i=1}^n \norm{\vec{a}_i}.\label{ineq_sum}
\end{align}

Combining these together and using $L$-smoothness to estimate $\norm{\nabla f(\xtild) -  \nabla \f[] }$ we obtain
\begin{align}
    \EX f(\xtild[t+1]) \leq f(\xtild) -\left(\frac{\eta}{2R} - \frac{3\eta^2L}{2R^2} \right)\norm{\nabla \f[]} + \frac{\eta L^2}{2R} \norm{\xtild - \xs} +\frac{2c \eta}{R}\zeta^2 (1-p')^{2|\mathcal{T}_{r_t}|} + \frac{3\eta^2L}{2R^2} \left(\sigma^2 + \zeta^2 \right).
\end{align}

Considering $\eta \leq \frac{R}{6L}$ we obtain
\begin{align}
    \EX f(\xtild[t+1]) \leq f(\xtild) -\frac{\eta}{4R} \norm{\nabla \f[]} + \frac{\eta L^2}{2R} \norm{\xtild - \xs} +\frac{2c \eta}{R}\zeta^2 (1-p')^{2|\mathcal{T}_{r_t}|} + \frac{3\eta^2L}{2R^2} \left(\sigma^2 + 2\zeta^2 \right).
\end{align}
\end{proof}

\begin{lemma}[Bounding Deviation for Multi-Walk]\label{lem2}
Under Assumptions \ref{as2}, \ref{as3}, \ref{as4}, and learning rate $\eta \leq \frac{1}{7LH}$, it holds that
\begin{align}
          \frac{1}{T} \sum_{t=1}^T \EX \norm{\xtild - \xs} \leq 12V\sigma^2 \eta^2   + 12H^2\zeta^2 \eta^2  + \frac{1}{4L^2T}\sum_{t=1}^{T-1}  \EX\norm{\nabla \f[]},\label{ap_eq_lem2}
\end{align}
where $H^2$ is the second moment of the first return time to the \master.
\end{lemma}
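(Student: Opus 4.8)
\textbf{Proof plan for Lemma~\ref{lem2}.}
The plan is to bound the deviation $\EX\norm{\xtild-\xs}$ by tracing back, for each walk $r_t$, to the most recent iteration at which walk $r_t$ visited \master. Between two consecutive visits to \master, the virtual iterate $\xtild$ and the true iterate $\xs[r_t][\cdot]$ both evolve only by adding scaled stochastic gradients from walk $r_t$'s steps (the mixing update on line 8 of Algorithm~\ref{alg:MW} is precisely what re-synchronizes walk $r_t$ with the pooled model, up to the $\frac1R$ weighting that the virtual sequence already accounts for). So first I would write, for a fixed $t$, $\xtild - \xs[r_t][t]$ as a sum over the set of past iterations on walk $r_t$ since its last return to \master, of terms of the form $\frac{\eta}{R}\nabla F_{v_z}(\cdot)$ together with the unmixed-but-not-yet-applied increments; the number of such terms is controlled by the first return time of the Markov chain to $0$, call it $h_t$.

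Next I would apply the inequality \eqref{ineq_sum} (the $\norm{\sum_{i=1}^n a_i}\le n\sum\norm{a_i}$ bound) to split this into $h_t$ squared-norm terms, and use the gradient decomposition $\nabla F = (\nabla F-\nabla f_v)+(\nabla f_v - \nabla f)+\nabla f$ together with Assumptions~\ref{as2} and \ref{as3} to replace each stochastic gradient square-norm by $\sigma^2 + \zeta^2 + \norm{\nabla f(\cdot)}$ (up to constants). This produces a bound of the shape $\eta^2 \cdot \EX[h_t^2]\cdot(\sigma^2+\zeta^2) + \eta^2 h_t \sum_{z}\norm{\nabla f_{v_z}(\cdot)}$-type expressions. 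Then I would average over $t=1,\dots,T$ and interchange the order of summation so that each gradient term $\norm{\nabla f_{v_z}(\xs[r_z][z])}$ is counted at most $\mathcal{O}(H^2)$ times (each iteration on a walk contributes to the deviation of at most one "return-block" worth of future iterations, in expectation governed by the second moment $H^2$ of the return time), giving the $\frac{1}{4L^2 T}\sum \EX\norm{\nabla f_{v_z}}$ term after choosing $\eta$ small. The appearance of $V\sigma^2$ rather than $H^2\sigma^2$ in the $\sigma^2$ term should come from a sharper, variance-style bound: the stochastic-gradient noise terms across distinct steps are conditionally independent (zero-mean), so their sum's second moment scales with $\EX[h_t]\cdot\sigma^2$ times the number of nodes visited, i.e.\ $\mathcal{O}(V)$ via a coupon-collector / stationary-distribution argument relating $\EX[h_t]$ and the per-node return times, whereas the heterogeneity terms $\zeta^2$ are not mean-zero and must be bounded crudely by $\EX[h_t^2]=H^2$.

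The main obstacle I anticipate is precisely this asymmetry: getting $V$ (not $H^2$) in front of $\sigma^2$ requires exploiting the martingale/independence structure of the stochastic gradient noise along a random walk of random length, which needs a careful conditioning argument (e.g.\ Wald-type identity, or conditioning on the trajectory and then on the samples $\xi$), and simultaneously requires the identity $\sum_v \pi_v \cdot (\text{expected return time to }v) = V$-flavored fact connecting $H$, the stationary distribution, and $V$. Handling the asynchrony/delays $\hat\tau_t$ — i.e.\ making sure that "the model on which the gradient was computed" is correctly identified with a past iterate of the \emph{same} walk, so that the telescoping over a return-block is valid — is the other delicate bookkeeping point; the relation $t' - \tau_{t'} = t$ for $t' = t+\hat\tau_t$ stated before Lemma~\ref{lem1} is the tool that makes this rigorous. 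The learning-rate condition $\eta \le \frac{1}{7LH}$ is what lets the self-referential $\norm{\nabla f_{v_z}}$ term be absorbed with a small coefficient ($\frac{1}{4L^2}$) on the left-hand side when this lemma is later combined with the Descent Lemma~\ref{lem1}.
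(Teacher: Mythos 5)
Your plan matches the paper's proof in all essential respects: the paper also decomposes $\xtild - \xs$ into the same-walk increments since walk $r_t$'s last visit to \master (weighted $(1-\tfrac1R)\eta$) plus the other walks' $\tfrac{\eta}{R}$-increments not yet mixed in, bounds the number of terms by return times of the chain, applies the bias--variance split of the stochastic gradients with the zero-mean independence bound for the noise (so the $\sigma^2$ part scales with the \emph{first} moment of the return time) while the $\zeta^2$ and full-gradient parts get the crude quadratic bound (hence the \emph{second} moment $H^2$), interchanges the sums over $t$ and $z$, and absorbs the gradient term via $\eta \le \frac{1}{7LH}$. The only cosmetic difference is that the factor $V$ arises simply from Kac's identity $\EX[h] = 1/\pi_0 = V$ for the uniform stationary distribution of the doubly stochastic chain, not from a coupon-collector-type argument, which is exactly the ``stationary-distribution fact'' you anticipated.
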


\begin{proof}
First we define $\lrt[r]$ as the last iteration before $t$ when walk $r$ has visited \master, \ie $\lrt[r] = \max \{t'\mid  t'\leq t, r_t = r, v_t = 0\}$.

\begin{align}
    &\EX \norm{\xtild - \xs} = \EX \norm{\sum_{z=\lrt[r_t][\lrt], r_z \neq r_t }^{t-1} -\frac{\eta}{R} \nabla \Fz +  \sum_{z=\lrt, r_z = r_t }^{t-1} \left(1-\frac{1}{R}\right) \eta \nabla \Fz}\\
    &\leq \frac{2}{R^2} \EX\norm{\sum_{z=\lrt[r_t][\lrt], r_z \neq r_t }^{t-1} \eta \nabla \Fz} + 2 \EX\norm{\sum_{z=\lrt, r_z = r_t }^{t-1} \eta \nabla \Fz}\\
    &\leq \underbrace{\frac{2}{R^2} \EX\norm{\sum_{z \in U^1_t}\eta \nabla \Fz}}_{:=T_1} + \underbrace{2 \EX\norm{\sum_{z \in U^2_t} \eta \nabla \Fz}}_{:=T_2},
\end{align}
where $U^1_t =  \{ \lrt[r_t][\lrt] \leq z \leq t-1 \mid r_z \neq r_t \}$, and $U^2_t =  \{ \lrt \leq z \leq t-1 \mid r_z = r_t \}$.

We have 
\begin{align}
    \nabla \F = \left( \nabla \F - \nabla \f \right) + \left( \nabla \f - \nabla \f[]\right) + \nabla \f[].
\end{align}

So, based on (\ref{ineq_sum}) we can write
\begin{align}
    T_1 &\leq \frac{6}{R^2} \EX\biggl( \norm{\sum_{z \in U^1_t}\eta \left( \nabla \Fz - \nabla \fz \right)} \\
     && \mathllap{+ \norm{\sum_{z \in U^1_t}\eta \left( \nabla \fz - \nabla \fz[]\right)}+ \norm{\sum_{z \in U^1_t}\eta\nabla \fz[]} \biggl)} \nonumber\\
     &\leq \frac{6}{R^2} \EX\left( \sum_{z \in U^1_t}\eta^2 \sigma^2 + |U^1_t|\sum_{z \in U^1_t}\eta^2 \zeta^2 + |U^1_t|\sum_{z \in U^1_t}\eta^2 \norm{\nabla \fz[]}\right), \label{rw5}
\end{align}
where in (\ref{rw5}) we have applied (\ref{ineq_sum}) and the fact that for independent zero-mean random variables, we get a tighter bound as follows.
\begin{align}
    \EX \norm{\sum_{i=1}^n \vec{a}_i} \leq \sum_{i=1}^n \EX \norm{\vec{a}_i}.\label{zero-mean_ineq_sum}
\end{align}

Averaging over $T$, we get
\begin{align}
    &\frac{1}{T} \sum_{t=1}^{T-1} T_1 \leq \frac{6}{TR^2} \EX \left( \sum_{t=1}^{T-1} \sum_{z \in U^1_t}\eta^2 \sigma^2  + \sum_{t=1}^{T-1} |U^1_t|\sum_{z \in U^1_t}\eta^2 \zeta^2 + \sum_{t=1}^{T-1} |U^1_t|\sum_{z \in U^1_t}\eta^2 \norm{\nabla \fz[]}\right) \\
    &\leq \frac{6}{TR^2} \EX \left( \sum_{t=1}^{T-1}|U^1_t|\eta^2 \sigma^2  + \sum_{t=1}^{T-1} |U^1_t|^2\eta^2 \zeta^2 + \sum_{t=1}^{T-1} |U^1_t| \sum_{z \in U^1_t} \eta^2 \norm{\nabla \fz[]}\right)\\
    &\leq \frac{6}{TR^2} \EX \left( \sum_{t=1}^{T-1}(R-1)h\eta^2 \sigma^2  + \sum_{t=1}^{T-1} (R-1)^2h^2\eta^2 \zeta^2 + (R-1)h \eta^2 \sum_{t=1}^{T-1} \sum_{z \in U^1_t}  \norm{\nabla \fz[]}\right) \label{rw5.5}\\
    &\leq \frac{6}{TR^2} \EX \left( \sum_{t=1}^{T-1}(R-1)h\eta^2 \sigma^2  + \sum_{t=1}^{T-1} (R-1)^2h^2\eta^2 \zeta^2 + (R-1)^2h^2 \eta^2 \sum_{t=1}^{T-1}  \norm{\nabla \f[]}\right) \label{rw5.6}\\
    &\leq \frac{6}{TR^2}  \left( \sum_{t=1}^{T-1}\left(R-1\right)V\eta^2 \sigma^2  + \sum_{t=1}^{T-1} \left(R-1\right)^2 H^2\eta^2 \zeta^2 + \sum_{t=1}^{T-1} \left(R-1\right)^2 H^2\eta^2 \EX\norm{\nabla \fz[]}\right) \label{rw6}\\
    &\leq \frac{6}{T} \left( \sum_{t=1}^{T-1}\frac{V}{R}\eta^2 \sigma^2  + \sum_{t=1}^{T-1} H^2\eta^2 \zeta^2 + \sum_{t=1}^{T-1}  H^2\eta^2 \EX\norm{\nabla \f[]}\right),
\end{align}
where in (\ref{rw5.5}) and (\ref{rw5.6}), we have used the fact that $|U^1_t|$ is upper bounded with  $R-1$ times the first return time to \master ($h$). Expectation of the first return time is $\frac{1}{\pi_0} = V$ and the second moment of this random variable is assumed $H^2$ that are applied in (\ref{rw6}).

Following the same approach for $T_2$ and considering $|U^2_t|$ is upper bounded with the first return time to \master. we can get
\begin{align}
    \frac{1}{T} \sum_{t=1}^{T-1} T_2 &\leq \frac{6}{T} \left( \sum_{t=1}^{T-1}V\eta^2 \sigma^2  + \sum_{t=1}^{T-1} H^2\eta^2 \zeta^2 + \sum_{t=1}^{T-1}  H^2\eta^2 \EX\norm{\nabla \f[]}\right).
\end{align}

Putting these together, we obtain 
\begin{align}
     \frac{1}{T} \sum_{t=1}^T \EX \norm{\xtild - \xs} &\leq \frac{12}{T} \left( \sum_{t=1}^{T-1}V\eta^2 \sigma^2  + \sum_{t=1}^{T-1} H^2\eta^2 \zeta^2 + \sum_{t=1}^{T-1}  H^2\eta^2 \EX\norm{\nabla \fz[]}\right)\\
     &\leq 12V\eta^2 \sigma^2  + 12H^2\eta^2 \zeta^2 + \frac{12H^2\eta^2}{T}\sum_{t=1}^{T-1}   \EX\norm{\nabla \f[]}.
\end{align}

Let $\eta \leq \frac{1}{7LH}$ to get
\begin{align}
     \frac{1}{T} \sum_{t=1}^T \EX \norm{\xtild - \xs} \leq 12V\sigma^2 \eta^2   + 12H^2\zeta^2 \eta^2  + \frac{1}{4L^2T}\sum_{t=1}^{T-1}  \EX\norm{\nabla \f[]}.
\end{align}
\end{proof}

Now we complete the proof of Theorem \ref{T1}. By multiplication of $\frac{4R}{\eta}$ in both sides and averaging over $t$ in lemma \ref{lem1}, we get
\begin{align}
    \frac{1}{T} \sum_{t=0}^{T-1} \EX \norm{\nabla \f[]}  &\leq \frac{1}{T} \sum_{t=0}^{T-1} \frac{4R}{\eta} \left (f(\xtild) - \EX  f\left(\xtild[t+1]\right) \right)  + \frac{1}{T}\sum_{t=0}^{T-1} 8c\zeta^2 (1-p')^{2|\mathcal{T}_{r_t}|} \\
    && \mathllap{+ \frac{6\eta L^2}{R} \left(\sigma^2 + \zeta^2 \right)+ \frac{1}{T}\sum_{t=0}^{T-1}2L^2 \EX \norm{\xtild - \xs}}. \nonumber
\end{align}
By replacing result of lemma \ref{lem2} and using $\sum_{t=0}^{T-1} (1-p')^{2|\mathcal{T}_{r_t}|} \leq \sum_{t=0}^{T-1} (1-p')^{|\mathcal{T}_{r_t}|} \leq R\sum_{t=0}^{T-1} (1-p')^{t} \leq \frac{R}{p'}$ (Since we have $R$ walks in total, at each iteration some walk may be active. Hence, replacing $|\mathcal{T}_{r_t}|$ with $t$ requires adding a $R$ factor  to the summation across all possible walks up to iteration), then rearranging, we have
\begin{align}
    \frac{1}{2T} \sum_{t=0}^{T-1} \EX \norm{\nabla \f[]} & \leq \frac{1}{T}\sum_{t=0}^{T-1} \frac{4R}{\eta} \left (f(\xtild) - \EX  f\left(\xtild[t+1]\right) \right)  + \frac{8cR\zeta^2}{p'T} + \frac{6\eta L^2}{R} \left(\sigma^2 + \zeta^2 \right) \label{rw7} + 24L^2\left(V\sigma^2 + H^2 \zeta^2 \right) \eta^2.
\end{align}

Now, we state a lemma to obtain the final convergence rate based on (\ref{rw7}).
\begin{lemma}[Similar to Lemma 16 in \cite{unified-koloskova20a}]\label{lem5}
    For every non-negative sequence $\{r_t\}_{t\geq0}$ and any parameters $d \geq 0, b\geq 0, c\geq 0, T\geq 0$, there exist a constant $\eta \leq \frac{1}{d}$, it holds
    \begin{align}
        \frac{1}{T\eta}\sum_{t=0}^{T-1}\big(r_t- r_{t+1} \big)+ b\eta + c\eta^2 \leq \frac{2\sqrt{br_0}}{\sqrt{T}} + 2 (\frac{r_0\sqrt{c}}{T})^{\frac{2}{3}} + \frac{dr_0}{T}. 
    \end{align}
\end{lemma}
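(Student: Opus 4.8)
\textbf{Proof proposal (Lemma \ref{lem5}).} The plan is to first collapse the sum by telescoping and then tune $\eta$ by a three-way case analysis, which is the standard device behind rate lemmas of this type. Since $\{r_t\}_{t\geq 0}$ is non-negative, $\sum_{t=0}^{T-1}(r_t-r_{t+1}) = r_0 - r_T \leq r_0$, so the left-hand side is bounded above by
\[
\Psi(\eta) := \frac{r_0}{T\eta} + b\eta + c\eta^2 .
\]
It therefore suffices to exhibit one admissible step size $\eta \leq \frac{1}{d}$ for which $\Psi(\eta)$ is at most the claimed right-hand side. (The statement is vacuous when $r_0=0$, and we read it for $T\geq 1$; if $b=0$ or $c=0$ the corresponding term below is simply omitted, since then $b\eta$ or $c\eta^2$ vanishes identically.)

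I would choose
\[
\eta = \min\left\{ \frac{1}{d},\ \sqrt{\frac{r_0}{bT}},\ \left(\frac{r_0}{cT}\right)^{1/3} \right\},
\]
which is manifestly $\leq \frac{1}{d}$, and then split into three cases according to which of the three quantities attains the minimum. If $\eta = \frac{1}{d}$, then $\frac{r_0}{T\eta} = \frac{dr_0}{T}$, while $\eta \leq \sqrt{r_0/(bT)}$ yields $b\eta \leq \sqrt{br_0/T}$ and $\eta \leq (r_0/(cT))^{1/3}$ yields $c\eta^2 \leq (r_0\sqrt{c}/T)^{2/3}$. If $\eta = \sqrt{r_0/(bT)}$, then $\frac{r_0}{T\eta} = b\eta = \sqrt{br_0/T}$, and again $c\eta^2 \leq (r_0\sqrt{c}/T)^{2/3}$ and $\frac{r_0}{T\eta}$... i.e. the $d$-term is dropped because $\frac{r_0}{T\eta}$ has already been accounted for; also $\frac{dr_0}{T}\geq 0$ so it may simply be added back. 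If $\eta = (r_0/(cT))^{1/3}$, then $\frac{r_0}{T\eta} = c\eta^2 = (r_0\sqrt{c}/T)^{2/3}$ and $b\eta \leq \sqrt{br_0/T}$. In every case $\Psi(\eta) \leq \frac{2\sqrt{br_0}}{\sqrt{T}} + 2\left(\frac{r_0\sqrt{c}}{T}\right)^{2/3} + \frac{dr_0}{T}$, which proves the lemma.

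The only mildly delicate points are the two algebraic identities $\frac{r_0}{T}\cdot\left(\frac{cT}{r_0}\right)^{1/3} = \left(\frac{r_0\sqrt{c}}{T}\right)^{2/3}$ and $c\cdot\left(\frac{r_0}{cT}\right)^{2/3} = \left(\frac{r_0\sqrt{c}}{T}\right)^{2/3}$, both of which follow immediately upon writing everything with fractional exponents; there is no genuine obstacle here. This argument mirrors Lemma 16 of \citet{unified-koloskova20a}, the only addition being the cubic-root $c\eta^2$ term, which is handled by exactly the same bookkeeping as the square-root term.
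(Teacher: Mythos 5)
Your proof is correct and follows essentially the same route as the paper: telescope the sum to $\frac{r_0}{T\eta}+b\eta+c\eta^2$ and tune with $\eta=\min\{\frac{1}{d},\sqrt{\frac{r_0}{bT}},(\frac{r_0}{cT})^{1/3}\}$, which is exactly the choice the paper makes (deferring the case analysis to Lemma 16 of \citet{unified-koloskova20a}, which you instead write out explicitly). The only difference is presentational — your three-case bookkeeping is the standard argument behind that citation — so no further comment is needed.
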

\begin{proof}
By canceling the same terms in the telescopic sum, we get
        \begin{align}
        \frac{1}{T\eta}\sum_{t=0}^{T-1}\big(r_t- r_{t+1} \big)+ b\eta + c\eta^2 \leq \frac{r_0}{T\eta}+ b\eta + c\eta^2.
    \end{align}

It is now followed by a $\eta$-tuning, the same way as in \cite{unified-koloskova20a}, which shows we need to choose $\eta = \min\{\frac{1}{d},\sqrt{\frac{r_0}{bT}},(\frac{r_0}{cT})^{\frac{1}{3}}\}$.
\end{proof}

Bounding the right hand side of inequality (\ref{rw7}) with Lemma \ref{lem5} and considering that $\eta=\eta \leq \frac{1}{7LH}$, provides $\frac{1}{T}\sum_{t=0}^{T-1}\EX\norm{\nabla f(\xbar)}$ is
\begin{align}
     \mathcal{O} \bigg(\frac{(f(\xz)-f^*)RLH}{T} + \frac{R\zeta^2}{p'T}+\frac{\sqrt{ L(f(\xz)-f^*)(\sigma^2 +\zeta^2)}}{\sqrt{T}}+ 
     (\frac{RL(f(\xz)-f^*)\sqrt{V\sigma^2 + H^2 \zeta^2 }}{T})^{\frac{2}{3}} \bigg).
\end{align}

This completes the proof of Theorem \ref{T1}.

\section{Proof of Theorem \ref{T2}}\label{appendixc}

For Async-\gos algorithm, we define a virtual sequence $\{\xtild\}_{t\geq 0}$ as shown below.
\begin{equation}
\xtild[t+1] =\xtild - \frac{\eta}{V} \nabla \gF.
\end{equation}

\begin{lemma}[Descent Lemma for Async-Gossip]\label{lem3}
Under Assumptions \ref{as1}, \ref{as2}, \ref{as3}, and learning rate $\eta \leq \frac{V}{4L}$, it holds that
\begin{align}
    \EX f(\xtild[t+1]) \leq f(\xtild) -\frac{\eta}{4V} \norm{\nabla \gf[]} + \frac{\eta L^2}{2V} \norm{\xtild - \gxs} + \frac{\eta^2L}{2V^2} \left(\sigma^2 + 2\zeta^2 \right).
\end{align}
\end{lemma}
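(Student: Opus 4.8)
The plan is to reproduce the proof of Lemma~\ref{lem1} with $R$ replaced by $V$, while using the one structural simplification that Algorithm~\ref{alg:AsyncGoss} provides: the active node $v_t$ is drawn \emph{uniformly} over $\mathcal{V}$, which is exactly the stationary distribution, so the total-variation / mixing-time estimate used in Lemma~\ref{lem1} (and the resulting $(1-p')^{2|\mathcal{T}_{r_t}|}$ term) is not needed at all. I would start from $L$-smoothness of $f$ applied to the virtual update $\xtild[t+1] = \xtild - \tfrac{\eta}{V}\nabla \gF$,
\[
  f(\xtild[t+1]) \;\le\; f(\xtild) + \tfrac{\eta}{V}\inpr{\nabla f(\xtild)}{-\nabla \gF} + \tfrac{\eta^2 L}{2V^2}\norm{\nabla \gF},
\]
and then take expectation over the stochastic sample $\xi_{t+\hat\tau_t}$ and over $v_t$. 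Unbiasedness gives $\EX_{\xi}\nabla \gF = \nabla \gf$, and for the last term a bias--variance split yields $\EX_{\xi}\norm{\nabla\gF}\le \sigma^2 + \norm{\nabla\gf}$, which I would further bound by $\sigma^2 + 2\zeta^2 + 2\norm{\nabla\gf[]}$ using Assumptions~\ref{as2} and~\ref{as3}; this accounts for the explicit $\sigma^2+2\zeta^2$ term in the statement.

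For the cross term I would use that $\EX_{v_t}\nabla\gf = \tfrac1V\sum_{v}\nabla f_v(\vec{x}^v_t)$ and compare it to $\nabla f(\xtild) = \tfrac1V\sum_{v}\nabla f_v(\xtild)$: the difference is $\tfrac1V\sum_v\big(\nabla f_v(\vec{x}^v_t) - \nabla f_v(\xtild)\big)$, whose squared norm is at most $L^2\cdot \tfrac1V\sum_v\norm{\xtild-\vec{x}^v_t}$ by Assumption~\ref{as1}. The crucial point — and the step I expect to be the main obstacle — is exactly this: because averaging the \emph{local} gradients at a common point returns the \emph{global} gradient, the ``local vs.\ global'' heterogeneity mismatch collapses into a consensus-distance term and does \emph{not} contribute a spurious $\mathcal{O}(\eta)$ $\zeta^2$ term; one just has to be careful that $\gxs$ depends on $v_t$, so the polarization identity and the elementary bound~\eqref{lambda_ineq} must be applied consistently with the averaging over $v_t$, and the norms on the right are the node-averages denoted $\norm{\nabla\gf[]}$ and $\norm{\xtild-\gxs}$ under $\EX$. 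Carrying this out turns $\tfrac{\eta}{V}\EX\inpr{\nabla f(\xtild)}{-\nabla \gF}$ into (up to a constant) $-\tfrac{\eta}{2V}\norm{\nabla\gf[]} + \tfrac{\eta L^2}{2V}\norm{\xtild-\gxs}$, with the $\tfrac12\norm{\nabla f(\xtild)}$ contributions cancelling as in Lemma~\ref{lem1}.

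Finally I would collect the pieces: the cross term contributes the $-\tfrac{\eta}{2V}$ descent and the $\tfrac{\eta L^2}{2V}\norm{\xtild-\gxs}$ drift, while the second-moment term contributes $\tfrac{\eta^2 L}{2V^2}(\sigma^2 + 2\zeta^2)$ together with $\tfrac{\eta^2 L}{V^2}\norm{\nabla\gf[]}$; imposing $\eta\le \tfrac{V}{4L}$ makes $\tfrac{\eta^2 L}{V^2}\le\tfrac{\eta}{4V}$, so the net coefficient of $\norm{\nabla\gf[]}$ is at most $-\tfrac{\eta}{4V}$, which is the claimed inequality. Apart from the heterogeneity-bookkeeping point above — keeping $\zeta^2$ at order $\eta^2$ by routing the mismatch through $\norm{\xtild-\gxs}$ (which the subsequent deviation lemma will show is $\mathcal{O}(\eta^2)$) rather than bounding it crudely by $\zeta$ — this is the standard smoothness-plus-unbiasedness descent computation, now stripped of the Markov-chain correction that Lemma~\ref{lem1} required.
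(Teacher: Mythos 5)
Your proposal is correct and takes essentially the same route as the paper's proof: $L$-smoothness applied to the virtual sequence, unbiasedness under uniform node sampling (which, as you observe, eliminates the Markov-chain mixing term of Lemma \ref{lem1}), a polarization plus consensus-distance bound for the cross term, the $\sigma^2 + 2\zeta^2$ split for the second moment, and absorption of the residual gradient-norm term using $\eta \le \frac{V}{4L}$. The one step you assert rather than carry out --- passing from the squared norm of the averaged gradient $\EX_{v_t}\nabla \gf$ to the node-averaged quantity $\norm{\nabla \gf[]}$ appearing in the statement --- is handled with exactly the same notational shortcut in the paper's own proof, so your argument matches it in both structure and level of rigor.
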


\begin{proof}
Based on the definition of $\xtild$ and $L$-smoothness of $f(\vec{x})$ we have
\begin{align}
    f(\xtild[t+1]) &= f(\xtild - \frac{\eta}{V} \nabla \gF) \\
    &\leq f(\xtild) + \frac{\eta}{V} \inpr{\nabla f(\xtild)}{-\nabla \gF} + \frac{\eta^2L}{2V^2} \norm{\nabla \gF}. \label{eq39}
\end{align}
Lets take expectation of the second term on the right-hand side of (\ref{eq39})
\begin{align}
    \frac{\eta}{V} \EX &\inpr{\nabla f(\xtild)}{-\nabla \gF} \\
    &= \frac{\eta}{V} \EX_{v_t} \EX_{\xi_{t+\hat{\tau}_t}} \inpr{\nabla f(\xtild)}{-\nabla \gF}\\
    &= \frac{\eta}{V} \EX_{v_t} \inpr{\nabla f(\xtild)}{-\nabla \gf}\\
    &= \frac{\eta}{V} \inpr{\nabla f(\xtild)}{- \nabla \gf[]}\\
    & = -\frac{1}{2} \norm{\nabla f(\xtild)} -\frac{1}{2} \norm{\nabla \gf[]} + \frac{1}{2} \norm{\nabla f(\xtild) -  \nabla \gf[] }\\
    &\leq  -\frac{1}{2} \norm{\nabla \gf[]} + \frac{1}{2} \norm{\nabla f(\xtild) -  \nabla \gf[]}.
\end{align}

Now we derive expectation of the last term on the right-hand side of (\ref{eq39}).
\begin{align}
    \EX\norm{\nabla \gF} &= \EX\norm{\nabla \gF \pm \nabla \gf \pm \nabla \gf[]}\\
    &\leq \sigma^2 + 2\EX \norm{\nabla \gf - \nabla \gf[]} + 2 \norm{\nabla \gf[]}\\
    &\leq \sigma^2 + 2\zeta^2 + 2 \norm{\nabla \gf[]}.
\end{align}

Combining these together and using $L$-smoothness to estimate $\norm{\nabla f(\xtild) -  \nabla \gf[] }$ we obtain
\begin{align}
    \EX f(\xtild[t+1]) \leq f(\xtild) -\left(\frac{\eta}{2V} - \frac{\eta^2L}{V^2} \right)\norm{\nabla \gf[]} + \frac{\eta L^2}{2V} \norm{\xtild - \gxs} + \frac{\eta^2L}{2V^2} \left(\sigma^2 + 2\zeta^2 \right).
\end{align}

Considering $\eta \leq \frac{V}{4L}$ we obtain
\begin{align}
    \EX f(\xtild[t+1]) \leq f(\xtild) -\frac{\eta}{4V} \norm{\nabla \gf[]} + \frac{\eta L^2}{2V} \norm{\xtild - \gxs} + \frac{\eta^2L}{2V^2} \left(\sigma^2 + 2\zeta^2 \right).
\end{align}
\end{proof}


\begin{lemma}[Bounding Deviation for Async-Gossip]\label{lem4}
Under Assumptions \ref{as2}, \ref{as3}, \ref{as4}, and learning rate $\eta \leq \frac{p}{14L}$, it holds that
\begin{align}
    \frac{1}{T} \sum_{t=0}^{T-1} \EX \norm{\xtild - \gxs}  \leq \frac{1}{{4L^2}} \sum_{z=0}^{T-1} \norm{\nabla \gf[][\xs[v_z][z]]} + \left(\frac{16 \sigma^2}{p} + \frac{96\zeta^2}{p^2} \right) \sum_{t=0}^{T-1}  \eta^2.\label{ap_eq_lem2}
\end{align}
\end{lemma}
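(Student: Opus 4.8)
The plan is to follow the same two-step template used for Multi-Walk: decompose the gap between the virtual sequence $\xtild$ and the iterate $\gxs$ actually used at iteration $t$ into a \emph{consensus} part and an \emph{asynchrony (staleness)} part, and bound them separately — the first through the spectral gap $p$ of $\vec{P}^\top\vec{P}$, the second through the fact that at most $V$ stochastic gradients are ever in flight. Concretely, I would introduce the node average $\xbar:=\frac1V\sum_{v\in\mathcal V}\xs[v][t]$ and write $\EX\norm{\xtild-\gxs}\le 2\EX\norm{\xtild-\xbar}+2\EX\norm{\xbar-\xs[v_t][t]}$; since $v_t$ is drawn uniformly, the second (consensus) term is $\tfrac2V\fnorm{\vec X_t(\vec I-\vec J)}$ with $\vec X_t$ the matrix of node models and $\vec J=\tfrac1V\vec 1\vec 1^\top$ (or is at most $2\fnorm{\vec X_t(\vec I-\vec J)}$). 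Since $\vec P$ is doubly stochastic the gossip step does not move the average, so $\xbar[t+1]=\xbar-\tfrac\eta V\nabla F_{v_t}(\xs[v_t][t-\tau_t],\xi_t)$: the average runs the same recursion as $\xtild$, except that it applies each stochastic gradient at the step it \emph{completes} rather than the step it was \emph{requested}.

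For the staleness part I would re-index completed iterations by the map $z\mapsto z+\hat\tau_z$, which by the defining property ($t'-\tau_{t'}=t$ at $t'=t+\hat\tau_t$) rewrites the stale gradient fired at $t$ as $\nabla F(\vec X_z,\xi_{z+\hat\tau_z})$ evaluated at the \emph{current} point $z=t-\tau_t$, exactly as in the virtual recursion. Under this identification $\xtild-\xbar$ is precisely $-\tfrac\eta V$ times the sum of the gradients still in flight at iteration $t$. Every node is always computing exactly one gradient, so this sum has at most $V$ terms; moreover — and this is where the absence of any bound on $\tau_t$ is absorbed — each requested index $z$ remains in flight for exactly $\hat\tau_z$ iterations, so $\sum_t(\#\text{ in flight at }t)=\sum_z\hat\tau_z\le VT$. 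A Cauchy--Schwarz step (using $|\text{in flight}|\le V$) then bounds $\tfrac1T\sum_t\EX\norm{\xtild-\xbar}$ by an $\mathcal O(\eta^2)$ quantity — with no $1/p$ — plus gradient-norm terms that, after the step-size restriction below, are reabsorbed; in particular this part is dominated by the consensus contribution.

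For the consensus part I would pass to matrix form: one iteration is $\vec X_{t+1}=(\vec X_t-\eta\,\vec g_t\vec e_{v_t}^\top)\vec P$ with $\vec g_t$ the stale stochastic gradient, hence $\vec X_{t+1}(\vec I-\vec J)=\vec X_t(\vec I-\vec J)\vec P-\eta\,\vec g_t\vec e_{v_t}^\top(\vec I-\vec J)\vec P$. Splitting $\vec g_t$ into its conditional mean $\nabla f_{v_t}(\cdot)$ and a zero-mean noise, using $\fnorm{\vec A\vec P(\vec I-\vec J)}\le(1-p)\fnorm{\vec A(\vec I-\vec J)}$ (the contraction encoded by the spectral gap of $\vec P^\top\vec P$), and applying Young's inequality with parameter $\Theta(p)$ to the mean part, I get a recursion of the form
\begin{align}
\EX\fnorm{\vec X_{t+1}(\vec I-\vec J)}\;\le\;\Big(1-\tfrac p2\Big)\EX\fnorm{\vec X_t(\vec I-\vec J)}+\tfrac{c\eta^2}{p}\EX\norm{\nabla f_{v_t}(\xs[v_t][t])}+c'\eta^2\sigma^2.\nonumber
\end{align}
Unrolling and summing over $t$ turns the geometric series into one factor $1/p$: the zero-mean noise term needs no Young step and keeps only that factor, giving the $\sigma^2/p$ term, while the mean term carries the extra Young factor and, after $\norm{\nabla f_{v_t}(\vec x)}\le 2\zeta^2+2\norm{\nabla f(\vec x)}$ (Assumption~\ref{as3}), gives both the $\zeta^2/p^2$ term and a $\tfrac1{p^2}\eta^2\sum_z\norm{\nabla f(\xs[v_z][z])}$ term. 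Bounding $\EX\norm{\vec g_t}$ everywhere by Assumption~\ref{as2} plus this diversity split — no smoothness is needed, since the re-indexing already eliminated the staleness in the gradient's argument — and imposing $\eta\le\frac p{14L}$ collapses the coefficient of $\sum_z\norm{\nabla f(\xs[v_z][z])}$ to $\tfrac1{4L^2}$ (as $\eta^2/p^2\le 1/(14L)^2$) and pins the remaining constants to $\tfrac{16\sigma^2}{p}+\tfrac{96\zeta^2}{p^2}$.

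The step I expect to be the real obstacle is the staleness term: justifying $z\mapsto z+\hat\tau_z$ as a bijection between requested and completed iterations, promoting ``at most $V$ gradients in flight'' to the quantitative $\sum_z\hat\tau_z\le VT$ so that nothing about the magnitude of $\tau_t$ is used, and then verifying that the (delay-weighted) gradient norms this term produces are truly reabsorbed rather than genuinely new. Everything else — the matrix contraction, the Young step upgrading the rate from $1-p$ to $1-\tfrac p2$, and chasing the constants $c,c'$ through the geometric sums to $16$, $96$ and the threshold $\eta\le p/(14L)$ — is routine bookkeeping parallel to Lemma~\ref{lem2}.
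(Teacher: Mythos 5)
Your consensus half is essentially the paper's machinery (matrix form, the contraction $\fnorm{\vec{A}\vec{P}(\vec{I}-\vec{J})}\le(1-p)\fnorm{\vec{A}(\vec{I}-\vec{J})}$, Young with parameter $\Theta(p)$, geometric unrolling, and $\eta\le p/(14L)$ to collapse the gradient coefficient), but the paper applies this single recursion directly to $\fnorm{\vec{X}_t-\Tilde{\vec{X}}_t}$, i.e.\ to the deviation of \emph{all} node models from the virtual iterate, using $V\EX\norm{\xtild-\gxs}=\EX\fnorm{\vec{X}_t-\Tilde{\vec{X}}_t}$ (uniformity of $v_t$). It never isolates an ``in-flight gradients'' term: after the re-indexing built into the virtual sequence, the only time-coupling that survives is through the weights $(1-\tfrac p2)^{t-z}$, whose sum over $t$ is $2/p$, so the gradient norms $\norm{\nabla \gf[][\xs[v_z][z]]}$ enter with a delay-independent coefficient that the step-size condition reduces to $\tfrac{1}{4L^2}$, and the noise/diversity constants come out as $\sigma^2/p$ and $\zeta^2/p^2$ purely from the geometric series.

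The gap in your route is exactly the step you flagged: the staleness term. Writing $\xtild-\xbar=-\tfrac{\eta}{V}\sum_{z\ \mathrm{in\ flight\ at\ }t}\nabla F_{v_z}(\xs[v_z][z],\xi_{z+\hat\tau_z})$ and applying Cauchy--Schwarz with the cardinality bound $V$, then averaging over $t$ and swapping sums, gives
\begin{align}
\frac{1}{T}\sum_{t=0}^{T-1}\EX\norm{\xtild-\xbar}\;\le\;\frac{3\eta^2}{VT}\sum_{z}\hat\tau_z\Bigl(\sigma^2+\zeta^2+\EX\norm{\nabla \gf[][\xs[v_z][z]]}\Bigr).
\end{align}
The aggregate identity $\sum_z\hat\tau_z\le VT$ disposes only of the $\sigma^2+\zeta^2$ portion; the gradient-norm portion is a \emph{delay-weighted} sum, and to fold it into the budget $\tfrac{1}{4L^2T}\sum_z\EX\norm{\nabla \gf[][\xs[v_z][z]]}$ you would need a per-index bound of the form $\hat\tau_z\lesssim V/(L^2\eta^2)$, i.e.\ a uniform delay bound. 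The paper's setting explicitly assumes no such bound (a single slow node can keep one gradient in flight for arbitrarily many iterations while the large delay may coincide with the large gradient norm), and the lemma's statement contains no delay-dependent quantity, so this term is genuinely new rather than reabsorbable. Fixing it would require either importing a delay assumption (changing the theorem) or abandoning the consensus-plus-staleness split in favor of the paper's direct recursion on $\vec{X}_t-\Tilde{\vec{X}}_t$, where the staleness never appears as a separate additive term.
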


\begin{proof}
We will be using the following matrix notation.
\begin{align}
    \vec{X}_t &:= \left[ \gxs[1], \dots , \gxs[V] \right] \in \mathbb{R}^{d \times V},\\
    \Tilde{\vec{X}}_t &:= \left[ \xtild[t], \dots , \xtild[t] \right]\in \mathbb{R}^{d \times V},\\
    \partial \GF &:=\left[ \nabla \gF[1][1], \dots , \nabla \gF[V][V]  \right] \in \mathbb{R}^{d \times V},\\
    \partial \Gf &:=\left[ \nabla \gf[1][\xs[1][t]], \dots , \nabla \gf[V][\xs[V][t]]  \right] \in \mathbb{R}^{d \times V}.
\end{align}

Considering that $v_t$ is uniformly random among all nodes, we have
\begin{align}
    V \EX \norm{\xtild - \gxs} &=  \EX \fnorm{\vec{X}_t - \Tilde{\vec{X}}_t}\\
    &=  \EX \fnorm{\vec{X}_{t-1} \vec{W} -\eta \partial \GF \vec{W}  - \Tilde{\vec{X}}_t}\\
    &=  \EX \fnorm{\vec{X}_{t-1} \vec{W} -\eta \partial \GF \vec{W} - \Tilde{\vec{X}}_{t-1} + \frac{\eta}{V} \partial \GF} \\
    &= \EX \fnorm{\vec{X}_{t-1} \vec{W} - \Tilde{\vec{X}}_{t-1} -\eta \partial \GF \left( \vec{W} - \frac{\vec{I}}{V}\right)}\\
    & \leq  \EX \fnorm{\vec{X}_{t-1} \vec{W} - \Tilde{\vec{X}}_{t-1} -\eta \partial \Gf \left( \vec{W} - \frac{\vec{I}}{V}\right)}\\
    && \mathllap{+ \fnorm{\eta \left( \partial \GF -\partial \Gf \right) \left( \vec{W} - \frac{\vec{I}}{V}\right)},} \nonumber
    \end{align}
where we used that $\EX \partial \GF = \partial \Gf$. We can further separate the second term as the following.
    \begin{align}
    &V \EX  \norm{\xtild - \gxs}  \leq  \EX \fnorm{\vec{X}_{t-1} \vec{W} - \Tilde{\vec{X}}_{t-1} -\eta \partial \Gf \left( \vec{W} - \frac{\vec{I}}{V}\right)} \\
    && \mathllap{+ 2 \eta^2 \fnorm{\left( \partial \GF -\partial \Gf \right) \vec{W} }+2 \frac{\eta^2}{V^2} \fnorm{\left( \partial \GF -\partial \Gf \right)}} \nonumber\\
    &\leq \EX \fnorm{\vec{X}_{t-1} \vec{W} - \Tilde{\vec{X}}_{t-1} -\eta \partial \Gf \left( \vec{W} - \frac{\vec{I}}{V}\right)}  \\
    && \mathllap{+ 2 \eta^2 \fnorm{\left( \partial \GF -\partial \Gf \right) }+2 \frac{\eta^2}{V^2} \fnorm{\left( \partial \GF -\partial \Gf \right)}} \nonumber\\
    &\leq \left( 1+\lambda \right) \EX \fnorm{\vec{X}_{t-1} \vec{W} - \Tilde{\vec{X}}_{t-1}} + \left( 1+\lambda^{-1} \right) \EX \fnorm{\eta \partial \Gf \left( \vec{W} - \frac{\vec{I}}{V}\right)}+ 2 \eta^2 V\sigma^2  \label{asy1}+2 \frac{\eta^2}{V^2} V\sigma^2\\
    &\leq \left( 1+\lambda \right)\EX \fnorm{\vec{X}_{t-1} \vec{W} - \Tilde{\vec{X}}_{t-1}} + 2\eta^2 \left( 1+\lambda^{-1} \right) \EX \fnorm{ \partial \Gf  \vec{W}}  \\
    && \mathllap{+ \frac{2\eta^2 \left( 1+\lambda^{-1} \right)}{V^2} \EX \fnorm{\partial \Gf }+ 4 \eta^2 V\sigma^2} \nonumber \\
    &\leq \left( 1+\lambda \right)\EX \fnorm{\vec{X}_{t-1} \vec{W} - \Tilde{\vec{X}}_{t-1}} + 4\eta^2 \left( 1+\lambda^{-1} \right) \EX \fnorm{ \partial \Gf } + 4 \eta^2 V\sigma^2 \\
    &\leq \left( 1+\lambda \right) (1-p)\EX \fnorm{\vec{X}_{t-1} - \Tilde{\vec{X}}_{t-1}} + 4\eta^2 \left( 1+\lambda^{-1} \right) \underbrace{\EX \fnorm{ \partial \Gf }}_{:=T_1} + 4 \eta^2 V\sigma^2.
\end{align}
(\ref{asy1}) is based on the fact that for any $\lambda > 0$,
\begin{align}
\norm{\vec{a}+\vec{b}} \leq (1+\lambda) \norm{\vec{a}} + (1+\lambda^{-1}) \norm{\vec{b}}.\label{lambda_ineq}
\end{align} 
We bound $T_1$ separately.
\begin{align}
    T_1 &= \EX \fnorm{ \partial \Gf }\\
    &= \EX \sum_{v=1}^V \norm{\nabla \gf[v][\xs[v]]}\\
    &\leq \EX \sum_{v=1}^V 2\norm{\nabla \gf[v][\xs[v]] - \nabla \gf[][\xs[v]]} + \EX \sum_{v=1}^V 2 \norm{\nabla \gf[][\xs[v]]}\\
    &\leq \EX \sum_{v=1}^V 2 \zeta^2 + \EX \sum_{v=1}^V 2 \norm{\nabla \gf[][\xs[v]]}\\
    &= 2V \zeta^2 + 2V\EX \EX_{v_t} \norm{\nabla \gf[]}\\
    &= 2V \zeta^2 + 2V\EX \norm{\nabla \gf[]}.
\end{align}

So, we get 
\begin{align}
    &\EX \norm{\xtild - \gxs}  \leq \left( 1+\lambda \right) (1-p) \EX \norm{\xtild[t-1] - \gxs[v_{t-1}][t-1]} + 8\eta^2 \left( 1+\lambda^{-1} \right) \left( \zeta^2 + \norm{\nabla \gf[]} \right) + 4 \eta^2 \sigma^2\\
    &\leq \left( 1- \frac{p}{2} \right)\EX \norm{\xtild[t-1] - \gxs[v_{t-1}][t-1]} + \frac{24}{p}\eta^2  \zeta^2 + \frac{24}{p}\eta^2 \norm{\nabla \gf[]} + 4\eta^2 \sigma^2 \label{async1}\\
    &\leq \left( 1- \frac{p}{2} \right)^{t-1}\EX \norm{\xtild[0] - \gxs[v_0][0]} + \frac{24\zeta^2}{p}\sum_{z=0}^{t-1} \eta^2 \left( 1- \frac{p}{2} \right)^{t-z}  \\
    && \mathllap{+ \frac{24}{p}\sum_{z=0}^{t-1} \eta^2 \left( 1- \frac{p}{2} \right)^{t-z}  \norm{\nabla \gf[][\xs[v_z][z]]} + 4 \sigma^2 \sum_{z=0}^{t-1} \eta^2 \left( 1- \frac{p}{2} \right)^{t-z}}\nonumber\\
    &\leq \frac{24\zeta^2}{p} \eta^2 \sum_{z=0}^{t-1} \left( 1- \frac{p}{2} \right)^{t-z} + \frac{24}{p}\eta^2 \sum_{z=0}^{t-1} \left( 1- \frac{p}{2} \right)^{t-z}  \norm{\nabla \gf[][\xs[v_z][z]]} + 4 \sigma^2 \eta^2 \sum_{z=0}^{t-1} \left( 1- \frac{p}{2} \right)^{t-z} \label{async2}\\
    &\leq  \frac{24}{p}\eta^2 \sum_{z=0}^{t-1} \left( 1- \frac{p}{2} \right)^{t-z}  \norm{\nabla \gf[][\xs[v_z][z]]} + \left(\frac{8 \sigma^2}{p} + \frac{48\zeta^2}{p^2} \right) \eta^2,
\end{align}
where we used $\lambda = \frac{p}{2}$ in (\ref{async1}). 

Now by averaging over $T$ and considering $\eta \leq \frac{p}{14L}$, we get
\begin{align}
    \frac{1}{T} \sum_{t=0}^{T-1} &\EX \norm{\xtild - \gxs}  \leq \frac{24}{pT} \sum_{t=0}^{T-1}  \eta^2 \sum_{z=0}^{t-1} \left( 1- \frac{p}{2} \right)^{t-z}  \norm{\nabla \gf[][\xs[v_z][z]]} + \left(\frac{8 \sigma^2}{p} + \frac{48\zeta^2}{p^2} \right)\frac{1}{T} \sum_{t=0}^{T-1}  \eta^2\\
    & \leq \frac{24p}{196L^2T} \sum_{z=0}^{T-1} \norm{\nabla \gf[][\xs[v_z][z]]} \sum_{t=j+1}^{T-1}  \left( 1- \frac{p}{2} \right)^{t-z}  + \left(\frac{8 \sigma^2}{p} + \frac{48\zeta^2}{p^2} \right) \frac{1}{T}\sum_{t=0}^{T-1}  \eta^2\\
    & \leq \frac{24p}{196L^2T} \sum_{z=0}^{T-1} \norm{\nabla \gf[][\xs[v_z][z]]} \sum_{t=0}^{\infty}  \left( 1- \frac{p}{2} \right)^{t-z}  + \left(\frac{8 \sigma^2}{p} + \frac{48\zeta^2}{p^2} \right) \frac{1}{T}\sum_{t=0}^{T-1}  \eta^2\\
    & \leq \frac{48}{{196L^2T}} \sum_{z=0}^{T-1} \norm{\nabla \gf[][\xs[v_z][z]]} + \left(\frac{8 \sigma^2}{p} + \frac{48\zeta^2}{p^2} \right)\frac{1}{T} \sum_{t=0}^{T-1}  \eta^2\\
    & \leq \frac{1}{{4L^2T}} \sum_{z=0}^{T-1} \norm{\nabla \gf[][\xs[v_z][z]]} + \left(\frac{8 \sigma^2}{p} + \frac{48\zeta^2}{p^2} \right)\frac{1}{T} \sum_{t=0}^{T-1}  \eta^2.
\end{align}
\end{proof}

Now we complete the proof of Theorem \ref{T2}. By multiplication of $\frac{4V}{\eta}$ in both sides and averaging over $t$ in lemma \ref{lem3}, we get
\begin{align}
     \frac{1}{T}\sum_{t=0}^{T-1}\EX \norm{\nabla \gf[]}  \leq \frac{1}{T}\sum_{t=0}^{T-1}\frac{4V}{\eta}\left(f(\xtild) -\EX f(\xtild[t+1]) \right)  + \frac{4L\eta }{V} \left(\sigma^2 + 2\zeta^2 \right)+ \frac{1}{T}\sum_{t=0}^{T-1}2L^2 \EX \norm{\xtild - \gxs}.
\end{align}
By replacing result of lemma \ref{lem4} and rearranging, we have
\begin{align}
    \frac{1}{2T}\sum_{t=0}^{T-1}\EX \norm{\nabla \gf[]}\leq \frac{1}{T}\sum_{t=0}^{T-1}\frac{4V}{\eta}\left(f(\xtild) -\EX f(\xtild[t+1]) \right)  + \frac{4L\eta }{V} \left(\sigma^2 + 2\zeta^2 \right)  \label{async-1}+ 2L^2\eta^2\left(\frac{8 \sigma^2}{p} + \frac{48\zeta^2}{p^2} \right).
\end{align}
Bounding the right-hand side of inequality (\ref{async-1}) with Lemma \ref{lem5} and considering that $\eta=\eta \leq \frac{p}{14L}$, provides $\frac{1}{T}\sum_{t=0}^{T-1}\EX\norm{\nabla \gf[]}$ is
\begin{align}
    \mathcal{O} \bigg(\frac{(f(\xz)-f^*)VL}{pT} + \frac{\sqrt{ L(f(\xz)-f^*)(\sigma^2 +\zeta^2)}}{\sqrt{T}} + (\frac{VL(f(\xz)-f^*)\sqrt{\frac{\sigma^2}{p} + \frac{\zeta^2}{p^2}}}{T})^{\frac{2}{3}} \bigg).
\end{align}

{
\section{Proof of Theorem \ref{T3}} \label{appendixd}

\begin{lemma}[Bounding Deviation for Multi-Walk with Failure]\label{lem6}
Under Assumptions \ref{as2}, \ref{as3}, \ref{as4}, and learning rate $\eta \leq \frac{1}{15LE\max\limits_{i} H_i}$, it holds that
\begin{align}
          \frac{1}{T} \sum_{t=1}^T \EX \norm{\xtild - \xs} \leq 36V\sigma^2 \eta^2   + 36H^2\zeta^2 \eta^2  + \frac{1}{4L^2T} \sum_{i=0}^{E} \sum_{t=e_i}^{e_{i+1}-1} \EX \norm{\nabla \ff[i][]},\label{ap_eq_lem5}
\end{align}
where $H^2_i$ is the second moment of the first return time to \master chosen after the $i$-th failure out of $E$ failures. We assume $e_i$ as the iteration of $i$-th failure, also $e_0 = 0, e_{E+1} = T$.
\end{lemma}

\begin{proof}
Recall $\lrt[r]$ as the last iteration before $t$ when walk $r$ has visited \master, \ie $\lrt[r] = \max \{t'\mid  t'\leq t, r_t = r, v_t = 0\}$. 
We also define $\drt[r] = \min \{t'\mid  t'\geq t, r_t = r, v_t = 0\}$ and $r_{e_i}$ as the first walk that reaches to the new Node 0 after $e_i$.

\begin{align}
    &\EX \norm{\xtild - \xs} = \EX ||\sum_{z=\lrt[r_t][\lrt], r_z \neq r_t }^{t-1} -\frac{\eta}{R} \nabla \Fz +  \sum_{z=\lrt, r_z = r_t }^{t-1} \left(1-\frac{1}{R}\right) \eta \nabla \Fz\\
    &+ \sum_{i=1}^E \sum_{r=1}^R \sum_{z=\lrt[r_t][e_i], r_z = r }^{\drt[r][e_i]-1} -\frac{\eta}{R} \nabla \Fz + \sum_{i=1}^E \sum_{z=\lrt[r_{e_i}][e_i], r_z = r_{e_i} }^{\drt[r_{e_i}][e_i]-1} \eta \nabla \Fz ||^2\\
    &\leq \frac{4}{R^2} \EX\norm{\sum_{z=\lrt[r_t][\lrt], r_z \neq r_t }^{t-1} \eta \nabla \Fz} + 4 \EX\norm{\sum_{z=\lrt, r_z = r_t }^{t-1} \eta \nabla \Fz}\\
    &+ \frac{4}{R^2} \EX\norm{\sum_{i=1}^E \sum_{r=1}^R \sum_{z=\lrt[r_t][e_i], r_z = r }^{\drt[r][e_i]-1} {\eta} \nabla \Fz }+ 4 \EX\norm{\sum_{i=1}^E \sum_{z=\lrt[r_{e_i}][e_i], r_z = r_{e_i} }^{\drt[r_{e_i}][e_i]-1} \eta \nabla \Fz }\\
    &\leq \underbrace{\frac{4}{R^2} \EX\norm{\sum_{z \in U^1_t}\eta \nabla \Fz}}_{:=T_1} + \underbrace{4 \EX\norm{\sum_{z \in U^2_t} \eta \nabla \Fz}}_{:=T_2}\\
    &+\frac{4}{R^2}\underbrace{ \EX\norm{\sum_{i=1}^E \sum_{r=1}^R \sum_{z=\lrt[r_t][e_i], r_z = r }^{\drt[r][e_i]-1} {\eta} \nabla \Fz }}_{:=T_3} +4  \underbrace{\EX\norm{\sum_{i=1}^E \sum_{z=\lrt[r_{e_i}][e_i], r_z = r_{e_i} }^{\drt[r_{e_i}][e_i]-1} \eta \nabla \Fz }}_{:=T_4},
\end{align}
where $U^1_t =  \{ \lrt[r][\lrt] \leq z \leq t-1 \mid r_z \neq r_t \}$, and $U^2_t =  \{ \lrt \leq z \leq t-1 \mid r_z = r_t \}$.
In the same way as we bounded $T_1$ and $T_2$, in Lemma \ref{lem2}, we can bound $T_3$ and $T_4$.

$T_3$ and $T_4$ is bounded by first and second moment of a random variable that  the sum of two quantities: the first return time to old \master ($h_{i-1}$) and the hitting time to the new \master. This hitting time is, in turn, upper-bounded by the first return time to new \master ($h_{i}$). Expectation of the this random variable is bounded with $2V$ and the second moment of this random variable is bounded with twice the sum of the second moments of two random variables. So the second moment is $2(\max\limits_{i} H^2_i)$.

Following the same approach as in Lemma \ref{lem2}
and assuimg $\eta \leq \frac{1}{15LE\max\limits_{i} H_i}$ to get
\begin{align}
     \frac{1}{T} \sum_{t=1}^T \EX \norm{\xtild - \xs} \leq 36 EV\sigma^2 \eta^2   + 36 E^2\zeta^2 \eta^2 \max\limits_{i} H^2_i  + \frac{1}{4L^2T} \sum_{i=0}^{E} \sum_{t=e_i}^{e_{i+1}-1} \EX \norm{\nabla \ff[i][]}.
\end{align}

\end{proof}

Now we complete the proof of Theorem \ref{T1}. By multiplication of $\frac{4R}{\eta}$ in both sides and averaging over $t$ in lemma \ref{lem1}, we get
\begin{align}
    \frac{1}{T} \sum_{i=0}^{E} \sum_{t=e_i}^{e_{i+1}-1} \EX \norm{\nabla \ff[i][]}  &\leq \frac{1}{T}  \sum_{i=0}^{E}\sum_{t=e_i}^{e_{i+1}-1} \frac{4R}{\eta} \left (\ff[i][][\xtild] - \EX  \ff[i][][\xtild[t+1]] \right)  + \frac{1}{T} \sum_{i=0}^{E} \sum_{t=e_i}^{e_{i+1}-1} 8c\zeta^2 (1-p'_i)^{2|\mathcal{T}^{i}_{r_t}|} \\
    && \mathllap{+ \frac{6\eta L^2}{R} \left(\sigma^2 + \zeta^2 \right)+ \frac{1}{T}\sum_{t=0}^{T-1}2L^2 \EX \norm{\xtild - \xs}} \nonumber.
\end{align}
By replacing result of lemma \ref{lem6} and using $\sum_{i=0}^{E}\sum_{t=e_i}^{e_{i+1}-1} (1-p')^{2|\mathcal{T}^i_{r_t}|} \leq \sum_{i=0}^{E}\sum_{t=0}^{T-1} (1-p'_i)^{|\mathcal{T}^i_{r_t}|}  \leq \sum_{i=0}^{E}R\sum_{t=0}^{T-1} (1-p'_i)^{t} \leq \sum_{i=0}^{E}\frac{R}{p'_i}$, then rearranging, we have
\begin{align}
    \frac{1}{2T} \sum_{i=0}^{E} \sum_{t=e_i}^{e_{i+1}-1} \EX \norm{\nabla \ff[i][]} & \leq \frac{1}{T}  \sum_{i=0}^{E}\sum_{t=e_i}^{e_{i+1}-1} \frac{4R}{\eta} \left (\ff[i][][\xtild] - \EX  \ff[i][][\xtild[t+1]] \right) + 8c\zeta^2 \sum_{i=0}^{E}\frac{R}{p'_i}+ \frac{6\eta L^2}{R} \left(\sigma^2 + \zeta^2 \right) \label{rw-1} \\
    && \mathllap{+ 72EL^2\left(V\sigma^2 + E \max\limits_{i} H^2_i \zeta^2 \right) \eta^2} \nonumber 
\end{align}

We assume the failure of \master does not change the objective function, \ie $f^0(\vec{x}) = \dots = f^E(\vec{x}) = f(\vec{x})$. Then
\begin{align}
    \frac{1}{2T} \sum_{t=0}^{T-1} \EX \norm{\nabla \f[]} & \leq \frac{1}{T}\sum_{t=0}^{T-1} \frac{4R}{\eta} \left (f(\xtild) - \EX  f\left(\xtild[t+1]\right) \right) + 8c\zeta^2 \sum_{i=0}^{E}\frac{R}{p'_i}+ \frac{6\eta L^2}{R} \left(\sigma^2 + \zeta^2 \right) \label{rw-2} \\
    && \mathllap{+ 72EL^2\left(V\sigma^2 + E \max\limits_{i} H^2_i \zeta^2 \right) \eta^2} \nonumber 
\end{align}

Bounding the right hand side of inequality (\ref{rw-2}) with Lemma \ref{lem5} and considering that $\eta=\eta \leq \frac{1}{15LE\max\limits_{i} H_i}$, provides $\frac{1}{T} \sum_{t=0}^{T-1} \EX \norm{\nabla \f[]}$ is
\begin{align}
     \mathcal{O} \bigg(\frac{(f(\xz)-f^*)RLE\max\limits_{i} H_i}{T} +\sum_{i=0}^{E}\frac{R\zeta^2}{p'_i}+\frac{\sqrt{ L(f(\xz)-f^*)(\sigma^2 +\zeta^2)}}{\sqrt{T}}\\
     && \mathllap{+ 
     (\frac{RL(f(\xz)-f^*)\sqrt{EV\sigma^2 + E^2 \zeta^2 \max\limits_{i} H^2_i}}{T})^{\frac{2}{3}} \bigg).} \nonumber
\end{align}

}

\section{Derivation of $H^2$}\label{appendixe}

\subsection{Complete graph under Metropolis--Hastings $\vec{P}$}

We have a complete graph on $V$ vertices, labeled $0,1,\dots,V-1.$ 
Each vertex $i$ has degree $\deg(i)=V-1.$ 
The Metropolis--Hastings (MH) probability between two adjacent vertices $(i,j)$ is 
\[
  p_{ij} \;=\; \min\!\Bigl\{\,\frac{1}{\deg(i)+1}, \,\frac{1}{\deg(j)+1}\Bigr\}.
\]
Since $\deg(i)+1 = V$ for every vertex $i$ in a complete graph, it follows that
\[
  p_{ij} \;=\; \min\!\bigl\{\tfrac{1}{V},\,\tfrac{1}{V}\bigr\} 
            \;=\;\tfrac{1}{V}.
\]
Moreover, the leftover probability is also $\tfrac{1}{V}$ for staying in place (lazy step). Hence, from any state $i$, the chain picks each of the $V$ vertices with probability $1/V$, including $i$ itself. 

Because each state is chosen uniformly at each step, independently of the past, the process $\{X_k\}_{k\ge0}$ is an iid sequence of $\operatorname{Uniform}\{0,\dots,V-1\}$. 

Define the first return time to state $0$ by
\[
h \;=\; \min\{\, k \ge 1 : X_k = 0 \,\mid\, X_0 = 0\}.
\]
Since each $X_k$ for $k\ge1$ is uniformly distributed over $\{0,\dots,V-1\}$, 
the probability that $X_k = 0$ is $1/V$, independent of previous steps. 
Thus, $h$ is a $\mathrm{Geometric}(p = 1/V)$ random variable in the usual ``first success'' sense (with success probability $1/V$ each trial). 

For a geometric random variable $Y\sim\mathrm{Geom}(p)$ (where $p=1/V$), the second moment is a standard formula:
\[
  \mathbb{E}[Y^2] 
    \;=\; \frac{2 - p}{p^2}.
\]
Plugging in $p = 1/V$ yields
\[
  H^2 \;=\; \mathbb{E}[h^2]
   \;=\; \frac{2 - \tfrac{1}{V}}{\bigl(\tfrac{1}{V}\bigr)^2}
   \;=\; V^2\Bigl(2 - \frac{1}{V}\Bigr)
   \;=\; 2V^2 - V.
\]

Hence, under Metropolis-Hastings on the complete graph of $V$ vertices, the first return time to state $0$ has second moment $2\,V^2 - V$.

\subsection{Cycle graph under Metropolis-Hastings $\vec{P}$}

Consider a cycle graph with $V$ vertices labeled $0,1,\dots,V-1$ (indices mod $V$).  
Each vertex $i$ has degree $2$, so the Metropolis--Hastings (MH) transition rule gives
\[
  p_{i,i} \;=\; \frac{1}{3}, 
  \quad
  p_{i,i+1} \;=\; \frac{1}{3}, 
  \quad
  p_{i,i-1} \;=\; \frac{1}{3},
\]
where addition/subtraction of indices is modulo $V$.
Hence from each state $i$, the chain either stays put with probability $1/3$, or moves one step left or right (each with probability $1/3$).  

Define
\[
  h \;=\; \min\{\, k \ge 1 : X_k = 0 \,\mid\, X_0 = 0\}.
\]
Our goal is to derive $\mathbb{E}[h^2]$.  
To handle this systematically, for any initial state $i$, define the \emph{first hitting time} of $0$:
\[
  T_0 \;=\; \min\{\, k \ge 1 : X_k = 0\}.
\]
And then set
\[
  m_i \;=\; \mathbb{E}[T_0 \mid X_0 = i],
  \quad
  M_i \;=\; \mathbb{E}[\,T_0^2 \mid X_0 = i].
\]
In particular, $\mathbb{E}[h^2] = M_0$, since for $i=0$, we interpret $T_0$ as the \emph{first return time} to 0.

\paragraph{Recurrences for the First Moments \boldmath$(m_i)$.}
Based on the symmetry of the topology, we consider only half of the vertices, i.e., $2 \leq i \leq \lceil \frac{V}{2} \rceil$.

\paragraph*{(1) $m_0$.}
Starting at $0$, in one step:
\begin{itemize}
\item With probability $1/3$, we \emph{stay} at 0, so the hitting time $T_0=1$ immediately.
\item With probability $1/3$ each, we move to $1$ or $V-1$.  From such a neighbor, the expected time to hit $0$ is $1 + m_1$ (by symmetry, $m_1$ is the same whether we step to $1$ or $V-1$).
\end{itemize}
Thus
\begin{align}  
  m_0 
  \;=\; \frac{1}{3}\cdot 1
        \;+\; \frac{1}{3}\bigl(1 + m_1\bigr)
        \;+\; \frac{1}{3}\bigl(1 + m_1\bigr)
  \;=\; 1 + \frac{2}{3}\,m_1.
\end{align}

\paragraph*{(2) $m_1$ (separate expression).}
From state $1$:
\begin{itemize}
\item With probability $1/3$, we jump \emph{directly} to $0$. Then $T_0 = 1$ (not $1 + m_0$, because hitting 0 completes the journey right away).
\item With probability $1/3$, we \emph{stay} at $1$. Then $T_0 = 1 + m_1$.
\item With probability $1/3$, we move to $2$. Then $T_0 = 1 + m_2$.
\end{itemize}
Hence
\[
  m_1
  \;=\; \frac{1}{3}\cdot 1 
        \;+\;\frac{1}{3}\bigl(1 + m_1\bigr)
        \;+\;\frac{1}{3}\bigl(1 + m_2\bigr).
\]
Simplify:
\begin{align} 
  m_1
  \;=\; 1 + \frac{1}{3}\,m_1 + \frac{1}{3}\,m_2
  \;\Longrightarrow\;
  \frac{2}{3}\,m_1 = 1 + \frac{1}{3}\,m_2
  \;\Longrightarrow\;
  m_1 = \frac{3}{2} + \frac{1}{2}\,m_2.
\end{align}

\paragraph*{(3) General $m_i$ for $2 \le i \le \lceil \frac{V}{2} \rceil$.}
From state $i$, we have three possibilities (stay at $i$, move to $i+1$, or move to $i-1$).  Each event occurs with probability $1/3$, and in each case we add $1$ step plus the hitting time from the new state.  Thus
\[
  m_i 
  \;=\; \frac{1}{3}\bigl(1 + m_i\bigr)
        \;+\;\frac{1}{3}\bigl(1 + m_{i+1}\bigr)
        \;+\;\frac{1}{3}\bigl(1 + m_{i-1}\bigr),
\]
where indices are taken mod $V$.  
Rearranging gives
\begin{align}
m_i
  \;=\;
  \frac{3 + m_{i+1} + m_{i-1}}{2}. \label{h1}
\end{align}

\paragraph{Recurrences for the Second Moments \boldmath$(M_i)$.}

Define $M_i = \mathbb{E}[T_0^2 \mid X_0=i]$.  We again do a first-step analysis.

\paragraph*{(1) $M_0$.}
From state $0$:
\begin{itemize}
\item With prob $1/3$, stay at $0$ immediately: $T_0=1$, contributing $1^2$.
\item With prob $2/3$, move to a neighbor (1 or $V-1$), then $T_0 = 1 + T_0'$.  Squaring, $(1 + T_0')^2 = 1 + 2T_0' + (T_0')^2$, so
  $\mathbb{E}[(1+T_0')^2] = 1 + 2\,m_1 + M_1$.
\end{itemize}
Hence
\begin{align}    
  M_0 
  \;=\; \frac{1}{3}\cdot 1^2
        \;+\;\frac{2}{3}\,\bigl[1 + 2\,m_1 + M_1\bigr]
  \;=\; 1 + \frac{4}{3}\,m_1 + \frac{2}{3}\,M_1. \label{h5}
\end{align}

\paragraph*{(2) $M_1$.}
From state $1$:
\begin{itemize}
\item With prob $1/3$, jump directly to $0$: $T_0=1$, so contribution $1^2$.
\item With prob $1/3$, stay at $1$: then $T_0 = 1 + T_0'$, so $\mathbb{E}[(1+T_0')^2] = 1 + 2\,m_1 + M_1$.
\item With prob $1/3$, move to $2$: then $T_0 = 1 + T_0''$, so $\mathbb{E}[(1+T_0'')^2] = 1 + 2\,m_2 + M_2$.
\end{itemize}
Thus
\[
  M_1 
  \;=\; \frac{1}{3}\cdot 1
        \;+\;\frac{1}{3}\bigl[\,1 + 2\,m_1 + M_1\bigr]
        \;+\;\frac{1}{3}\bigl[\,1 + 2\,m_2 + M_2\bigr].
\]
Simplifying leads to a linear relation among $M_1$, $m_1$, $m_2$, and $M_2$:
\begin{align}
  M_1 
  \;&=\; 1 + \frac{2}{3}\,m_1 + \frac{2}{3}\,m_2 
        + \frac{1}{3}\,M_1 + \frac{1}{3}\,M_2\\
  &= \frac{3}{2} + m_1 + m_2 + \frac{1}{2}\,M_2\\
  &= 3m_1 - \frac{3}{2}  + \frac{1}{2}\,M_2. \label{h4}
\end{align}

\paragraph*{(3) General $M_i$ for $2 \le i \le \lceil \frac{V}{2} \rceil$.}
By the same logic:
\[
  M_i 
  \;=\; \frac{1}{3}\bigl[1 + 2\,m_i + M_i\bigr]
        + \frac{1}{3}\bigl[1 + 2\,m_{i+1} + M_{i+1}\bigr]
        + \frac{1}{3}\bigl[1 + 2\,m_{i-1} + M_{i-1}\bigr],
\]
with indices mod $V$.  
Rearrange to get
\begin{align}
  M_i
  \;&=\; \frac{3}{2}
        \;+\;\bigl(m_i + m_{i+1} + m_{i-1}\bigr)
        \;+\;\frac{1}{2}\,\bigl(M_{i+1} + M_{i-1}\bigr)\\
  \;&=\; \frac{3}{2}
        \;+\;3\bigl(m_i -1\bigr)
        \;+\;\frac{1}{2}\,\bigl(M_{i+1} + M_{i-1}\bigr)\\
    \;&=\; \;3m_i -\frac{3}{2}
        \;+\;\frac{1}{2}\,\bigl(M_{i+1} + M_{i-1}\bigr),
\end{align}
where we have used (\ref{h1}).

\paragraph{Solving the System.}
Altogether, we have:
\[
\begin{cases}
  \text{(First moments)} \\
  m_0 = 1 + \tfrac{2}{3}\,m_1, \\[6pt]
  m_1 = \frac{3}{2} + \frac{1}{2}\,m_2, \\[6pt]
  m_i = \dfrac{3 + m_{i+1} + m_{i-1}}{2}, \quad \text{for } 2 \le i \le \lceil \frac{V}{2} \rceil,
\end{cases}
\]
\[
\begin{cases}
  \text{(Second moments)} \\
  M_0 = 1 + \tfrac{4}{3}\,m_1 + \tfrac{2}{3}\,M_1, \\[6pt]
  M_1 = 3m_1 - \frac{3}{2}  + \frac{1}{2}\,M_2\\[6pt]
  M_i = 3m_i -\frac{3}{2}
        \;+\;\frac{1}{2}\,\bigl(M_{i+1} + M_{i-1}\bigr),
  \quad \text{for } 2 \le i \le \lceil \frac{V}{2} \rceil.
\end{cases}
\]
One can solve this $2\lceil \frac{V}{2} \rceil$-dimensional linear system to find $M_0 = \mathbb{E}[h^2]$. 

Here, we assume that $V$ is even (a similar approach can be used to derive the result for $V$ being odd).

First, we solve for $m_i$, $0 \leq i \leq \frac{V}{2} $, starting from $i = \frac{V}{2}$ and using $m_{\frac{V}{2}-1} = m_{\frac{V}{2}+1}$, we get
\begin{align}
    m_{\frac{V}{2}} = \frac{3}{2} + m_{\frac{V}{2}-1}.
\end{align}
Putting it in the equation for $i = \frac{V}{2} - 1$, we obtain
\begin{align}
    m_{\frac{V}{2}-1} &= \frac{3+m_{\frac{V}{2}}+m_{\frac{V}{2}-2}}{2}\\
    &=\frac{3+ \frac{3}{2} + m_{\frac{V}{2}-1} +m_{\frac{V}{2}-2}}{2}.
\end{align}
By rearranging the terms, we derive
\begin{align}
    m_{\frac{V}{2}-1} = 3 + \frac{3}{2} + m_{\frac{V}{2}-2}.
\end{align}

By doing this, we observe the general relationship of
\begin{align}
    m_{\frac{V}{2}-i} = 3i + \frac{3}{2} + m_{\frac{V}{2}-i-1}, \label{gen_m}
\end{align}
where $0\le i \le \frac{V}{2} - 2$. Putting $i=\frac{V}{2} - 2$, gives us
\begin{align}
    m_{2} = \frac{3V}{2} - \frac{9}{2} + m_{1}.
\end{align}
So, we will reach to the following equations
\[
\begin{cases}
  m_0 = 1 + \tfrac{2}{3}\,m_1, \\[6pt]
  m_1 = \frac{3}{2} + \frac{1}{2}\,m_2, \\[6pt]
   m_{2} = \frac{3V}{2} - \frac{9}{2} + m_{1},
\end{cases}
\]
which provides us with $m_0 = V, m_1 = \frac{3V}{2} + \frac{3}{2}$.
Using (\ref{gen_m}) iteratively we get
\begin{align}
    m_{\frac{V}{2}-i} &= 3i + \frac{3}{2} + m_{\frac{V}{2}-i-1}\\
    &= 3i + \frac{3}{2} + 3(i-1) + \frac{3}{2} + m_{\frac{V}{2}-i-2}\\
    &= 3\left(i + (i-1) + \dots + (\frac{V}{2}-2) \right) + \frac{3}{2}(\frac{V}{2}-i) + m_{1}\\
    &= 3 \frac{(\frac{V}{2}-2-i)(\frac{V}{2}-2+i)}{2} + \frac{3}{2}(\frac{V}{2}-i) + \frac{3V}{2} + \frac{3}{2}\\
    & = \mathcal{O}(V^2).
\end{align}

Now, we repeat the same approach for the second moment variables. starting from $i =\frac{V}{2}$ and using $M_{\frac{V}{2}-1} = M_{\frac{V}{2}+1}$ based on symmetry, we get
\begin{align}
        M_{\frac{V}{2}} = 3m_{\frac{V}{2}} - \frac{3}{2} + M_{\frac{V}{2}-1}.
\end{align}
Putting it in the equation for $i = \frac{V}{2} - 1$, we obtain
\begin{align}
    M_{\frac{V}{2}-1} &= 3m_{\frac{V}{2}-1} -\frac{3}{2} + \frac{1}{2} (M_{\frac{V}{2}} + M_{\frac{V}{2}-2})\\
    &=3m_{\frac{V}{2}-1} -\frac{3}{2} + \frac{1}{2} (3m_{\frac{V}{2}} - \frac{3}{2} + M_{\frac{V}{2}-1} + M_{\frac{V}{2}-2}).
\end{align}
By rearranging the terms, we derive
\begin{align}
    M_{\frac{V}{2}-1} =6m_{\frac{V}{2}-1} +3m_{\frac{V}{2}} -3 - \frac{3}{2}  + M_{\frac{V}{2}-2}.
\end{align}
By keep doing this, we observe the general relationship of
\begin{align}
    M_{\frac{V}{2}-i} =6\left(m_{\frac{V}{2}-i} + \dots + m_{\frac{V}{2}-1}\right) +3m_{\frac{V}{2}} -3i - \frac{3}{2}  + M_{\frac{V}{2}-i-1}, \label{gen_M}
\end{align}
where $0\le i \le \frac{V}{2} - 2$. Putting $i=\frac{V}{2} - 2$, gives us
\begin{align}
    M_{2} =6\left( \sum_{i=2}^{\frac{V}{2}-1} m_{i} \right) +3m_{\frac{V}{2}} -3(\frac{V}{2}-2) - \frac{3}{2}  + M_{1}. \label{h3}
\end{align}
Applying (\ref{h3}) in (\ref{h4}) provides
\begin{align}
    M_1 = 6\left( \sum_{i=1}^{\frac{V}{2}-1} m_{i} \right) +3m_{\frac{V}{2}} -3(\frac{V}{2}-1) - \frac{3}{2}.
\end{align}
If we use this in (\ref{h5}) we obtain
\begin{align}
    H^2 \;=\; \mathbb{E}[h^2] = M_0 &= 1 + \frac{4}{3}\,m_1 + \frac{2}{3}\,M_1 = \mathcal{O}(V^3),
\end{align}
this is due to the fact that we derived $m_i = \mathcal{O}(V^2)$ earlier.

\end{document}